\documentclass[10pt,journal,compsoc]{IEEEtran}

\ifCLASSOPTIONcompsoc
  \usepackage[nocompress]{cite}
\else
  \usepackage{cite}
\fi

\ifCLASSINFOpdf
\else
\fi
\usepackage{hyperref}
\hypersetup{hidelinks}
\usepackage{url}            
\usepackage{booktabs}       
\usepackage{amsfonts}       
\usepackage{nicefrac}       
\usepackage{microtype}      
\usepackage{xcolor}         

\usepackage{amssymb}
\usepackage{amsmath,mathrsfs,dsfont}
\usepackage{nicefrac}
\usepackage[algo2e]{algorithm2e}
\usepackage{algorithmic}
\usepackage{algorithm}

\hypersetup{colorlinks=true,citecolor=blue,linkcolor=blue}

\usepackage{booktabs}

\usepackage{color}
\usepackage{enumitem}

\usepackage{array,cite}
\usepackage{graphicx,tikz}
\usepackage[mathscr]{euscript}
\usepackage{amsthm}
\usepackage{cite}

\usepackage{bm}
\usepackage{bbm}
\usepackage{color}

\usepackage{color}
\usepackage{epstopdf}
\usepackage{subcaption}
\usepackage{cleveref}
\usepackage{thmtools}
\usepackage{thm-restate}

\usepackage{bbding}
\usepackage{mathtools}
\allowdisplaybreaks
\captionsetup[subfigure]{subrefformat=simple,labelformat=simple}

\usepackage{multirow}
\usepackage{adjustbox}
\usepackage{slashbox}
\usepackage{wrapfig}
\DeclareUnicodeCharacter{FB01}{fi}
\DeclareUnicodeCharacter{FB02}{fi}

\graphicspath{{illustrations/}}

\hyphenation{op-tical net-works semi-conduc-tor}

\newcounter{optproblem}

\newtheoremstyle{mytheoremstyle} 
    {\topsep}                    
    {\topsep}                    
    {\normalfont}                
    {}                           
    {\bfseries}                   
    {.}                          
    {.5em}                       
    {}  

\theoremstyle{mytheoremstyle}
\newtheorem{theorem}{Theorem}[section]

\newtheorem*{theorem*}{Theorem}
\newtheorem*{lemma*}{Lemma}
\newtheorem*{remark*}{Remark}

\newtheorem{lemma}[theorem]{Lemma}







\theoremstyle{mytheoremstyle}

\theoremstyle{remark}

\DeclareMathAlphabet{\pazocal}{OMS}{zplm}{m}{n}
\DeclareMathAlphabet{\mathpzc}{OMS}{pzc}{m}{it}

\setlist[itemize]{leftmargin=*}



\renewcommand{\hat}{\widehat}

     \def\RR{\mathbb{R}}

 \def\cB{{\cal  B}}
 \def\cC{{\cal  C}}

 \def\cI{{\cal  I}}

 \def\cL{{\cal  L}}
 \def\cM{{\cal  M}}

 \def\cV{{\cal  V}}
 \def\cW{{\cal  W}}



\def\+#1{\mathcal{#1}}
\def\-#1{\textup{#1}}

\def\set#1{\left\{ #1 \right\}}
\def\pth#1{\left( #1 \right)}

\def\abth#1{\left | #1 \right |}

\def\defeq {\coloneqq}

\def \longmid {\,\middle\vert\,}


\newcommand{\La}{\left\langle\kern-0.64ex\left\langle}
\newcommand{\Ra}{\right\rangle\kern-0.64ex\right\rangle}

\def\Norm#1#2{{\left\vert\kern-0.4ex\left\vert\kern-0.4ex\left\vert #1
    \right\vert\kern-0.4ex\right\vert\kern-0.4ex\right\vert}_{#2}}
\def\norm#1#2{{\left\|#1\right\|}_{#2}}

\def\ltwonorm#1{\norm{#1}{2}}

\newcommand{\1}{{\rm 1}\kern-0.25em{\rm I}}
\def\indict#1{{\rm 1}\kern-0.25em{\rm I}_{\set{#1}}}




\def\set#1{\left\{#1\right\}}



\def \Pr {\textup{Pr}}
\newcommand{\Prob}[1]{\Pr\left[#1\right]}





\newcommand{\beq}{\begin{equation}}
\newcommand{\eeq}{\end{equation}}
\newcommand{\beqa}{\begin{eqnarray}}
\newcommand{\eeqa}{\end{eqnarray}}
\newcommand{\beqas}{\begin{eqnarray*}}
\newcommand{\eeqas}{\end{eqnarray*}}
\def\bal#1\eal{\begin{align}#1\end{align}}
\def\bals#1\eals{\begin{align*}#1\end{align*}}
\def\bsal#1\esal{\begin{small}\begin{align}#1\end{align}\end{small}}
\def\bsals#1\esals{\begin{small}\begin{align*}#1\end{align*}\end{small}}
\def\bsfal#1\esfal{\begin{small}\begin{flalign}#1\end{flalign}\end{small}}

\begin{document}

%
\title{Learning Informative Attention Weights for Person Re-Identification}
%
%
%
%

\author{Yancheng~Wang,
        Nebojsa Jojic,
        Yingzhen~Yang
\IEEEcompsocitemizethanks{\IEEEcompsocthanksitem Yancheng~Wang, Yingzhen Yang are with the School of Computing and
Augmented Intelligence, Arizona State University, Tempe, AZ, 85281.\protect\\
E-mail: ywan1053@asu.edu, yingzhen.yang@asu.edu
\IEEEcompsocthanksitem Nebojsa Jojic is with Microsoft Research, Redmond, WA, 90852. \protect\\
E-mail:
jojic@microsoft.com
}
}

\IEEEtitleabstractindextext{%
\begin{abstract}
Attention mechanisms have been widely used in deep learning, and recent efforts have been devoted to incorporating attention modules into deep neural networks (DNNs) for person Re-Identification (Re-ID) to enhance their discriminative feature learning capabilities.
Existing attention modules, including self-attention and channel attention, learn attention weights that quantify the importance of feature tokens or feature channels.
However, existing attention methods do not explicitly ensure that the attention weights are informative for predicting the identity of the person in the input image, and may consequently introduce noisy information from the input image.
To address this issue, we propose a novel method termed Reduction of Information Bottleneck loss (RIB), motivated by the principle of the Information Bottleneck (IB).
A novel distribution-free and efficient variational upper bound for the IB loss (IBB), which can be optimized by standard SGD, is derived and incorporated into the training loss of the RIB models.
RIB is applied to DNNs with self-attention modules through a novel Differentiable Channel Selection Attention module, or DCS-Attention, that selects the most informative channels for computing attention weights, leading to competitive models termed RIB-DCS.
RIB is also incorporated into DNNs with existing channel attention modules to promote the learning of informative channel attention weights, leading to models termed RIB-CA.
Both RIB-DCS and RIB-CA are applied to fixed neural network backbones and learnable backbones with Differentiable Neural Architecture Search (DNAS).
Extensive experiments on multiple person Re-ID benchmarks show that RIB significantly enhances the prediction accuracy of DNNs for person Re-ID, even for the occluded person Re-ID.
Extensive experiment results demonstrate the effectiveness of RIB in learning discriminative features which are informative for identifying person identities.
The code of our work is available at \url{https://github.com/Statistical-Deep-Learning/RIB-ReID}.
\end{abstract}

\begin{IEEEkeywords}
Attention, Differentiable Channel Selection, Information Bottleneck, Re-IDentification, Image Classification.
\end{IEEEkeywords}}

\maketitle

\IEEEdisplaynontitleabstractindextext

%
\IEEEpeerreviewmaketitle

\IEEEraisesectionheading{\section{Introduction}\label{sec:introduction}}
\IEEEPARstart{A}{ttention} methods, including self-attention and channel attention, have recently attracted increasing interest in computer vision tasks, such as person re-identification (Re-ID).
Existing self-attention modules~\cite{wang2018non, zhang2020relation, chen2019abd, he2021transreid} compute attention weights by modeling pairwise affinities among feature tokens, while existing channel attention modules assign adaptive weights to feature channels~\cite{hu2018squeeze, CBAM, MCA}. In both cases, the computation of attention weights is critical to their success.
However, existing attention methods, including self-attention and channel attention, do not explicitly ensure that attention weights are informative for predicting the class label, which is the person identity in person Re-ID.
Such an issue is illustrated by the Grad-CAM visualization of an attention-based vision transformer model, SPT~\cite{tan2024occluded}, in Fig.~\ref{fig:cam_occ}(a)-(b), which illustrates that the model largely focuses on occlusion regions unrelated to the identity of the person in the image for the occluded person Re-ID task.
For another example, the heatmap visualization of attention weights in Fig.~\ref{fig:cam_occ}(c)-(d) illustrates that even feature tokens from the regions that occlude the person receive high attention weights for a query token from the boundary of the arm of the person. Similarly, the Grad-CAM visualization in Fig.~\ref{fig:grad_cam_1}(c) illustrates that the representative baselines in self-attention, TransReID~\cite{he2021transreid}, and channel attention, SE~\cite{hu2018squeeze}, either mistakenly focus on the background areas (in the bottom figure) or miss the important human body regions (in the top figure), potentially hurting the performance of person Re-ID. These examples show that existing attention-based methods, including self-attention and channel attention, lack a principled information-theoretic approach for learning informative attention weights so that higher attention weights are assigned to more informative features relevant to person identity for person Re-ID.

\begin{figure}[!t]
\begin{center}
\includegraphics[width=0.925\columnwidth]{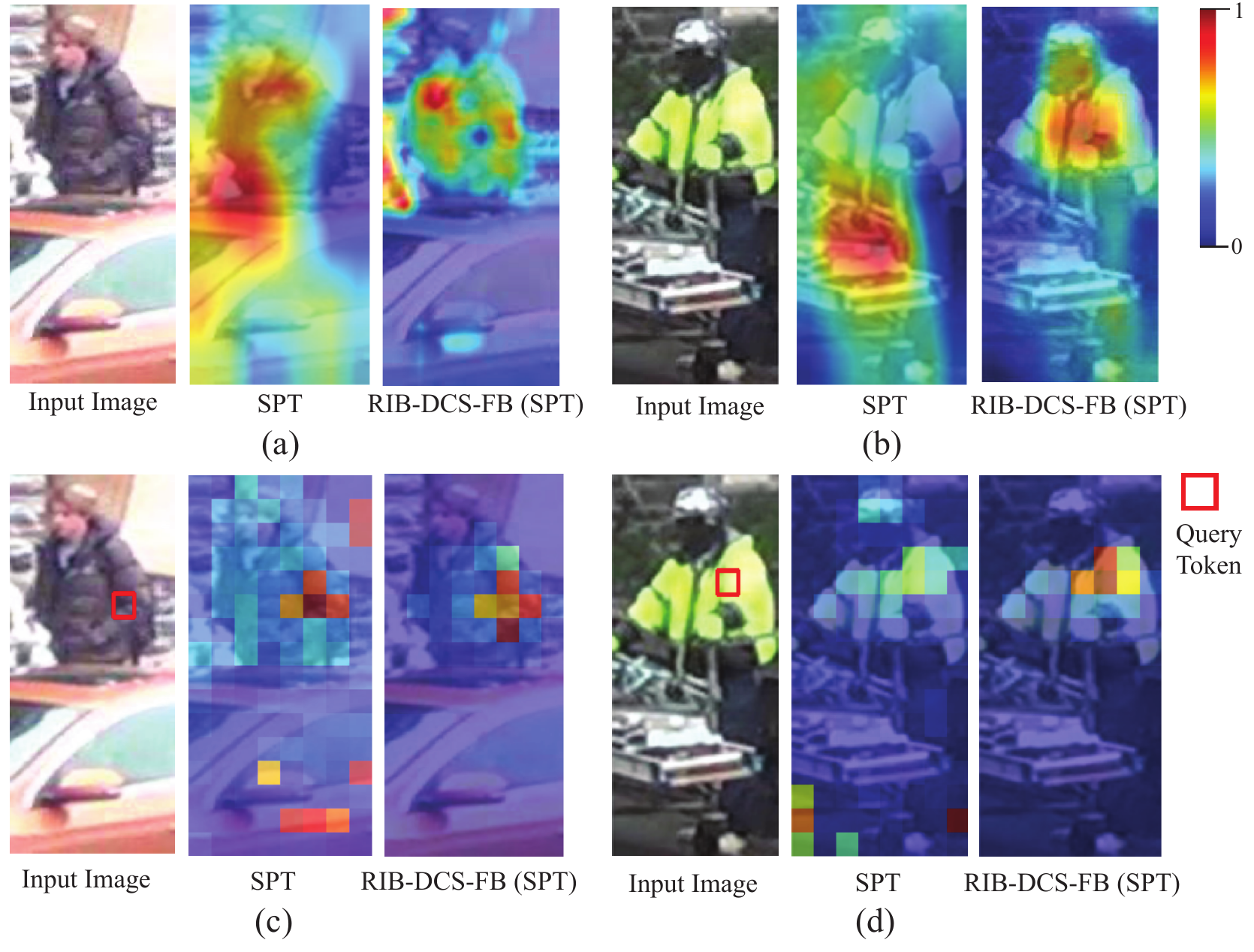}
\end{center}
\vspace{-4mm}
\caption{Figures (a)-(b) illustrate the Grad-CAM visualization for two images from the occluded person Re-ID dataset, Occluded-Duke \cite{miao2019pose}, for an attention-based model, SPT~\cite{tan2024occluded}, and RIB-DCS-FB (SPT).
Figures (c)-(d) illustrate the heatmaps of the attention weights corresponding to a query token computed from the first transformer block in SPT and RIB-DCS-FB (SPT). The query token for both examples is selected from the boundary of the arm of the person in the images, which is critical for identifying a person.}
\label{fig:cam_occ}
\vspace{-5mm}
\end{figure}

\begin{figure*}[!htb]
\vspace{-3mm}
\begin{center}
 \includegraphics[width=0.925\linewidth]{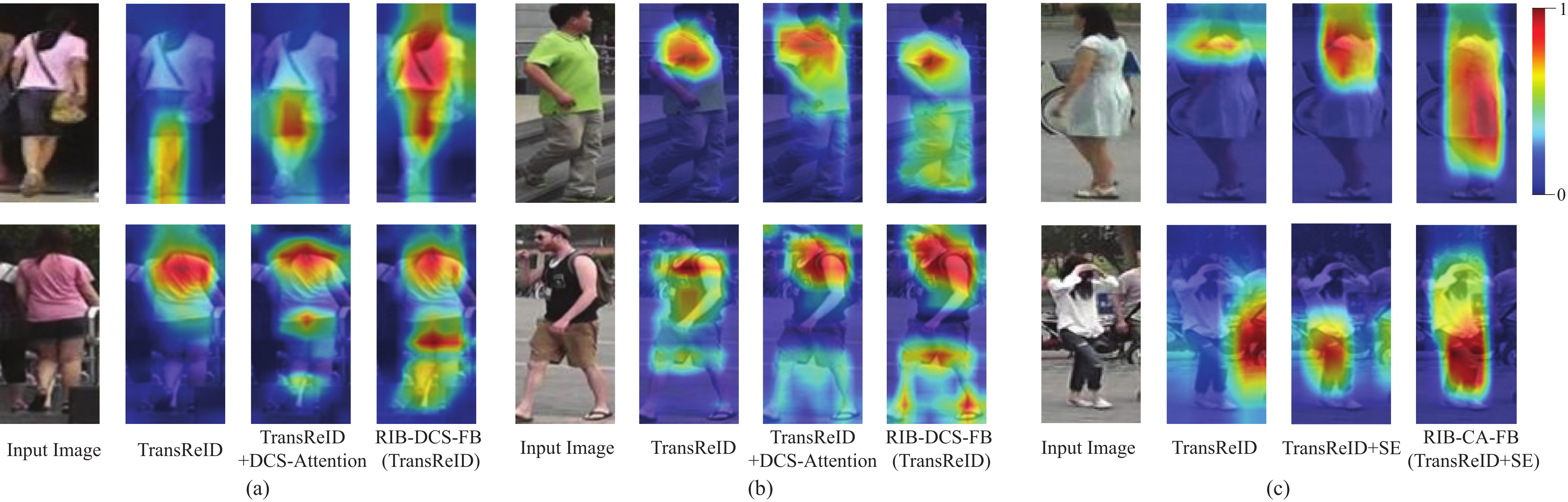}
\end{center}
\vspace{-5mm}
\caption{Figures (a)-(b) illustrate the Grad-CAM heatmaps for TransReID, TransReID+DCS-Attention, and RIB-DCS-FB (TransReID). TransReID+DCS-Attention replaces the self-attention modules in TransRe-ID with the DCS-Attention modules.
Figure (c) illustrates the Grad-CAM heatmaps for TransReID, TransReID+SE, and RIB-CA-FB (TransReID+SE). The SE attention modules~\cite{hu2018squeeze} are inserted after each transformer block in TransReID+SE. More visualization results are deferred to Section~\ref{sec:grad_cam} of the supplementary.
}
\vspace{-5mm}
\label{fig:grad_cam_1}
\end{figure*}
To address this issue, we propose to learn informative attention weights in existing self-attention modules and channel attention modules so that the resultant DNNs learn discriminative features for person Re-ID.
By learning informative attention weights, DNNs assign higher attention weights to feature tokens in self-attention modules and to feature channels in channel attention modules which contribute more to the discriminative task of person Re-ID.
Our proposed Reduction of Information Bottleneck loss (RIB) achieves this goal by reducing the Information Bottleneck (IB) loss to be detailed later in this section, inspired by the IB  principle~\cite{NaftaliIB}.  RIB is applied to DNNs with self-attention modules through a novel Differentiable Channel Selection Attention module, or DCS-Attention, which selects more informative channels for learning more informative attention weights than vanilla self-attention modules, leading to competitive models termed RIB-DCS. RIB is also incorporated into DNNs with existing channel attention modules to learn more informative channel attention weights than existing channel attention modules, leading to RIB-CA. Both RIB-DCS and RIB-CA are integrated into Fixed Backbones (FB) and learnable backbones with Differentiable Neural Architecture Search (DNAS), leading to RIB-DCS-FB, RIB-DCS-DNAS, RIB-CA-FB, and RIB-CA-DNAS.
As illustrated in Fig.~\ref{fig:cam_occ}(a)-(b), RIB-DCS-FB (SPT) focuses on the important parts of human figures instead of the occlusion regions in contrast to the baseline self-attention model, SPT.
Moreover, compared to SPT, Fig.~\ref{fig:cam_occ}(c)-(d) illustrates that RIB-DCS-FB (SPT) assigns higher attention weights to informative and semantically relevant feature tokens from the boundary of the arm of the person for a query that is nearly from the same location. We then explain the detailed motivation for applying RIB to self-attention and channel attention.

\noindent \textbf{Motivation of RIB for Self-Attention and Channel Attention.}
Self-attention~\cite{he2021transreid, zhang2020relation} can be formulated as $\textup{Output} = \sigma(QK^{\top})V$, where $Q,K,V \in \RR^{N \times D}$ denote the query, key, and value, respectively, with $N$ being the number of feature tokens and $D$ being the input channel number. $\sigma(\cdot)$ is an operator, such as Softmax. $A  = \sigma(QK^{\top}) \in \RR^{N \times N}$ is the attention weights which model the weighted aggregation of the feature tokens in $V$.
By learning informative attention weights $A$, DNNs assign higher attention weights to feature tokens in $V$ which are more informative and semantically relevant to the identity of the person.
For example, feature tokens corresponding to human body regions in Fig.~\ref{fig:cam_occ}(c)-(d) should receive higher attention weights than those from occlusions or background.
To this end, we propose Differentiable Channel Selection Attention, or DCS-Attention, which computes more informative attention weights $A$ than vanilla self-attention by using only selected informative channels, which are the selected columns of $Q$ and the corresponding selected rows of $K^{\top}$, to compute $A = \sigma(QK^{\top})$, as illustrated in Fig.~\ref{fig:pipeline}(a).
Learning informative attention weights to learn discriminative features can be understood through the Information Bottleneck (IB) principle. Let $X$ be the input features, and $F$ be the learned features by the network. Let $Y$ be the ground truth training class labels, which define the identity of the person for person Re-ID. The principle of IB is to increase the mutual information between $F$ and $Y$ while reducing the mutual information between $F$ and $X$. That is, the IB principle encourages reduction of the IB loss, $I(F,X) - I(F,Y)$, where $I(\cdot,\cdot)$ denotes mutual information modeling the correlation of its input variables.
We remark that by reducing the IB loss with RIB, the DNNs are encouraged to use more informative channels to compute more informative attention weights in self-attention, so that the attention output is the aggregation of the feature tokens in the value $V$ which are more correlated with the person identity and less correlated with the input image containing background or occlusions.
Table~\ref{tab:ablation-IB-loss} in Section~\ref{sec:ablation_IB_loss} demonstrates that replacing standard self-attention with DCS-Attention in a DNN leads to a lower IB loss and improved person Re-ID performance without explicitly reducing the IB loss in training.
By explicitly reducing the IB loss with RIB to learn more informative attention weights, the person Re-ID performance is further improved, and the IB loss is reduced to an even lower level, which indicates the effectiveness of RIB. The key difference of our hard channel selection in DCS-Attention from existing channel attention methods, such as SE~\cite{hu2018squeeze}, CBAM~\cite{CBAM}, and MCA~\cite{MCA}, along with its advantage, is explained in Section~\ref{sec:sup_hard-channel-selection} of the supplementary.

Similar to existing self-attention modules, existing channel attention modules typically compute attention weights from activation statistics or heuristic gating functions~\cite{hu2018squeeze,CBAM,MCA}, without explicitly ensuring that channels with higher attention weights are more informative for person Re-ID. Under the RIB framework, RIB-CA learns informative channel attention weights for SE~\cite{hu2018squeeze}, CBAM~\cite{CBAM}, and MCA~\cite{MCA}, so that more discriminative feature channels receive higher attention weights. RIB is particularly effective for occluded person Re-ID, where reducing the correlation between the learned features and the inputs with occlusions while increasing the correlation between the learned features and the class labels enables the network to learn discriminative features that are robust to occlusions, as evidenced by Fig.~\ref{fig:cam_occ} and Table~\ref{tab:occluded_duke}.

\noindent \textbf{Contributions.} The main contributions of this paper are presented as follows.

First, we propose a novel distribution-free and efficient Information Bottleneck reduction framework, termed RIB. RIB aims to learn informative attention weights for self-attention and channel attention modules by explicitly reducing the IB loss in the training process of the DNNs with attention modules for person Re-ID.
In order to reduce the IB loss, we present a novel and provable variational upper bound for the IB loss, termed IBB, which can be optimized by standard SGD algorithms. The IB loss is then reduced by reducing its upper bound, the IBB.
Different from existing upper bounds for the IB loss, such as VIB~\cite{VIB-DaiZGW18, VIB-SrivastavaDGAA21}, which impose an unrealistic Gaussian assumption on the hidden features of DNNs, and APIB~\cite{IB-lasso-APIB}, which reduces only an approximation to the IB loss, RIB directly minimizes a variational upper bound for the IB loss, IBB, without introducing any distributional assumptions on the hidden features.
Moreover, the proposed IBB is computationally efficient with a computational complexity of $\Theta(n C T_0)$, where $C$ is the number of classes, $n$ is the number of training samples, and $T_0$ denotes the computational complexity of a forward and backward pass of the neural network with respect to each training sample.
In contrast, the upper bound for the mutual information used to bound the IB loss in CLUB~\cite{CLUB}, albeit not requiring distributional assumptions on the hidden features of DNNs, requires a substantially higher computational complexity of $\Theta(n^2 T_0)$ since $n \gg C$.
A composite loss, which combines the IBB and the regular cross-entropy/triplet losses, is used to train DNNs with attention modules for person Re-ID.
Table~\ref{tab:ablation-IB-loss} in Section~\ref{sec:ablation_IB_loss} demonstrates that the IB loss of DNNs trained for person Re-ID can be reduced by optimizing such a composite loss.
Table~\ref{tab:compare-IBB-VIB-APIB} in Section~\ref{sec:ablation_IBB} demonstrates that the models using IBB achieve substantially better performance than the models using CLUB, VIB, and APIB.
We remark that our RIB framework, as an independent contribution, can be applied to broader discriminative tasks beyond person Re-ID.

Second, RIB is applied to DNNs with self-attention or channel attention modules to learn informative attention weights. Under the general framework of RIB, we propose a novel DCS-Attention module which selects more informative channels for learning more informative self-attention weights than existing self-attention modules, leading to RIB-DCS. In addition, RIB is applied to DNNs with existing channel attention modules, including SE~\cite{hu2018squeeze}, CBAM~\cite{CBAM}, and MCA~\cite{MCA}, to learn informative channel attention weights, leading to RIB-CA.
Both RIB-DCS and RIB-CA are applied to Fixed Backbones (FB) and learnable backbones with Differentiable Neural Architecture Search (DNAS).
Extensive experiments in Section~\ref{sec::results-sw-DCS} and Section~\ref{sec:ChannelAttn_Results} demonstrate that both RIB-DCS and RIB-CA models significantly outperform the baseline models without RIB on public person Re-ID benchmarks.
Visualization results in Fig.~\ref{fig:cam_occ}, Fig.~\ref{fig:grad_cam_1}, and Fig.~\ref{fig:cam}-\ref{fig:RIB-CA_cam} in Section~\ref{sec:grad_cam} of the supplementary illustrate the advantages of RIB-DCS and RIB-CA compared to existing self-attention and channel attention modules.
The RIB models also show superior performance for more challenging tasks, such as occluded person Re-ID, cross-domain person Re-ID, and self-supervised person Re-ID, as evidenced in Section~\ref{sec:occluded_duke} and Section~\ref{sec:cross_dataset}-\ref{sec:SSL} in the supplementary.

\vspace{-2mm}
\section{Related Works}
\subsection{Person Re-identification and Attention Modules}
\label{sec:reid_attention}
Person Re-IDentification (Re-ID) focuses on identifying and matching an individual across various distinct camera views. Both supervised~\cite{autoreid, zhang2023pha, he2021transreid} and unsupervised~\cite{chen2023jaccard, yang2024shallow} methods have been developed by utilizing DNNs.

\noindent\textbf{Related Works about Attention Methods for Person Re-ID.} Following the success of self-attention models~\cite{CBAM, ZhaoJK2020-SA-recognition} in computer vision, recent works seek to incorporate attention mechanisms into DNNs designed for person Re-ID. Some studies \cite{zhao2017spindle, zheng2019pose} also explore using external clues of human semantic annotations as guidance to enforce attention.
Following the success of self-attention \cite{vaswani2017attention} and its adaptation \cite{wang2018non}, recent studies \cite{zhang2020relation, chen2019abd, autoreid} in person Re-ID adopt self-attention modules or non-local attention blocks to capture global feature dependencies. In addition to incorporating attention modules into Convolutional Neural Networks (CNNs) designed for person Re-ID~\cite{zhang2020relation, chen2019abd}, recent works~\cite{zhang2023pha, he2021transreid} also seek to design attention-based Visual Transformers (ViTs) for person Re-ID. To further enhance the performance of DNNs for person Re-ID, some methods~\cite{qin2023noisy, yang2024pedestrian, guo2023lidar, attribute2024} also use additional information such as textual description~\cite{qin2023noisy, yang2024pedestrian}, 3D morphological information~\cite{guo2023lidar}, or person attribute information~\cite{attribute2024} to augment the person re-identification performance. These methods utilizing external information are orthogonal to our method and can be potentially combined with our method.
In addition to self-attention, channel attention~\cite{hu2018squeeze, CBAM, MCA} adaptively re-weights feature channels based on their importance.

\subsection{Neural Architecture Search}
Existing Neural Architecture Search (NAS) methods can be grouped into two categories by optimization scheme, namely Differentiable NAS (DNAS) and Non-differentiable NAS. \hyphenation{Non-differentiable} NAS methods heavily rely on controllers based reinforcement learning \cite{zoph2016neural} or evolution algorithms \cite{real2019regularized} to discover better architecture. The DNAS methods, such as \cite{liu2018darts, shin2018differentiable}, search for optimal options for architecture in a handcrafted and finite option set. They transform the discrete network architecture space into a continuous space over which differentiable optimization is feasible and use gradient descent techniques to search the continuous space. For example, to search for the optimal filter numbers at different convolution layers, FBNetV2 \cite{wu2019fbnet, wan2020fbnetv2} models each option as a term with a Gumbel-Softmax mask. In our work, we also adopt a Gumbel-Softmax-based method to search for informative channels.

\subsection{Related Works about Information Bottleneck (IB)}
\label{sec:releated_IB}
The Information Bottleneck (IB) principle~\cite{NaftaliIB} is designed to extract latent representations from data that maintain essential information for a specific task while reducing redundant information in the input data. Deep VIB~\cite{AlemiFD017} applies the IB principle as an objective in training deep neural networks. Building upon the IB principle, \cite{lai2021information} introduces a spatial attention module that aims to decrease the mutual information between the modulated representation by attention and the input, simultaneously increasing the mutual information between this representation and the task label. Following that, \cite{zhou2022understanding} demonstrates that self-attention could be viewed as a recursive optimization process of the Information Bottleneck objective. Recent studies~\cite{AmjadG20, KawaguchiDJH23} have theoretically validated that reducing the information bottleneck can effectively reduce generalization errors of DNNs.
Although attention modules have been adopted in designing DNNs for person Re-ID, existing works on person Re-ID have not explored learning informative attention weights to enhance the representation learning capabilities of the attention modules. Furthermore, the IB principle has never been applied to enhance the image representations learned by DNNs for person Re-ID.
\section{Proposed Approach}
\label{sec::formulation}

DNNs based on attention methods, including self-attention and channel attention, have shown superior performance for person Re-ID due to their ability to dynamically learn discriminative features~\cite{chen2019abd, zhang2020relation, he2021transreid}. In this section, we first propose a novel distribution-free and efficient Information Bottleneck reduction framework, termed RIB. RIB aims to learn informative attention weights for self-attention and channel attention modules.
Under the general framework of RIB, Differentiable Channel Selection Attention, or DCS-Attention, is proposed to select more informative channels for learning more informative attention weights than vanilla self-attention. The DNNs with DCS-Attention enhanced by RIB are referred to as RIB-DCS.
RIB is also applied to existing channel attention modules to encourage the channel attention modules to assign higher attention weights to more informative feature channels.
The DNNs with channel attention enhanced by RIB are referred to as RIB-CA.
In Section~\ref{sec:method_RIB}, we present the formulation of RIB and a novel distribution-free and efficient variational upper bound for the IB loss.
In Section~\ref{sec:method_attention_application}, we introduce how RIB is applied to DNNs with DCS-Attention and existing channel attention modules, which leads to RIB-DCS and RIB-CA. In Section~\ref{sec:method_backbone}, we introduce how RIB-DCS and RIB-CA can be employed in fixed backbones designed for person Re-ID and learnable backbones that can be optimized by NAS methods.

\subsection{RIB: Reduction of Information Bottleneck Loss}
\label{sec:method_RIB}

Although attention methods have achieved remarkable success in person Re-ID, existing attention methods, including self-attention and channel attention, do not explicitly ensure that the attention weights are informative for predicting the identity of the person in the input image, and may consequently introduce information irrelevant to discriminative learning. To address this issue, we propose a novel information-theoretic feature learning method, termed Reduction of the Information Bottleneck loss (RIB), which explicitly encourages the attention modules to learn informative attention weights by reducing the Information Bottleneck (IB) loss of the associated DNNs. We first introduce the IB loss and the motivation for the reduction of the IB loss, and then introduce a new distribution-free and efficient variational upper bound for the IB loss, and the IB loss of the DNNs is reduced by optimizing such an upper bound.

RIB aims to reduce the IB loss, $\textup{IB}(F, X, Y) = I(F, X) - I(F ,Y)$, where $I(\cdot,\cdot)$ stands for the mutual information. $X$, $F$, and $Y$ denote the random variables representing the input feature, learned feature, and ground-truth training class label, respectively. Reduction of the IB loss ensures that the learned features are more correlated with the class labels (person identities) and less correlated with the input features. Existing self-attention and channel-attention modules, which do not explicitly learn informative attention weights, may introduce noisy information irrelevant to person Re-ID by attending to less informative feature tokens and feature channels that potentially correspond to non-human regions of the input image.
By reducing the IB loss, RIB explicitly encourages more informative feature tokens in self-attention and feature channels in channel attention to receive higher attention weights. This is achieved by increasing the correlation of the learned features with class labels while decreasing their correlation with the input images containing backgrounds or occlusions, ultimately leading to more discriminative features.
Given the training data $\set{X_i,y_i}_{i=1}^n$ where $X_i$ is the $i$-th input training feature and $y_i$ is the corresponding class label or the identity of the person, we first specify how to compute the IB loss based on the training data. Let $\textup{IB}(\cW) = I(F(\cW),X)
-I(F(\cW),Y)$ denote the IB loss where $\cW$ denotes the weights of the DNN.
$X$ takes values in $\set{X_i}_{i=1}^n$.
$F(\cW)$ takes values in $\set{F_i(\cW)}_{i=1}^n$ with $F_i(\cW)$ being the $i$-th learned feature by the network. $F(\cW)$ and $F_i(\cW)$ are also abbreviated as $F$ and $F_i$ for simplicity. $Y$ takes values in $\set{y_i}_{i=1}^n$. We have the class centroids $\set{\cC_a}_{a=1}^C$ and $\set{\cC_b}_{b=1}^C$ for the learned features $\set{F_i}_{i=1}^n$ and the input features $\set{X_i}_{i=1}^n$, respectively, where each $\cC_a$ is the average of the learned features in class $a$, each $\cC_b$ is the average of the input features in class $b$, and $C$ is the number of classes.  Then we define the probability that $F_i$ belongs to class $\cC_a$ as $\Prob{F \in a} = \frac 1n \sum\limits_{i=1}^n  \phi(F_i,a)$ with
$\phi(F_i,a) = \frac{\exp\left(-\ltwonorm{F_i - \cC_a}^2\right)}{\sum_{a=1}^{C}\exp\left(-\ltwonorm{F_i - \cC_a}^2\right)}$. Similarly, we define the probability that $X_i$ belongs to class $\cC_b$
as $\Prob{X \in b}
= \frac 1n \sum\limits_{i=1}^n  \phi(X_i,b)$.
We can compute the mutual information $I(F, X)$ and $I(F, Y)$ by
\noindent\resizebox{1\columnwidth}{!}{
    \begin{minipage}{1\columnwidth}
        \bals
        I(F, X) &= \sum\limits_{a=1}^C \sum\limits_{b=1}^C
        \Prob{F \in a, X \in b} \log{\frac{\Prob{F \in a, X \in b}}
        {\Prob{F \in a}\Prob{X \in b}}},
        \eals
        \vspace{1mm}
    \end{minipage}
}
\noindent\resizebox{1\columnwidth}{!}{
    \begin{minipage}{1\columnwidth}
        \bals
        I(F, Y) &= \sum\limits_{a=1}^C \sum\limits_{y=1}^C
        \Prob{F \in a, Y = y} \log{\frac{\Prob{F \in a, Y = y}}
        {\Prob{F \in a}\Prob{Y = y}}},
        \eals
        \vspace{1mm}
    \end{minipage}
}
and then compute the IB loss $\textup{IB}(\cW)$.
Given a variational distribution $Q(F \in a| Y=y)$ for $y \in \set{1,\ldots C}$ and $a \in \set{1,\ldots C}$, the following theorem gives a variational upper bound, $\textup{IBB}(\cW)$, for the IB loss $\textup{IB}(\cW)$. $\textup{IBB}(\cW)$ is also abbreviated as $\textup{IBB}$ in the following text.
\begin{theorem}\label{theorem:IB-upper-bound}
Let $\Prob{X \in y}  = \sum_{i=1}^n \indict{y_i = y}/n \defeq p_y$ be the prior probability for every $y \in [C]$, we have
\bal\label{eq:IB-upper-bound}
\textup{IB}(\cW) \le \textup{IBB}(\cW) ,
\eal
where
\bsals
&\textup{IBB}(\cW) \nonumber \\
&\defeq  \frac 1{n} \sum\limits_{i=1}^n \sum\limits_{a=1}^C \sum\limits_{y=1}^C
\indict{y_i = y} \phi(F_i, a) \log \pth{ \frac{\indict{y_i = y}}{p_y Q(F \in a| Y=y)} }.
\esals
\end{theorem}

The proof of this theorem follows by applying Lemma~\ref{lemma:I-X-tildeX-upper-bound} and Lemma~\ref{lemma:I-tildeX-Y-lower-bound} in Section~\ref{sec:proofs} of the supplementary.
We remark that $\textup{IBB}(\cW)$ is ready to be optimized by standard SGD algorithms because it is separable and expressed as the summation of losses on individual training points.
Algorithm~\ref{Algorithm-IBB} describes the training process of a RIB model for person Re-ID, where $\textup{IBB}(\cW)$ is a regularization term in the training loss.
The following functions are needed for minibatch-based training with SGD, with the subscript $j$ indicating the corresponding loss on the $j$-th batch $\cB_j$, and the superscript $(t)$ indicating the corresponding loss at the $t$-th epoch:
\bsals
&\textup{IBB}^{(t)}_{j}(\cW) \nonumber \\
&=  \frac 1{\abth{\cB_j}} \sum\limits_{i\in \cB_j} \sum\limits_{a=1}^C \sum\limits_{y=1}^C
\indict{y_i = y} \phi(F_i, a) \log \pth{ \frac{\indict{y_i = y}}{p_y Q^{(t)}(F \in a| Y=y)} },
\esals
where
\bal
    \mathcal{L}^{(t)}_{\text{train},j}(\cW) &= \text{CE}^{(t)}_{j} + \textup{Triplet}_j^{(t)}+ \eta \textup{IBB}^{(t)}_{j}(\cW),\label{eq:train_loss}
\eal
\vspace{-5mm}
\bals
    \text{CE}^{(t)}_{j} &=  \frac{1}{\abth{\cB_j}}\sum_{i \in \cB_j}H(F_i(\cW), Y_i),  \nonumber
\eals
\vspace{-5mm}
\bals
     &\text{Triplet}^{(t)}_{j} =  \frac{1}{\abth{\cB_j}}\sum_{i\in \cB_j}\max\pth{\|F_i - F_i^{\textup{pos}}\|_2- \|F_i -F_i^{\textup{neg}}\|_2,0}. \nonumber
\eals
Here $Q^{(t)}(F \in a| Y=y)$ is the variational conditional probability that a feature $F$ belongs to cluster $a$ given class label $y$ at the $t$-th epoch, which is computed efficiently by Algorithm~\ref{alg:Q_computation} in the supplementary.
$\text{CE}^{(t)}_{j}$ is the cross-entropy loss on batch $\cB_j$ at epoch $t$.  $H(\cdot,\cdot)$ is the cross-entropy function. $\eta$ is the balance factor for IBB. The values of $\eta$ on different datasets will be decided by performing cross-validation. $\text{Triplet}^{(t)}_{j}$ is the triplet loss~\cite{chen2017beyond} on batch $\cB_j$ at epoch $t$. $\|\cdot\|_2$ denotes the Euclidean norm. $F_i^{\textup{pos}}$ is the feature of another image for the same person as $F_i$.  $F_i^{\textup{neg}}$ is the feature of another image for a different person than $F_i$.
Let $T_0$ denote the computational complexity of a forward and backward pass of the neural network with respect to each training sample. The overall computational complexity for calculating the proposed IBB regularization term $\textup{IBB}(\cW)$ is $\Theta(n C T_0 + C^2) = \Theta(nC T_0)$ with
$n \ge C$. In contrast, computing the upper bound for the mutual information required for calculating the upper bound for the IB loss proposed in CLUB~\cite{CLUB} requires a substantially higher computational complexity of $\Theta(n^2 T_0)$. Note that $\Theta(n^2 T_0)$ corresponds exclusively to the upper bound for the mutual information $I(F,X)$, while CLUB additionally requires the computation of the lower bound of the mutual information $I(F,Y)$.
Details on the complexity analysis of CLUB and IBB are presented in Section~\ref{sec:complexity} of the supplementary.
Moreover, Table~\ref{tab:compare-IBB-VIB-APIB} in Section~\ref{sec:ablation_IBB} demonstrates that IBB achieves substantially better performance than CLUB and other competing methods that rely on the unrealistic Gaussian distribution assumption of hidden features~\cite{VIB-DaiZGW18, VIB-SrivastavaDGAA21} or on the approximation to the IB loss~\cite{IB-lasso-APIB}.

\begin{algorithm}[!htb]
\caption{Training Algorithm of the RIB models}\label{Algorithm-IBB}
{
\begin{algorithmic}[1]
\small
\REQUIRE Training data $\set{X_i,y_i}_{i=1}^n$, mini-batches of the training data $\set{B_j}_{j=1}^{J}$ the epoch number $t_{\text{train}}$, and learning rate $\alpha$.
\ENSURE The trained weights $\cW$ of the network.
\STATE Initialize the weights of the network $\cW$ through random initialization.
\STATE Initialize $Q^{(0)}(F|Y) = \tfrac{1}{C}\mathbf{1}^{C \times C}$.
\STATE Compute $\set{\cC_b}_{b=1}^C$ by $\cC_b = \frac{\sum_{i=1}^{n} \indict{y_i = b} X_i}{\sum_{i=1}^{n} \indict{y_i = b}}$.
\FOR{$t\leftarrow 1$ to $t_{\text{train}}$}
\FOR{$j \leftarrow 1$ to $J$}
\STATE Update $\cW \leftarrow \cW - \alpha \nabla_{\cW} \mathcal{L}^{(t)}_{\text{train},j}(\cW) $ using mini-batch gradient descent on batch $\cB_j$.
\ENDFOR
\STATE Compute $Q^{(t)}(F| Y)$ by Algorithm~\ref{alg:Q_computation} in Section \ref{sec:complexity} of the supplementary.
\STATE Compute  $\set{\cC_a}_{a=1}^C$ by $\cC_a = \frac{\sum_{i=1}^{n} \indict{y_i = a} F_i(\cW)}{\sum_{i=1}^{n} \indict{y_i = a}}$.
\ENDFOR
\STATE \textbf{return} The trained weights $\cW$ of the network.
\end{algorithmic}
}

\end{algorithm}

\vspace{-3mm}
\subsection{Application of RIB to Self-Attention and Channel Attention}
\label{sec:method_attention_application}
By adding IBB to the training loss in Equation~(\ref{eq:train_loss}), RIB can be applied to DNNs with self-attention or channel attention modules to promote learning informative attention weights, leading to RIB-DCS and RIB-CA, respectively.
To apply RIB to self-attention, we propose learning channel selection masks over the features used to compute the attention weight matrix. As a result, RIB explicitly encourages the use of more informative channels to learn more informative attention weights than vanilla self-attention modules. The channel selection masks are learned in a differentiable manner with a novel Differentiable Channel Selection Attention module, or DCS-Attention, so that they can be jointly optimized with the network parameters, leading to competitive DNNs termed RIB-DCS.
RIB can also be applied to DNNs with existing channel attention modules by adding IBB to the training loss, leading to RIB-CA.

\begin{figure}[!htb]
\vspace{-2mm}
\begin{center}
 \includegraphics[width=0.8\columnwidth]{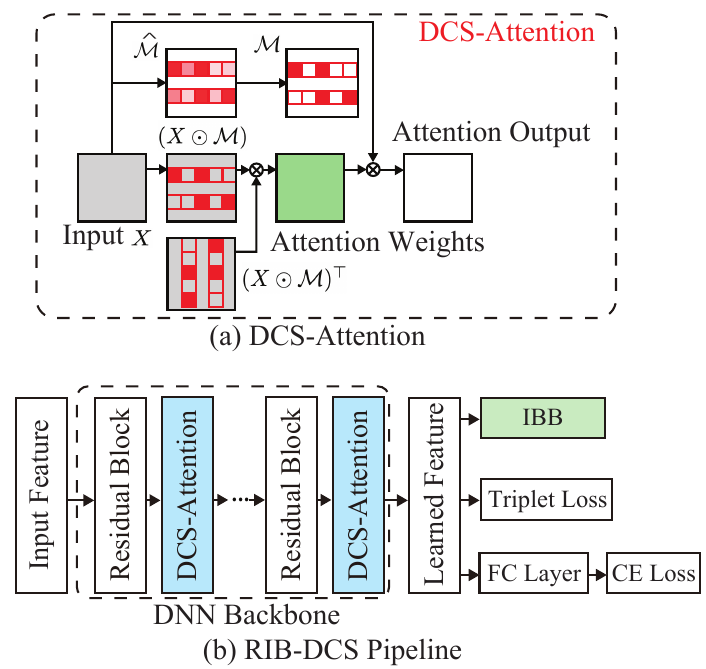}
\end{center}
\vspace{-4mm}
\caption{Figure (a) illustrates the framework of the DCS-Attention module. Figure (b) illustrates the pipeline for the RIB-DCS model. The DCS-Attention modules are inserted after each residual block. In a RIB-CA model, the DCS-Attention modules are replaced with existing channel attention modules, such as SE~\cite{hu2018squeeze}, CBAM~\cite{CBAM}, and MCA~\cite{MCA}. IBB is added as a regularization term in addition to the triplet loss and the cross-entropy (CE) loss.
}
\vspace{-6mm}
\label{fig:pipeline}
\end{figure}

\noindent\textbf{RIB-DCS: RIB with Differentiable Channel Selection for Attention Weight.}
To introduce how channel selection is performed on features for computing attention weights, we first present the preliminaries of self-attention modules.
Let $X$ be an input feature of shape $ N \times D$, where $N$ is the number of feature tokens and $D$ is the number of feature dimensions. If the input is a three-dimensional feature map in CNNs of the shape $H\times W\times D$ where $H$ and $W$ are the height and width of the feature map, it is first reshaped into a tensor of size $N\times D$, where $N=HW$. The vanilla self-attention module applies three linear layers to $X$ to obtain the query ($Q$), key ($K$), and value ($V$). The attention output is computed as a weighted aggregation of the feature tokens in the value $V$ by
${\rm Output}(Q,K,V) = {\rm softmax}(QK^{\top})V$.
The self-attention modules used for the computer vision tasks, such as the non-local self-attention \cite{wang2018non}, are formulated as
${\rm Output} = f (g_q({X}),g_k({X})) g_v({X})$,
where $g_q(\cdot)$, $g_k(\cdot)$ and $g_v(\cdot)$ are transformations applied on the input $X$, such as a convolution layer or a linear layer. $f(\cdot, \cdot)$ is the function that measures the feature correlation, such as the dot-product or cosine similarity.
Recent work \cite{li2020neural} demonstrates that $g_q(\cdot)$, $g_k(\cdot)$, and $g_v(\cdot)$ can be removed due to the strong capability of function approximation of DNNs for computer vision tasks. In this work, we use the self-attention module from \cite{li2020neural}, which is formulated as
${\rm Output} =\frac{1}{C(X)}\sigma(XX^{\top})X$. $C(X)$ is a normalization factor, and $C(X)=1$ when $\sigma(\cdot)$ is the Softmax operation.

Under the general framework of RIB, we aim to select informative channels to compute the attention weights. To this end, we propose DCS-Attention, which performs differentiable channel selection in the features used for the computation of the attention weights.
The channel selection is performed by maintaining a binary decision mask $\cM \in \{0,1\}^{N \times D}$, where its $(i,d)$-th element $\cM_{id}=1$ indicates the $d$-th channel of $X_i$ is selected. $X_i$ is the $i$-th row of $X$. As a result, the attention weights $A$ are computed by $A = \sigma\pth{(X \odot \cM)(X\odot \cM)^{\top}}$,
where $\odot$ indicates the elementwise product. To optimize the discrete binary decision mask with gradient descent, we adopt a simplified binary Gumbel-Softmax~\cite{verelst2020dynamic} to relax $\cM\in \{0,1\}^{N\times D}$ into its approximation in the continuous domain $\hat \cM\in(0,1)^{N \times D}$. The approximated soft decision mask can be computed by $\hat \cM_{id} = \phi \Bigl( \frac{ \theta_{id} + \epsilon_{id}^{(1)} - \epsilon_{id}^{(2)} }{\tau} \Bigr)$,
where $\hat \cM_{id}$ is the $(i,d)$-th element of $\hat \cM$. $\epsilon_{id}^{(1)}$ and $\epsilon_{id}^{(2)}$ are Gumbel noise terms for the approximation. $\tau$ is the temperature, and $\phi(\cdot)$ is the sigmoid function. $\theta\in \mathbb{R}^{N\times D}$ is the sampling parameter.
In this work, we obtain $\theta$ by applying a linear layer on $X$ so that the soft decision mask is dependent on the input features of that layer.
Inspired by the straight-through estimator \cite{verelst2020dynamic,Bengio2013EstimatingComputation}, we directly set $ \cM=\hat \cM$ in the backward pass. In the forward pass, the binary decision mask is computed by $ \cM_{id} = 1$ if $\hat \cM_{id} > 0.5$, and $ \cM_{id} = 0$  if $\hat \cM_{id} \leq 0.5$.
During inference, the Gumbel noise $\epsilon_{id}^{(1)}$ and $\epsilon_{id}^{(2)}$ are set to $0$. In this way, the informative channels for the attention weights computation are selected in a differentiable manner.
To encourage the channel selection masks to select more informative channels in the features for computing the attention weights, the training of the models with DCS-Attention is supervised by the loss function defined in Equation~(\ref{eq:train_loss}), leading to RIB-DCS.
Fig.~\ref{fig:pipeline}(a) illustrates the framework of the DCS-Attention module.
Fig.~\ref{fig:pipeline}(b) illustrates the pipeline of the RIB-DCS model.

\noindent\textbf{RIB-CA: RIB with Existing Channel Attention Methods. }
Similar to existing self-attention modules that learn attention weights to quantify the importance of feature tokens, channel attention modules learn attention weights for reweighting different feature channels.
However, existing channel attention modules, such as Squeeze-and-Excitation (SE)~\cite{hu2018squeeze}, Convolutional Block Attention Module (CBAM)~\cite{CBAM}, and Moment Channel Attention (MCA)~\cite{MCA}, learn attention weights using activation statistics or heuristic gating functions. As a result, existing channel attention modules do not explicitly ensure that the channels with higher attention weights are more informative for the discriminative task of person Re-ID.
To address this issue, RIB is applied to DNNs equipped with existing channel attention modules, including SE, CBAM, and MCA, in this paper, which leads to RIB-CA.
Similar to RIB-DCS, the training of RIB-CA is performed by minimizing the loss function defined in Equation~(\ref{eq:train_loss}).

\subsection{RIB with Fixed Backbone and Learnable Backbone}
\label{sec:method_backbone}

To evaluate the performance of the proposed RIB, we incorporate RIB-DCS and RIB-CA into both Fixed Backbones (FB), which lead to RIB-DCS-FB and RIB-CA-FB, and learnable backbones with the Differentiable Neural Architecture Search (DNAS) algorithm, which lead to RIB-DCS-DNAS and RIB-CA-DNAS. The fixed neural network backbones used in this work include MobileNetV2~\cite{sandler2018mobilenetv2}, OSNet~\cite{zhou2019omni}, ResNet50~\cite{resnet}, HRNet~\cite{dou2022reliability}, and TransReID~\cite{wang2022pose}. We use FBNetV2~\cite{wan2020fbnetv2} as the learnable neural network backbone.
For RIB-DCS-DNAS models, we first apply the standard DNAS algorithm in FBNetV2 to search for the architectures of the neural network backbones. In the search phase of the RIB-DCS-DNAS models, the weights of the neural network and a set of architecture parameters characterizing the selection of different architecture options in the supernet are jointly optimized.
Let $\cW$ denote the weights of the neural network, and $\cV$ denote the architecture parameters in the supernet. The search loss of RIB-DCS-DNAS and RIB-CA-DNAS models at the $t$-th epoch on the $j$-th batch $\cB_j$ is
\bal\label{eq:search_loss}
\cL^{(t)}_{\textup{search},j}(\cW, \cV) = \cL^{(t)}_{\textup{train},j}(\cW, \cV) + \cL_{\textup{latency}}(\cV),
\eal
where $\cL_{\textup{latency}}(\cV)$ is the latency cost of the network. We use the same latency cost as in FBNetV2~\cite{wan2020fbnetv2}. $\cL^{(t)}_{\textup{train},j}(\cW, \cV)$ is the training loss in Equation~(\ref{eq:train_loss}).
Once the search process is finished, the RIB-DCS-DNAS and RIB-CA-DNAS models with the searched architectures will be retrained. For both the search and the retraining of the RIB-DCS-DNAS and RIB-CA-DNAS models, minibatch-based gradient descent algorithms are used to optimize the network parameters by minimizing the joint training loss $\mathcal{L}^{(t)}_{\text{train},j}(\cW)$ from Equation~(\ref{eq:train_loss}) for the data in batch $\cB_j$ at the $t$-th epoch.

\section{Experiments}
\label{sec::experiments}
In this section, we evaluate the performance of the RIB models for the person Re-ID task on public person Re-ID benchmarks. In Section~\ref{sec::dataset-metric}, we present the details of the person Re-ID benchmarks and evaluation metrics. In Section~\ref{sec:implementation_details}, we present the implementation details and training settings of the RIB-DCS-FB and RIB-DCS-DNAS models. In Section~\ref{sec::results-sw-DCS}, we present the evaluation results of RIB-DCS-FB and RIB-DCS-DNAS models.
In Section~\ref{sec:ChannelAttn_Results}, we present the evaluation results of RIB-CA-FB and RIB-CA-DNAS models.
In Section~\ref{sec:ablation_IB_loss}, we perform ablation studies on the effectiveness of IBB.
In Section~\ref{sec:ablation_IBB}, we compare both the efficiency and the performance between the proposed IBB and existing methods for reducing the IB loss.
In Section~\ref{sec:occluded_duke}, we evaluate RIB models for occluded person Re-ID.
The training time evaluation of RIB models is presented in Section \ref{sec:training_time}. The search time evaluation of RIB-DCS-DNAS and RIB-CA-DNAS models is conducted in Section \ref{sec:search_time}.
Additional experimental results are presented in Section~\ref{sec:add_exp_sup} of the supplementary.
Details of the evaluation datasets are presented in Section~\ref{sec:dataset_sup}.
The structure of the supernet of FBNetV2, FBNetV2-Large, and FBNetV2-XLarge is detailed in Section~\ref{sec:FBNet_ARCH}.
We compare RIB-DCS with an existing attention masking method, MODA~\cite{MODA}, in Section~\ref{sec:moda}.
The evaluation results on the CUHK03 dataset are presented in Section~\ref{sec:cuhk03}.
In Section~\ref{sec:cross_dataset}, we further evaluate the cross-domain person Re-ID performance of RIB by integrating it into the domain transfer method, CaCL~\cite{LeeL0SYH23}.
In Section~\ref{sec:SSL}, we study the effectiveness of RIB in self-supervised person re-identification frameworks.
In Section~\ref{sec:ablation_dcs}, we perform ablation studies on the DCS-Attention modules in the RIB-DCS-FB and RIB-DCS-DNAS models.
In Section~\ref{sec:grad_cam}, we include more GradCAM visualization analysis.
t-SNE visualization analysis on features learned by RIB-DCS-FB (TransReID) and the baseline model, TransReID, is performed in Section~\ref{sec:tsne} of the supplementary.
In Section~\ref{sec:loss_figure} of the supplementary, we illustrate the plots of the IBB and the test loss of RIB-DCS-FB models during training.
In Section~\ref{sec:sup_hard-channel-selection} of the supplementary, we demonstrate the advantages of the hard channel selection in DCS-Attention over existing channel attention methods.
\subsection{Datasets and Evaluation Metrics}
\label{sec::dataset-metric}

We evaluate the RIB models on four public person Re-ID datasets, which are Market-1501 \cite{market}, DukeMTMC \cite{duke}, MSMT17 \cite{msmt}, and CUHK03 \cite{cuhk03}.
Standard Re-ID metrics Rank-1 accuracy and the mean Average Precision (mAP) are used. Re-ranking \cite{zhong2017re-ranking} and multi-query fusion \cite{market} are not used for the fairness of comparison. Statistics of the evaluation datasets are presented in Section~\ref{sec:dataset_sup} of the supplementary.

\begin{table*}[!t]
\caption{Performance of RIB-DCS-FB models with comparisons to state-of-the-art Re-ID models. Smaller-sized backbone enjoys less FLOPs while larger-sized backbones, such as TransReID, exhibit SOTA performance. The improvements of the RIB models over their corresponding baselines are reported in parentheses after each metric value.}
\vspace{-3mm}
\label{SOTA}
\begin{center}
\resizebox{1\textwidth}{!}{
\begin{tabular}{lcccccccccc}
\toprule
\multirow{2}{*}{Methods} & \multirow{2}{*}{Backbones} & \multirow{2}{*}{Input Size}& \multirow{2}{*}{Params(M)}& \multirow{2}{*}{FLOPs(G)} & \multicolumn{2}{c}{Market1501} & \multicolumn{2}{c}{DukeMTMC} & \multicolumn{2}{c}{MSMT17}\\ \cmidrule{6-11}
 &&& & & mAP& Rank-1& mAP & Rank-1& mAP & Rank-1\\ \midrule
\multicolumn{11}{c}{Trained from scratch} \\ \midrule
HACNN \cite{li2018harmonious} & Inception& 160$\times$64 & 4.5 & 0.55& 79.9 & 92.3& 63.8& 80.5& 41.5& 70.6\\
OSNet \cite{zhou2019omni} & OSNet& 256$\times$128 & 2.2 & 0.98& 81.0 & 93.6& 68.6& 84.7& 43.3& 71.0\\
Auto-ReID \cite{chen2019abd}& ResNet50 & 384$\times$128 & 13.1& 2.05& 74.6 & 90.7& 60.8& 78.6& 32.5&60.5 \\
RGA \cite{zhang2020relation}& MobileNetV2& 256$\times$128 & 5.13& 2.63& 81.5 & 92.9& 70.5& 83.7& 36.8& 62.1\\
MobileNetV2~\cite{sandler2018mobilenetv2} & MobileNetV2& 256$\times$128 & 2.22& 0.380 & 78.9 & 92.0&68.5 &82.7&34.6&61.2 \\
MobileNetV2-200~\cite{sandler2018mobilenetv2} & MobileNetV2-200 & 256$\times$128 & 5.04 & 0.890
& 81.8 & 93.4
& 71.8 & 84.5
& 43.2 & 68.8 \\
\midrule
\textbf{RIB-DCS-FB (OSNet)} & OSNet & 256$\times$128 & 2.33 & 1.03
& {83.7} ($\uparrow$\,2.7)
& {94.2} ($\uparrow$\,0.6)
& {74.0} ($\uparrow$\,5.4)
& {85.8} ($\uparrow$\,1.1)
& {46.5} ($\uparrow$\,3.2)
& {73.2} ($\uparrow$\,2.2) \\
\textbf{RIB-DCS-FB (MobileNetV2)} & MobileNetV2 & 256$\times$128 & 2.31 & 0.406
& {82.5} ($\uparrow$\,3.6)
& {93.0} ($\uparrow$\,1.0)
& {74.5} ($\uparrow$\,6.0)
& {86.0} ($\uparrow$\,3.3)
& {45.3} ($\uparrow$\,10.7)
& {72.9} ($\uparrow$\,11.7) \\
\textbf{RIB-DCS-FB (MobileNetV2-200)} & MobileNetV2-200 & 256$\times$128 & 5.32 & 0.927
& {84.5} ($\uparrow$\,2.7)
& {94.5} ($\uparrow$\,1.1)
& {74.9} ($\uparrow$\,3.1)
& {87.0} ($\uparrow$\,2.5)
& {47.0} ($\uparrow$\,3.8)
& {73.5} ($\uparrow$\,4.7) \\
\midrule
\multicolumn{11}{c}{Pre-trained on ImageNet}\\ \midrule
MobileNetV2~\cite{sandler2018mobilenetv2} & MobileNetV2 & 256$\times$128 & 2.22 & 0.380
& 84.5 & 93.9
& 72.3 & 85.5
& 47.6 & 74.9 \\
MobileNetV2-200~\cite{sandler2018mobilenetv2} & MobileNetV2-200 & 256$\times$128 & 5.04 & 0.890
& 85.0 & 94.5
& 75.2 & 87.4
& 50.0 & 76.3 \\
ResNet50~\cite{resnet} & ResNet50 & 256$\times$128 & 23.5 & 6.02
& 86.8 & 95.5
& 78.1 & 88.5
& 57.3 & 81.6 \\
AANet \cite{tay2019aanet} & ResNet50 & 256$\times$128 &27.1 & 7.36& 85.3 & 94.7& 75.3& 84.0& 58.6&81.5 \\
CAMA \cite{yang2019cama}& ResNet50 & 256$\times$128 & 28.4 & 7.85& 84.5 & 94.7& 72.9& 85.8& 57.4 & 79.5\\
BAT-Net \cite{fang2019bilinear} & ResNet50 & 256$\times$128 & 26.8& 7.54& 87.4 & 95.1& 77.3& 87.7& 58.8& 81.0\\
ABD-Net \cite{chen2019abd}& ResNet50 & 384$\times$128 & 69.2 & 14.1& 88.3& 95.6& 78.6 & 89.0& {60.8} &{82.3} \\
Auto-ReID \cite{autoreid} & ResNet50 & 384$\times$128 & 13.1& 2.05& 85.1 & 94.5&77.6 &88.8 & 52.5& 78.2\\
OSNet \cite{zhou2019omni}& OSNet& 256$\times$128 & 2.2 & 0.98& 84.9 & 94.8& 73.5& 88.6& 52.9& 78.7\\
RGA \cite{zhang2020relation}& ResNet50 & 256$\times$128 & 28.3& 7.10& 87.5 & 96.0&77.4 &88.8 & 57.5& 80.3\\
PAT \cite{li2021diverse}& ResNet50 & 256$\times$128 & 27.8& 7.95& 88.0 & 95.4& 78.2 &88.8 & 58.9& 81.0\\
AutoLoss-GMS \cite{gu2022autoloss}& ResNet50 & 256$\times$128 & 23.9& 6.89& 87.0 & 94.7&78.5 &89.2 & 55.1& 79.5\\
UAL \cite{dou2022reliability}& ResNet50 & 256$\times$128 & 24.0& 7.5& 87.0 & 95.2&80.0 &89.5 & 56.5 & 80.0\\
UAL \cite{dou2022reliability}& HRNet & 256$\times$128 & 30.5 & 9.0& 89.5 & 95.7&82.5 &90.8 & 65.3& 84.7\\
TransReID \cite{he2021transreid}& ViT-B/16& 256$\times$128 & 100&19.3& 88.9 & 95.2& 82.0 & 90.7&67.4& 85.3\\
BPBreID \cite{somers2023body}& ResNet50& 256$\times$128 & 24.4 & 6.9 & 87.0 & 95.1&78.3 &89.6 &57.0& 80.9\\
PFD \cite{wang2022pose}& TransReID& 256$\times$128 & 100& 19.3& 88.7 & 95.5& \underline{83.2} & \underline{91.2}&68.5& 85.8 \\
PHA \cite{zhang2023pha}& TransReID& 256$\times$128 & 100& 19.3& \underline{90.3} & 96.1&83.0&91.0&\underline{68.9}&\underline{86.1} \\
\midrule

\textbf{RIB-DCS-FB (MobileNetV2)} & MobileNetV2 & 256$\times$128 & 2.31 & 0.406
& 86.4 ($\uparrow$\,1.9) & 95.1 ($\uparrow$\,1.2)
& 75.6 ($\uparrow$\,3.3) & 86.9 ($\uparrow$\,1.4)
& 53.1 ($\uparrow$\,5.5) & 78.5 ($\uparrow$\,3.6) \\

\textbf{RIB-DCS-FB (MobileNetV2-200)} & MobileNetV2-200 & 256$\times$128 & 5.32 & 0.927
& 87.5 ($\uparrow$\,2.5) & 95.6 ($\uparrow$\,1.1)
& 78.6 ($\uparrow$\,3.4) & 89.1 ($\uparrow$\,1.7)
& 54.8 ($\uparrow$\,4.8) & 78.9 ($\uparrow$\,2.6) \\

\textbf{RIB-DCS-FB (ResNet50)} & ResNet50 & 256$\times$128 & 23.9 & 6.24
& 88.7 ($\uparrow$\,1.9) & 96.0 ($\uparrow$\,0.5)
& 76.8 ($\downarrow$\,1.3) & 88.9 ($\uparrow$\,0.4)
& 58.9 ($\uparrow$\,1.6) & 82.7 ($\uparrow$\,1.1) \\
\textbf{RIB-DCS-FB (OSNet)} & OSNet & 256$\times$128 & 2.33 & 1.03
& 86.8 ($\uparrow$\,1.9) & 94.9 ($\uparrow$\,0.1)
& 75.8 ($\uparrow$\,2.3) & 89.0 ($\uparrow$\,0.4)
& 55.1 ($\uparrow$\,2.2) & 79.8 ($\uparrow$\,1.1) \\
\textbf{RIB-DCS-FB (HRNet)} & HRNet & 256$\times$128 & 32.1 & 9.9
& 90.1 ($\uparrow$\,0.6) & 96.2 ($\uparrow$\,0.5)
& 82.3 ($\downarrow$\,0.2) & 91.1 ($\uparrow$\,0.3)
& 66.9 ($\uparrow$\,1.6) & 85.1 ($\uparrow$\,0.4) \\

\textbf{RIB-DCS-FB (TransReID)} & TransReID & 256$\times$128 & 101.6 & 19.7
& \textbf{91.3} ($\uparrow$\,2.4) & \textbf{97.0} ($\uparrow$\,1.8)
& \textbf{84.3} ($\uparrow$\,2.3) & \textbf{92.0} ($\uparrow$\,1.3)
& \textbf{70.2} ($\uparrow$\,2.8) & \textbf{87.1} ($\uparrow$\,1.8) \\
\bottomrule
\end{tabular}
}
\end{center}
\vspace{-2mm}

\end{table*}

\begin{table*}[!htbp]
\centering
\scriptsize
\caption{Performance of RIB-DCS-DNAS models on Market1501, DukeMTMC, and MSMT17. The improvements of the RIB models over their corresponding baselines are reported in parentheses after each metric value.}
\vspace{-2mm}
\resizebox{0.9\linewidth}{!}{
\begin{tabular}{lcccccccc}
\toprule
\multirow{2}{*}{Method} & \multirow{2}{*}{Params(M)} & \multirow{2}{*}{FLOPs(G)} & \multicolumn{2}{c}{Market1501} & \multicolumn{2}{c}{DukeMTMC} & \multicolumn{2}{c}{MSMT17} \\
\cmidrule{4-9}
 &&& mAP & Rank-1 & mAP & Rank-1 & mAP & Rank-1 \\
 \midrule
Auto-ReID \cite{autoreid} & 13.1 & 2.05 & 85.1 & 94.5 & 77.6 & 88.8 & 52.5 & 78.2 \\
RGA \cite{zhang2020relation} & 28.3 & 7.3 & 87.5 & 96.0 & 77.4 & 88.8 & 57.5 & 80.3 \\
UAL \cite{dou2022reliability} & 24.0 & 7.5 & 87.0 & 95.2 & 80.0 & 89.5 & 56.5 & 80.0 \\
BPBreID \cite{somers2023body} & 24.4 & 6.9 & 87.0 & 95.1 & 78.3 & 89.6 & 57.0 & 80.9 \\
ABD-Net \cite{chen2019abd} & 69.2 & 14.1 & 88.3 & 95.6 & 78.6 & 89.0 & 60.8 & 82.3 \\
\midrule
FBNetV2 & 5.9 & 0.572
& 84.6 & 94.1 & 75.9 & 87.4 & 58.1 & 80.5 \\

\textbf{RIB-DCS-DNAS (FBNetV2)} & 6.3 & 0.624
& {85.8} ($\uparrow$\,1.2)
& {95.3} ($\uparrow$\,1.2)
& {77.1} ($\uparrow$\,1.2)
& {88.6} ($\uparrow$\,1.2)
& {59.5} ($\uparrow$\,1.4)
& {81.9} ($\uparrow$\,1.4) \\
\midrule

FBNetV2-Large & 13.2 & 1.235
& 87.2 & 95.0 & 77.8 & 88.9 & 59.9 & 82.5 \\

\textbf{RIB-DCS-DNAS (FBNetV2-Large)} & 13.7 & 1.302
& {89.1} ($\uparrow$\,1.9)
& {95.8} ($\uparrow$\,0.8)
& {79.2} ($\uparrow$\,1.4)
& {90.2} ($\uparrow$\,1.3)
& {61.0} ($\uparrow$\,1.1)
& {83.0} ($\uparrow$\,0.5) \\
\midrule

FBNetV2-XLarge & 24.5 & 1.798
& 87.9 & 95.3 & 79.5 & 90.1 & 61.3 & 82.9 \\

\textbf{RIB-DCS-DNAS (FBNetV2-XLarge)} & 25.2 & 1.898
& \textbf{89.8} ($\uparrow$\,1.9)
& \textbf{96.4} ($\uparrow$\,1.1)
& \textbf{81.0} ($\uparrow$\,1.5)
& \textbf{91.4} ($\uparrow$\,1.3)
& \textbf{62.9} ($\uparrow$\,1.6)
& \textbf{83.6} ($\uparrow$\,0.7) \\
\bottomrule
\label{SOTA-DNAS}
\end{tabular}
}
\vspace{-3mm}
\end{table*}

\subsection{Implementation Details}
\label{sec:implementation_details}


\textbf{Training Settings of RIB Models with Fixed Backbones, including RIB-DCS-FB and RIB-CA-FB.} Following the settings in existing works~\cite{he2021transreid, zhang2020relation} for person Re-ID, random cropping, horizontal flipping, and random erasing are used to augment the training data. Label smoothing \cite{szegedy2016rethinking} is used to augment the labels of the cross-entropy loss. All models are trained with momentum SGD for $600$ epochs. The batch size is set to $256$. The momentum for SGD is set to $0.9$. The initial learning rate is set to $0.035$, and we decay the learning rate by $10$ at the $300$-th and $500$-th epochs. We set the weight decay of SGD to $0.0005$.
To stabilize the training of DCS-Attention modules, we include a warm-up process that disables channel selection for the first $10$ epochs.

\noindent\textbf{Training Settings of RIB Models with Learnable Backbones.} The search phase of RIB-DCS-DNAS and RIB-CA-DNAS models take $300$ epochs. In each epoch, the network weights are trained with $80\%$ of the training samples by SGD. The architecture parameters are trained with the remaining $20\%$ using Adam. The initial learning rate for optimizing architecture parameters is set to $0.03$, and a cosine learning rate schedule is used. The initial learning rate for network parameters is set to $0.035$, and it is decayed by $10$ at the $150$-th and $240$-th epochs.
After the search phase, we first pre-train the model with the searched architectures on ImageNet. Then we fine-tune the model on the Re-ID datasets. Adam is used to fine-tune the network. The initial learning rate is set to $0.00035$.
All models are fine-tuned for $600$ epochs. The learning rate is decayed by $10$ after $300$ and $500$ epochs.
In our work, we build RIB-DCS-DNAS models based on FBNetV2 \cite{wan2020fbnetv2}.
The settings for tuning $\eta$, which is the balance factor for the IBB, and the structure of the supernet of FBNetV2, FBNetV2-Large, and FBNetV2-XLarge are detailed in Section~\ref{sec:FBNet_ARCH} of the supplementary.

\subsection{RIB-DCS for Person Re-ID}
\label{sec::results-sw-DCS}
\noindent\textbf{Results for RIB-DCS-FB.}
In this section, we apply RIB-DCS to multiple fixed CNN backbones, including OSNet~\cite{zhou2019omni}, MobileNetV2~\cite{sandler2018mobilenetv2}, ResNet50~\cite{resnet}, and HRNet~\cite{dou2022reliability}.
The DCS-Attention modules are inserted after each convolution stage of the CNNs.
We also apply RIB-DCS to a transformer network designed for person Re-ID, TransReID~\cite{he2021transreid}.
Each self-attention module in TransReID is replaced with a DCS-Attention module.
The RIB-DCS-FB models based on the above backbones are referred to as $\textup{RIB-DCS-FB}\,(\cdot)$ where the corresponding backbone name is in the parenthesis.
We compare the performance of our RIB-DCS-FB models with existing state-of-the-art (SOTA) methods for person Re-ID on three datasets. The compared person Re-ID methods are HACNN \cite{li2018harmonious}, OSNet \cite{zhou2019omni}, Auto-ReID \cite{chen2019abd}, RGA \cite{zhang2020relation}, AANet \cite{tay2019aanet}, CAMA \cite{yang2019cama}, BAT-Net \cite{fang2019bilinear}, ABD-Net \cite{chen2019abd}, PAT \cite{li2021diverse}, AutoLoss-GMS \cite{gu2022autoloss}, UAL \cite{dou2022reliability}, BPBreID \cite{somers2023body}, PFD \cite{wang2022pose}, TransReID \cite{he2021transreid}, and PHA \cite{zhang2023pha}.
Table~\ref{SOTA} shows the performance of RIB-DCS-FB models based on OSNet, MobileNetV2, MobileNetv2-200, ResNet, HRNet, and TransReID on all three person Re-ID datasets. We evaluate the performance of all RIB-DCS-FB models with backbones pretrained on ImageNet. For the compact RIB-DCS-FB models, including RIB-DCS-FB (OSNet), RIB-DCS-FB (MobileNetV2), and RIB-DCS-FB (MobileNetv2-200), we also evaluate their performance by training them from scratch on the person Re-ID datasets.
It is observed in Table~\ref{SOTA} that all the RIB-DCS-FB models outperform existing SOTA methods using the same corresponding feature learning backbones. The best RIB-DCS-FB model, RIB-DCS-FB (TransReID), achieves SOTA results on all three person Re-ID datasets. For example, RIB-DCS-FB (TransReID) outperforms PHA, which also uses TransReID as the feature learning backbone, by $1.3\%$ in mAP for person Re-ID on MSMT17.

\noindent\textbf{Results for RIB-DCS-DNAS.}
 Differentiable Neural Architecture Search (DNAS) methods have demonstrated considerable potential in deriving neural networks that are both efficient and effective~\cite{wan2020fbnetv2, autoreid}. Therefore, we apply RIB-DCS-DNAS to search for architectures of efficient DNNs incorporating DCS-Attention that achieve compelling performance for person Re-ID. The evaluation is performed with three versions of the supernets based on FBNetV2. For comparison, we also evaluate the performance of the architectures searched with these three versions of supernets without RIB-DCS.
It is observed in Table~\ref{SOTA-DNAS} that all RIB-DCS-DNAS models significantly outperform corresponding baseline DNAS models. For instance, RIB-DCS-DNAS (FBNetV2-XLarge) outperforms FBNetV2-XLarge by $1.6\%$ in mAP on MSMT17. In addition, with a small number of FLOPs, RIB-DCS-DNAS models outperform existing methods that require significantly more computational resources. Notably, RIB-DCS-DNAS (FBNetV2-XLarge), with only $1.898$G FLOPs, outperforms ABD-Net with $14.1$G FLOPs by $2.1\%$ in mAP on the MSMT17 dataset, demonstrating the efficiency and effectiveness of RIB-DCS-DNAS models.
\begin{table*}[!htbp]
\centering
\caption{Performance of RIB-CA-FB models with the state-of-the-art fixed backbone, TransReID, on Market1501, DukeMTMC, and MSMT17. The improvements of the RIB models over their corresponding baselines are reported in parentheses after each metric value.}
\vspace{-2mm}
\label{tab:rib_channel_fixed}
\resizebox{0.9\linewidth}{!}{
\begin{tabular}{lcccccccc}
\toprule
\multirow{2}{*}{Method} & \multirow{2}{*}{Params(M)} & \multirow{2}{*}{FLOPs(G)} & \multicolumn{2}{c}{Market1501} & \multicolumn{2}{c}{DukeMTMC} & \multicolumn{2}{c}{MSMT17} \\
\cmidrule{4-9}
 &&& mAP & Rank-1 & mAP & Rank-1 & mAP & Rank-1 \\
\midrule
TransReID+SE~\cite{hu2018squeeze}
& 100.6 & 19.5 & 89.5 & 95.6 & 82.6 & 90.9 & 68.0 & 85.5 \\

\textbf{RIB-CA-FB (TransReID+SE)}
& 100.8 & 19.8
& {90.6} ($\uparrow$\,1.1)
& {96.6} ($\uparrow$\,1.0)
& {83.7} ($\uparrow$\,1.1)
& {91.8} ($\uparrow$\,0.9)
& {69.1} ($\uparrow$\,1.1)
& {86.5} ($\uparrow$\,1.0) \\
\midrule

TransReID+CBAM~\cite{CBAM}
& 100.9 & 19.7 & 89.7 & 95.8 & 82.8 & 91.1 & 68.4 & 85.9 \\

\textbf{RIB-CA-FB (TransReID+CBAM)}
& 101.3 & 20.0
& {90.9} ($\uparrow$\,1.2)
& {96.7} ($\uparrow$\,0.9)
& {83.9} ($\uparrow$\,1.1)
& \textbf{92.0} ($\uparrow$\,0.9)
& \textbf{69.5} ($\uparrow$\,1.1)
& \textbf{87.0} ($\uparrow$\,1.1) \\
\midrule

TransReID+MCA~\cite{MCA}
& 101.6 & 19.9 & 90.1 & 96.0 & 83.0 & 91.2 & 68.4 & 85.7 \\

\textbf{RIB-CA-FB (TransReID+MCA)}
& 102.1 & 20.3
& \textbf{91.0} ($\uparrow$\,0.9)
& \textbf{96.8} ($\uparrow$\,0.8)
& \textbf{84.1} ($\uparrow$\,1.1)
& {91.9} ($\uparrow$\,0.7)
& \textbf{69.5} ($\uparrow$\,1.1)
& {86.8} ($\uparrow$\,1.1) \\

\bottomrule
\end{tabular}
}
\end{table*}

\begin{table*}[!htbp]
\centering
\caption{Performance of RIB-CA-DNAS with FBNetV2, FBNetV2-Large, and FBNetV2-XLarge on Market1501, DukeMTMC, and MSMT17. The improvements of the RIB models over their corresponding baselines are reported in parentheses after each metric value.}
\vspace{-2mm}
\label{tab:rib_channel_learnable}
\resizebox{0.95\linewidth}{!}{
\begin{tabular}{lcccccccc}
\toprule
\multirow{2}{*}{Method} & \multirow{2}{*}{Params(M)} & \multirow{2}{*}{FLOPs(G)} &
\multicolumn{2}{c}{Market1501} & \multicolumn{2}{c}{DukeMTMC} &
\multicolumn{2}{c}{MSMT17} \\
\cmidrule{4-9}
 &&& mAP & Rank-1 & mAP & Rank-1 & mAP & Rank-1 \\
\midrule
FBNetV2+MCA~\cite{MCA}
& 5.97 & 0.586 & 84.8 & 94.3 & 76.2 & 87.7 & 58.4 & 80.8 \\

\textbf{RIB-CA-DNAS (FBNetV2+MCA)}
& 6.41 & 0.603
& {85.6} ($\uparrow$\,0.8)
& {95.1} ($\uparrow$\,0.8)
& {76.9} ($\uparrow$\,0.7)
& {88.8} ($\uparrow$\,1.1)
& {59.3} ($\uparrow$\,0.9)
& {81.8} ($\uparrow$\,1.0) \\
\midrule

FBNetV2-Large+MCA~\cite{MCA}
& 13.4 & 1.262 & 85.9 & 95.2 & 78.0 & 89.1 & 60.2 & 82.5 \\

\textbf{RIB-CA-DNAS (FBNetV2-Large+MCA)}
& 13.8 & 1.298
& {86.8} ($\uparrow$\,0.9)
& {95.9} ($\uparrow$\,0.7)
& {79.0} ($\uparrow$\,1.0)
& {89.9} ($\uparrow$\,0.8)
& {60.9} ($\uparrow$\,0.7)
& {82.9} ($\uparrow$\,0.4) \\
\midrule

FBNetV2-XLarge+MCA~\cite{MCA}
& 24.9 & 1.868 & 88.5 & 95.5 & 79.8 & 90.2 & 61.7 & 82.9 \\

\textbf{RIB-CA-DNAS (FBNetV2-XLarge+MCA)}
& 25.5 & 1.985
& \textbf{89.7} ($\uparrow$\,1.2)
& \textbf{96.2} ($\uparrow$\,0.7)
& \textbf{80.6} ($\uparrow$\,0.8)
& \textbf{91.1} ($\uparrow$\,0.9)
& \textbf{62.8} ($\uparrow$\,1.1)
& \textbf{83.5} ($\uparrow$\,0.6) \\

\bottomrule
\end{tabular}}
\vspace{-2mm}
\end{table*}

\vspace{-2mm}

\subsection{RIB-CA for Person Re-ID}
\label{sec:ChannelAttn_Results}

\noindent\textbf{Results for RIB-CA-FB.}
We apply RIB-CA to the SOTA person Re-ID backbone, TransReID \cite{he2021transreid}. We first insert existing channel attention modules, including Squeeze-and-Excitation (SE)~\cite{hu2018squeeze}, Convolutional Block Attention Module (CBAM)~\cite{CBAM}, and Moment Channel Attention (MCA)~\cite{MCA}, after each transformer block in TransReID, leading to TransReID+SE, TransReID+CBAM, and TransReID+MCA.
The RIB-CA models based on TransReID+SE, TransReID+CBAM, and TransReID+MCA are denoted as RIB-CA-FB (TransReID+SE), RIB-CA-FB (TransReID+CBAM), and RIB-CA-FB (TransReID+MCA).
It is observed in Table~\ref{tab:rib_channel_fixed} that the RIB-CA-FB models consistently outperform their corresponding baseline models across all the person Re-ID benchmarks. For instance, RIB-CA-FB (TransReID+CBAM) outperforms TransReID+CBAM by $1.2\%$ in mAP for person Re-ID on Market1501, demonstrating the effectiveness of learning informative channel attention weights with RIB-CA.

\noindent\textbf{Results for RIB-CA-DNAS.}
We further evaluate RIB-CA under the setting where existing channel attention modules are integrated into the FBNetV2 supernet, after which the backbone architecture is optimized using the DNAS algorithm. The experiments are performed using the best channel attention module, MCA~\cite{MCA}.
The models incorporating MCA based on FBNetV2, FBNetV2-Large, and FBNetV2-XLarge are denoted as FBNetV2+MCA, FBNetV2-Large+MCA, and FBNetV2-XLarge+MCA. It is observed in Table~\ref{tab:rib_channel_learnable} that the RIB-CA-DNAS models consistently outperform the baseline models incorporating existing attention modules into the DNAS backbones. For instance, RIB-CA-DNAS (FBNetV2-XLarge+MCA) outperforms FBNetV2-XLarge+MCA by $1.2\%$ in mAP on Market1501, demonstrating the advantages of learning informative channel attention weights with learnable backbones.

\begin{table}[!htbp]
 \centering
  \caption{Ablation study on the effects of IBB. The comparisons are performed on Market1501.}
  \vspace{-2mm}
 \label{tab:ablation-IB-loss}
 \resizebox{\columnwidth}{!}{
\begin{tabular}{lcccc}
\toprule
Model                    & mAP  &Rank-1       & IBB           & IB Loss  \\
\midrule
MobileNetV2     & 84.5 & 93.9  & 0.062  & -0.018     \\
MobileNetV2+Self-Attention   & 85.0 & 94.1  & 0.060  & -0.019    \\
MobileNetV2+DCS-Attention   & 85.8 & 94.5 & 0.055   & -0.027  \\
\textbf{RIB-DCS-FB (MobileNetV2)}  & \textbf{86.4} & \textbf{95.1} & \textbf{0.028}& \textbf{-0.064} \\
\midrule
TransReID    & 88.9 & 95.2  & 0.059  & -0.020     \\
TransReID+DCS-Attention   & 90.4 & 95.8 & 0.050   & -0.028  \\
\textbf{RIB-DCS-FB (TransReID)}   & \textbf{91.3} & \textbf{97.0} & \textbf{0.027}& \textbf{-0.067} \\
\midrule
FBNetV2-Large    & 87.2 &95.0  & 0.061  & -0.019     \\
FBNetV2-Large+Self-Attention  & 87.9 &95.2  & 0.059  & -0.020    \\
FBNetV2-Large+DCS-Attention  & 88.7 & 95.4 & 0.053   & -0.028  \\
\textbf{RIB-DCS-DNAS (FBNetV2-Large)}   & \textbf{89.1} & \textbf{95.8} & \textbf{0.028}& \textbf{-0.065} \\
\bottomrule
\end{tabular}
 }
\vspace{-3mm}
\end{table}

\subsection{Ablation Study on the Effects of IBB}
\label{sec:ablation_IB_loss}
In this section, we study the effectiveness of the variational upper bound for the IB loss, which is IBB, in the training of RIB-DCS-FB and RIB-DCS-DNAS models for person Re-ID.
 The comparisons are performed on Market1501 with three base models, which are MobileNetV2, TransReID, and FBNetV2-Large.
 Since the vanilla MobileNetV2 and FBNetV2-Large do not contain attention modules, we add two baseline models that incorporate vanilla self-attention modules into MobileNetV2 and FBNetV2-Large, which are referred to as MobileNetV2+Self-Attention and FBNetV2-Large+Self-Attention.
We also evaluate three ablation models, which incorporate the DCS-Attention modules into the backbones of MobileNetV2, TransReID, and FBNetV2-Large. The above ablation models that only incorporate DCS-Attention without reducing IBB are referred to as   MobileNetV2+DCS-Attention, TransReID+DCS-Attention, and FBNetV2-Large+DCS-Attention, respectively.
It is observed in Table~\ref{tab:ablation-IB-loss} that although optimizing the regular cross-entropy loss and the triplet loss to train DNNs with DCS-Attention can decrease the IB loss so that the IB principle, learning features more strongly correlated with class labels while decreasing their correlation with the input, is adhered to, the IB loss can be decreased further by a considerable extent by optimizing the IBB explicitly.
In particular, our RIB-DCS-FB and RIB-DCS-DNAS models outperform the baseline models and the ablation models by large margins in terms of IB loss, mAP, and Rank-1 accuracy. For instance, RIB-DCS-FB (TransReID) further decreases the IB loss of TransReID+DCS-Attention by $0.039$, leading to a $0.9\%$ increase in mAP and a $1.2\%$ increase in Rank-1 accuracy for person Re-ID on Market1501.

\subsection{Comparison between IBB and Existing Upper Bounds for the IB Loss}
\label{sec:ablation_IBB}
We compare the proposed variational upper bound for the IB loss, IBB, with existing works deriving the upper bound for the IB loss~\cite{CLUB, VIB-DaiZGW18, VIB-SrivastavaDGAA21}.
Although CLUB~\cite{CLUB} also proposes an upper bound for the IB loss, the derivation of the upper bound in CLUB assumes that the $p(F|X)$ is known, where $F$ and $X$ are the random variables representing the learned feature and the input feature. When $p(F|X)$ is unknown, CLUB adopts a Gaussian Mixture Model (GMM) parameterized by an external neural network to approximate the upper bound. In contrast, our novel variational upper bound, $\textup{IBB}$, for the IB loss does not have such an assumption. As shown in Lemma~\ref{lemma:I-X-tildeX-upper-bound} in Section~\ref{sec:proofs} of the supplementary, $p(F|X)$ can be directly computed from the training data without the need for training another neural network. Let $T_0$ be the computational complexity of a forward and backward computation for the neural network with regard to each training data.
The computational complexity for calculating the IBB is only $\Theta(nCT_0 + C^2)$.
In contrast, computing the upper bound for the IB loss proposed in CLUB~\cite{CLUB} requires a substantially higher computational complexity of more than $\Theta(n^2 T_0)$. We note that $\Theta(n^2 T_0)$ corresponds exclusively to the upper bound for the mutual information $I(F,X)$, while CLUB~\cite{CLUB} additionally requires the computation of the lower bound of the mutual information $I(F,Y)$ so as to find the upper bound for the IB loss $I(F,X)-I(F,Y)$.
Details on the complexity analysis of CLUB and IBB are deferred to Section~\ref{sec:complexity} of the supplementary.

In addition, the upper bound for the IB loss proposed in VIB~\cite{VIB-DaiZGW18, VIB-SrivastavaDGAA21} imposes an unrealistic Gaussian distribution assumption on the hidden features of the DNNs. Instead of reducing the IB loss or its upper bound, APIB~\cite{IB-lasso-APIB} reduces only an approximation to the IB loss.
To compare the performance of IBB with VIB~\cite{VIB-DaiZGW18, VIB-SrivastavaDGAA21} and APIB~\cite{IB-lasso-APIB}, we conduct an ablation study by replacing IBB with VIB, APIB, and CLUB for RIB-DCS-FB (TransReID) for person Re-ID on Market1501, with the total training time on a single NVIDIA A100 GPU reported for our method and the competing baselines.
It is observed in Table~\ref{tab:compare-IBB-VIB-APIB} that the unrealistic Gaussian distribution assumption on the hidden features imposed by VIB~\cite{VIB-DaiZGW18, VIB-SrivastavaDGAA21} and the IB approximation in APIB~\cite{IB-lasso-APIB} lead to degraded performance compared with our reduction of IBB. For instance, the model based on IBB outperforms the model based on VIB by $1.4\%$ in mAP on Market1051. Moreover, the model based on IBB outperforms the model based on CLUB by $1.0\%$ in mAP while taking only $56.1\%$ of the training time of the model based on CLUB.

\begin{table}[!htbp]
\centering
\caption{Comparison of different methods for reducing the IB loss. The study is performed on Market1501 with the feature backbone RIB-DCS-FB (TransReID).}
\label{tab:compare-IBB-VIB-APIB}
\vspace{-2mm}
\resizebox{0.95\columnwidth}{!}{
\begin{tabular}{lccc}
\toprule
{Methods}  & Training Time (Hours)  & {mAP} & {Rank-1}  \\
\midrule
VIB~\cite{VIB-DaiZGW18, VIB-SrivastavaDGAA21} & 32.2  & 89.9   & 95.8 \\
APIB~\cite{IB-lasso-APIB} & 32.6 & 90.1 &  96.0 \\
CLUB~\cite{CLUB} & 58.6  & 90.3 & 95.9 \\
\textbf{IBB (Ours)} & 32.9 & \textbf{91.3}  & \textbf{97.0} \\
\bottomrule
\end{tabular}
}
\end{table}

\vspace{-2mm}
\subsection{Evaluation on Occluded-Duke Dataset}
\label{sec:occluded_duke}
To further verify the robustness of the proposed method for person Re-ID under occlusion, we conduct experiments on the Occluded-Duke~\cite{miao2019pose} dataset. Images in Occluded-Duke contain severe partial occlusions and spatial misalignments caused by diverse viewing angles and pedestrian overlaps, making person re-identification particularly challenging. We evaluate both TransReID~\cite{he2021transreid} and SPT~\cite{tan2024occluded}, along with the RIB-DCS-FB models using TransReID and SPT as backbones, under the same training and inference configuration described in Section~\ref{sec:implementation_details}.
SPT is the state-of-the-art occluded person Re-ID model based on a transformer architecture that uses vanilla self-attention modules similar to TransReID.
It is observed in Table~\ref{tab:occluded_duke} that RIB-DCS significantly improves the performance of both TransReID and SPT, consistently achieving higher mAP and Rank-1 accuracy.
For example, RIB-DCS-FB (SPT) outperforms SPT by $1.2\%$ in mAP, which demonstrates that RIB effectively learns informative and semantically relevant representations that are robust to the occlusion and background in the input image.

\begin{table}[!htbp]
\centering
\caption{Performance comparison on Occluded-Duke under the same training and inference settings without re-ranking.}
\vspace{-2mm}

\resizebox{0.725\columnwidth}{!}{
\begin{tabular}{lcc}
\toprule
{Method}  & {mAP} & {Rank-1} \\
\midrule
TransReID~\cite{he2021transreid}  & 59.2 & 66.4 \\
\textbf{RIB-DCS-FB (TransReID)}   & \textbf{60.5} & \textbf{67.9} \\
\midrule
SPT~\cite{tan2024occluded}  & 63.0 & 74.7 \\
\textbf{RIB-DCS-FB (SPT)}  & \textbf{64.2} & \textbf{75.7} \\
\bottomrule
\end{tabular}
}
\label{tab:occluded_duke}
\vspace{-4mm}
\end{table}
\subsection{Training Time Evaluation}
\label{sec:training_time}
In this section, we assess the training costs of our RIB-DCS-FB and RIB-DCS-DNAS models and the corresponding baseline models on Market1501. This evaluation is performed on one NVIDIA A100 GPU, utilizing an effective batch size of 256 images. The total training duration for 600 epochs is reported. It is observed in Table~\ref{tab:training_time} that our RIB-DCS-FB and RIB-DCS-DNAS models only marginally increase the training time of the corresponding baselines while significantly improving the performance. For example, the training time of RIB-DCS-FB (TransReID) is only $5.7\%$ longer than the training time of TransReID, while RIB-DCS-FB (TransReID) outperforms TransReID by $2.4\%$ in mAP on Market1501.
\begin{table}[!htbp]
 \centering
  \caption{Training time (hours) comparisons on Market1501 between RIB-DCS-FB (MobileNetV2), RIB-DCS-FB (TransReID), RIB-DCS-DNAS (FBNetV2-Large), and their corresponding baseline models.}
  \vspace{-2mm}
 \label{tab:training_time}
 \resizebox{1\columnwidth}{!}{
\begin{tabular}{lcccc}
\toprule
Model                   & Training Time  & mAP  &R1      \\
\midrule
MobileNetV2    & 10.3 & 84.5& 93.9     \\
\textbf{RIB-DCS-FB (MobileNetV2)}   & 10.9 & \textbf{86.4} & \textbf{95.1}  \\
\midrule
TransReID    & 31.1  & 88.9 & 95.2      \\
\textbf{RIB-DCS-FB (TransReID)}  & 32.9 & \textbf{91.3} & \textbf{97.0}  \\
\midrule
FBNetV2-Large & 16.7    & 87.2 &95.0 \\
\textbf{RIB-DCS-DNAS (FBNetV2-Large)}  & 17.5    & \textbf{89.1} &\textbf{95.8}  \\
\bottomrule
\end{tabular}
 }
\vspace{-4mm}
\end{table}

\subsection{Search Time Evaluation}
\label{sec:search_time}
In this section, we assess the search costs of our RIB-DCS-DNAS and RIB-CA-DNAS models and the corresponding baseline models, which do not incorporate IBB into the training loss. The evaluation is conducted on Market1501 using one NVIDIA A100 GPU. It is observed in Table~\ref{tab:search_time} that our RIB only marginally increases the search time of the baseline models, FBNetV2-XLarge+MCA and FBNetV2-XLarge, while significantly increasing their performance for person Re-ID. For example, RIB-DCS-DNAS (FBNetV2-XLarge) requires only an additional $7.2\%$ of search time compared to FBNetV2-XLarge, while achieving a significant improvement of $1.2\%$ in mAP.

\begin{table}[!htbp]
 \centering
  \caption{Searching time (hours) comparisons on Market1501 between RIB-CA-DNAS (FBNetV2-XLarge+MCA), RIB-DCS-DNAS (FBNetV2-XLarge) and their corresponding baseline models.}
  \vspace{-2mm}
 \label{tab:search_time}
 \resizebox{1\columnwidth}{!}{
\begin{tabular}{lcccc}
\toprule
Model                   & Training Time  & mAP  &R1      \\
\midrule
FBNetV2-XLarge+MCA & 2.26    & 87.9 &95.3 \\
\textbf{RIB-CA-DNAS (FBNetV2-XLarge+MCA)}  & 2.36    & \textbf{89.8} &\textbf{96.4}  \\
\midrule
FBNetV2-XLarge & 2.22    & 88.5 &95.5 \\
\textbf{RIB-DCS-DNAS (FBNetV2-XLarge)}  & 2.38    & \textbf{89.7} &\textbf{96.2}  \\
\bottomrule
\end{tabular}
 }
 \vspace{-4mm}
\end{table}
\section{Conclusion}
In this paper, we propose a novel distribution-free and efficient Information Bottleneck (IB) reduction framework, termed RIB. Motivated by the IB principle, RIB aims to learn informative attention weights for DNNs with self-attention and channel attention modules by explicitly reducing the IB loss.
To this end, we present and prove a novel distribution-free and efficient variational upper bound for the IB loss, termed IBB, which can be optimized by standard SGD algorithms. To apply RIB to self-attention modules, we develop a novel Differentiable Channel Selection Attention method, or DCS-Attention, which selects more informative feature channels to compute more informative attention weights than vanilla self-attention, leading to RIB-DCS.
We also apply RIB to DNNs to existing channel attention modules, leading to RIB-CA which encourages learning informative channel attention weights. Both RIB-DCS and RIB-CA are applied to fixed and learnable backbones. Extensive experiments, including occluded person Re-ID, cross-domain person Re-ID, and self-supervised person Re-ID, show that the proposed RIB framework consistently improves the performance of DNNs with self-attention and channel attention.


%

\appendices

\ifCLASSOPTIONcaptionsoff
  \newpage
\fi

\begin{appendices}
\renewcommand{\thesection}{\Alph{section}}
\renewcommand{\thesubsection}{\thesection.\arabic{subsection}}
\renewcommand{\thesubsubsection}{\thesubsection.\arabic{subsubsection}}

\section{Additional Experiment Results}
\label{sec:add_exp_sup}
\subsection{Details on the Datasets}
\label{sec:dataset_sup}
Market-1501 \cite{market} consists of $32,668$ annotated person images of $1,501$ identities, in which $12,936$ images of $751$ identities are used for training and $19,732$ images of $750$ identities are used for testing. All images are shot with $6$ different cameras. DukeMTMC \cite{duke} was originally proposed for video-based Re-ID. It has $16,522$ training images of $702$ identities. The other $2,228$ query images of $702$ identities and $17,661$ gallery images are in the test set. MSMT17 \cite{msmt} is the largest and the most challenging public person re-ID dataset, which includes $126,441$ person images of $4,101$ identities detected by Faster R-CNN \cite{ren2015faster}. The training set has $32,621$ person images of $1,041$ identities, and the test set has $93,820$ images of the other $3,060$ identities. CUHK03 \cite{cuhk03} consists of $13,164$ person images of $1,467$ identities, of which images of $767$ identities are in the training set, and the remaining $700$ identities are in the test set.
\subsection{Additional Implementation Details}
\label{sec:FBNet_ARCH}
\noindent\textbf{Tuning Hyper-Parameters by Cross-Validation.} To decide the best balancing factor $\eta$ for the IBB in the overall training loss, we perform 5-fold cross-validation on $10\%$ of the training data. The value of $\eta$ is selected from $\{0.1, 0.25, 0.5, 0.75, 1, 1.25, 1.5, 1.75, 2\}$. The validation results suggest that $\eta=1$ works best for all our RIB-DCS and RIB-CA models on different datasets.

\noindent\textbf{FBNetV2 Supernet Structures.} FBNetV2~\cite{wan2020fbnetv2} employs the Differentiable Neural Architecture Search (DNAS) algorithm to learn the backbone architecture by selecting neural options in a supernet. The search space of FBNetV2 features a masking mechanism based on Gumbel-Softmax for feature map reuse so that it can efficiently search for the number of filters of each convolution layer. In addition, we designed two larger supernet architectures of FBNetV2, termed FBNetV2-Large and FBNetV2-XLarge. The depth of FBNetV2-Large is $3$ times the depth of the original supernet of FBNetV2. The depth of FBNetV2-XLarge is $6$ times the depth of the original supernet of FBNetV2.
The structure of the supernet of FBNetV2, FBNetV2-Large, and FBNetV2-XLarge are detailed in Table~\ref{fbnet3} and Table~\ref{fbnet6}, respectively.
\begin{table}[!ht]
\centering
\caption{The supergraph of FBNetV2 and FBNetV2-Large (3  $\times$ depth), with block expansion rate $e$, number of filters $f$, number of blocks $n$, and stride of first block $s$ \,\, Tuples of three values in the column of expansion rate $e$ and number of filters $f$ represent the lowest value, highest, and steps between options (low, high, steps). }
\vspace{-2mm}
\label{fbnet3}
\resizebox{\columnwidth}{!}{
\begin{tabular}{cccccc}
\hline
\multicolumn{1}{c|}{\multirow{2}{*}{Operator}} & \multicolumn{1}{c|}{\multirow{2}{*}{$e$}} & \multicolumn{1}{c|}{\multirow{2}{*}{$f$}} & \multicolumn{2}{c|}{$n$}                                          & \multirow{2}{*}{$s$} \\ \cline{4-5}
\multicolumn{1}{c|}{}                          & \multicolumn{1}{c|}{}                     & \multicolumn{1}{c|}{}                     & \multicolumn{1}{c|}{FBNetV2} & \multicolumn{1}{c|}{FBNetV2-Large} &                      \\ \hline
\multicolumn{1}{c|}{conv2d}                    & \multicolumn{1}{c|}{1}                    & \multicolumn{1}{c|}{16}                   & \multicolumn{1}{c|}{1}       & \multicolumn{1}{c|}{1}             & 2                    \\
\multicolumn{1}{c|}{bottleneck}                & \multicolumn{1}{c|}{1}                    & \multicolumn{1}{c|}{(12, 16, 4)}          & \multicolumn{1}{c|}{1}       & \multicolumn{1}{c|}{1}             & 1                    \\
\multicolumn{1}{c|}{bottleneck}                & \multicolumn{1}{c|}{(0.75, 3.25, 0.5)}    & \multicolumn{1}{c|}{(16, 28, 4)}          & \multicolumn{1}{c|}{1}       & \multicolumn{1}{c|}{1}             & 2                    \\
\multicolumn{1}{c|}{bottleneck}                & \multicolumn{1}{c|}{(0.75, 3.25, 0.5)}    & \multicolumn{1}{c|}{(16, 28, 4)}          & \multicolumn{1}{c|}{2}       & \multicolumn{1}{c|}{6}             & 1                    \\ \hline
\multicolumn{6}{c}{DCS-Attention Module or Channel Attention Module Inserted} \\ \hline
\multicolumn{1}{c|}{bottleneck}                & \multicolumn{1}{c|}{(0.75, 3.25, 0.5)}    & \multicolumn{1}{c|}{(16, 40, 8)}          & \multicolumn{1}{c|}{1}       & \multicolumn{1}{c|}{3}             & 2                    \\
\multicolumn{1}{c|}{bottleneck}                & \multicolumn{1}{c|}{(0.75, 3.25, 0.5)}    & \multicolumn{1}{c|}{(16, 40, 8)}          & \multicolumn{1}{c|}{2}       & \multicolumn{1}{c|}{6}             & 1                    \\ \hline
\multicolumn{6}{c}{DCS-Attention Module or Channel Attention Module Inserted}                                                                                                                                                                                            \\ \hline
\multicolumn{1}{c|}{bottleneck}                & \multicolumn{1}{c|}{(0.75, 3.25, 0.5)}    & \multicolumn{1}{c|}{(48, 96, 8)}          & \multicolumn{1}{c|}{1}       & \multicolumn{1}{c|}{3}             & 2                    \\
\multicolumn{1}{c|}{bottleneck}                & \multicolumn{1}{c|}{(0.75, 3.25, 0.5)}    & \multicolumn{1}{c|}{(48, 96, 8)}          & \multicolumn{1}{c|}{2}       & \multicolumn{1}{c|}{6}             & 1                    \\
\multicolumn{1}{c|}{bottleneck}                & \multicolumn{1}{c|}{(0.75, 4.5, 0.75)}    & \multicolumn{1}{c|}{(72, 128, 8)}         & \multicolumn{1}{c|}{4}       & \multicolumn{1}{c|}{12}            & 1                    \\ \hline
\multicolumn{6}{c}{DCS-Attention Module or Channel Attention Module Inserted}                                                                                                                                                                             \\ \hline
\multicolumn{1}{c|}{bottleneck}                & \multicolumn{1}{c|}{(0.75, 4.5, 0.75)}    & \multicolumn{1}{c|}{(112, 216, 8)}        & \multicolumn{1}{c|}{1}       & \multicolumn{1}{c|}{3}             & 2                    \\
\multicolumn{1}{c|}{bottleneck}                & \multicolumn{1}{c|}{(0.75, 4.5, 0.75)}    & \multicolumn{1}{c|}{(112, 216, 8)}        & \multicolumn{1}{c|}{3}       & \multicolumn{1}{c|}{3}             & 1                    \\ \hline
\multicolumn{6}{c}{DCS-Attention Module or Channel Attention Module Inserted}                                                                                                                                                                         \\ \hline
\multicolumn{1}{c|}{conv2d}                    & \multicolumn{1}{c|}{-}                    & \multicolumn{1}{c|}{1984}                 & \multicolumn{1}{c|}{1}       & \multicolumn{1}{c|}{1}             & 1                    \\
\multicolumn{1}{c|}{fc}                        & \multicolumn{1}{c|}{-}                    & \multicolumn{1}{c|}{-}                    & \multicolumn{1}{c|}{1}       & \multicolumn{1}{c|}{-}             & -                    \\ \hline
\end{tabular}
}
\vspace{-2mm}
\end{table}

\begin{table}[!htbp]
\centering
\caption{The supergraph of FBNetV2-XLarge (6  $\times$ depth), with block expansion rate $e$, number of filters $f$, number of blocks $n$, and stride of first block $s$ \,\, Tuples of three values in the column of expansion rate $e$ and number of filters $f$ represent the lowest value, highest, and steps between options (low, high, steps). }
\label{fbnet6}
\resizebox{0.85\columnwidth}{!}{
\begin{tabular}{ccccc}
\hline
\multicolumn{1}{c|}{\multirow{2}{*}{Operator}} & \multicolumn{1}{c|}{\multirow{2}{*}{$e$}} & \multicolumn{1}{c|}{\multirow{2}{*}{$f$}} & \multicolumn{1}{c|}{\multirow{2}{*}{$n$}} & \multirow{2}{*}{$s$} \\
\multicolumn{1}{c|}{}                          & \multicolumn{1}{c|}{}                     & \multicolumn{1}{c|}{}                     & \multicolumn{1}{c|}{}                     &                      \\ \hline
\multicolumn{1}{c|}{conv2d}                    & \multicolumn{1}{c|}{1}                    & \multicolumn{1}{c|}{16}                   & \multicolumn{1}{c|}{1}                    & 2                    \\
\multicolumn{1}{c|}{bottleneck}                & \multicolumn{1}{c|}{1}                    & \multicolumn{1}{c|}{(12, 16, 4)}          & \multicolumn{1}{c|}{1}                    & 1                    \\
\multicolumn{1}{c|}{bottleneck}                & \multicolumn{1}{c|}{(0.75, 6.25, 0.5)}    & \multicolumn{1}{c|}{(16, 28, 4)}          & \multicolumn{1}{c|}{1}                    & 2                    \\
\multicolumn{1}{c|}{bottleneck}                & \multicolumn{1}{c|}{(0.75, 6.25, 0.5)}    & \multicolumn{1}{c|}{(16, 28, 4)}          & \multicolumn{1}{c|}{12}                   & 1                    \\ \hline
\multicolumn{5}{c}{DCS-Attention Module or Channel Attention Module Inserted}                                                                                                                                                                    \\ \hline
\multicolumn{1}{c|}{bottleneck}                & \multicolumn{1}{c|}{(0.75, 6.25, 0.5)}    & \multicolumn{1}{c|}{(16, 40, 8)}          & \multicolumn{1}{c|}{6}                    & 2                    \\
\multicolumn{1}{c|}{bottleneck}                & \multicolumn{1}{c|}{(0.75, 6.25, 0.5)}    & \multicolumn{1}{c|}{(16, 40, 8)}          & \multicolumn{1}{c|}{12}                   & 1                    \\ \hline
\multicolumn{5}{c}{DCS-Attention Module or Channel Attention Module Inserted}                                                                                                                                                                    \\ \hline
\multicolumn{1}{c|}{bottleneck}                & \multicolumn{1}{c|}{(0.75, 6.25, 0.5)}    & \multicolumn{1}{c|}{(48, 96, 8)}          & \multicolumn{1}{c|}{6}                    & 2                    \\
\multicolumn{1}{c|}{bottleneck}                & \multicolumn{1}{c|}{(0.75, 6.25, 0.5)}    & \multicolumn{1}{c|}{(48, 96, 8)}          & \multicolumn{1}{c|}{12}                   & 1                    \\
\multicolumn{1}{c|}{bottleneck}                & \multicolumn{1}{c|}{(0.75, 7.5, 0.75)}    & \multicolumn{1}{c|}{(72, 128, 8)}         & \multicolumn{1}{c|}{24}                   & 1                    \\ \hline
\multicolumn{5}{c}{DCS-Attention Module or Channel Attention Module Inserted}                                                                                                                                                                    \\ \hline
\multicolumn{1}{c|}{bottleneck}                & \multicolumn{1}{c|}{(0.75, 7.5, 0.75)}    & \multicolumn{1}{c|}{(112, 216, 8)}        & \multicolumn{1}{c|}{6}                    & 2                    \\
\multicolumn{1}{c|}{bottleneck}                & \multicolumn{1}{c|}{(0.75, 7.5, 0.75)}    & \multicolumn{1}{c|}{(112, 216, 8)}        & \multicolumn{1}{c|}{6}                    & 1                    \\ \hline
\multicolumn{5}{c}{DCS-Attention Module or Channel Attention Module Inserted}                                                                                                                                                                    \\ \hline
\multicolumn{1}{c|}{conv2d}                    & \multicolumn{1}{c|}{-}                    & \multicolumn{1}{c|}{1984}                 & \multicolumn{1}{c|}{1}                    & 1                    \\
\multicolumn{1}{c|}{avgpool}                   & \multicolumn{1}{c|}{-}                    & \multicolumn{1}{c|}{-}                    & \multicolumn{1}{c|}{1}                    & 1                    \\
\multicolumn{1}{c|}{fc}                        & \multicolumn{1}{c|}{-}                    & \multicolumn{1}{c|}{-}                    & \multicolumn{1}{c|}{-}                    & -                    \\ \hline
\end{tabular}
}
\vspace{-2mm}
\end{table}

\subsection{Comparison with Existing Attention Masking Method}
\label{sec:moda}
In this section, we compare RIB-DCS with an existing attention masking method, MODA~\cite{MODA}, which is proposed to mitigate attention imbalance in multimodal Transformers by introducing a non-learnable modular attention mask. MODA constructs its mask by combining a predefined decay prior with pseudo-attention statistics extracted from intermediate activations, assigning progressively larger negative penalties to token pairs based on their positional distance in the token sequence. In the single-modality setting, such as person Re-ID, MODA can be applied by treating all visual patches as a single token stream. The same decay mechanism is used to regulate how strongly a patch is allowed to attend to patches that are sequentially distant. As a result, MODA produces a mask on the attention weights that enforces a token-level structural constraint that suppresses excessive long-range attention and promotes more balanced attention distributions.

\begin{table}[!htbp]
 \centering
  \caption{Comparison between RIB-DCS and existing attention masking method, MODA~\cite{MODA}, for person Re-ID on Market1501.}
 \label{tab:MODA_compare}
\resizebox{0.8\columnwidth}{!}{
\begin{tabular}{lccccc}
\toprule
Model                   & FLOPs  & mAP  &Rank-1      \\
\midrule
TransReID    & 19.3 G  & 88.9 & 95.2      \\
MODA (TransReID)   & 19.4 G & 89.4 & 95.3 \\
\textbf{RIB-DCS-FB (TransReID)}  &19.7 G & \textbf{91.3} & \textbf{97.0}  \\
\bottomrule
\end{tabular}
}
\end{table}

However, such a positional decay prior may not be beneficial for person Re-ID, where discriminative visual patterns are often scattered across spatially distant body regions. For example, the head, clothes, backpack, and shoes of a person may occupy distant patches in the token sequence, while jointly providing essential identity-critical information. Penalizing interactions among these patches that are far apart risks weakening meaningful long-range dependencies and degrading the model's ability to capture informative features related to the person's identity.
In contrast with MODA, our proposed RIB-DCS aims to learn an informative weight matrix that makes the learned features more correlated with the class labels and less correlated with the input features, without relying on unreliable priors.
Instead of assigning penalties based on spatial proximity, RIB-DCS learns to remove uninformative channels while preserving those that carry identity-relevant information. The mask module is optimized jointly with the network parameters under the guidance of RIB. To demonstrate the advantages of RIB-DCS over MODA, we perform an experiment by applying MODA to all the attention modules in TransReID, leading to MODA (TransReID). It is observed in Table~\ref{tab:MODA_compare} that RIB-DCS-FB (TransReID) significantly outperforms MODA (TransReID) by $1.9\%$ in mAP and $1.7\%$ in Rank-1 for person Re-ID on Market1501.

\subsection{Evaluation on CUHK03 Dataset}
\label{sec:cuhk03}
We further evaluate the effectiveness of the proposed RIB on the CUHK03~\cite{cuhk03} dataset. The labeled version of the dataset, where the bounding boxes of the persons are manually annotated by humans, is used following the configurations in~\cite{cuhk03}, without re-ranking or multi-query fusion. We compare the performance of RIB-DCS-FB (TransReID), RIB-DCS-DNAS (FBNetV2-XLarge), RIB-CA-FB (TransReID+MCA), and RIB-CA-DNAS (FBNetV2-XLarge+MCA) with their corresponding baseline models without RIB. It is observed in Table~\ref{tab:cuhk03} that RIB significantly improves the performance of all the baseline models. For instance, RIB-DCS-FB (TransReID) outperforms TransReID by $1.6\%$ in mAP for

\begin{table}[!htbp]
\centering
\caption{Performance comparison on the CUHK03 dataset under the traditional evaluation protocol. Performance improvements over baseline models are indicated in parentheses.}
\resizebox{0.95\columnwidth}{!}{
\begin{tabular}{lcc}
\toprule
{Method}  & {mAP (\%)} & {Rank-1 (\%)} \\
\midrule
TransReID~\cite{he2021transreid}  & 79.6 & 81.7 \\
\textbf{RIB-DCS-FB (TransReID)} & \textbf{81.2} ($\uparrow$\,1.6) & \textbf{82.7} ($\uparrow$\,1.0) \\
\midrule
FBNetV2-XLarge~\cite{wan2020fbnetv2}  & 78.0 & 80.3 \\
\textbf{RIB-DCS-DNAS (FBNetV2-XLarge)} & \textbf{79.6} ($\uparrow$\,1.6) & \textbf{81.1} ($\uparrow$\,0.8) \\
\midrule
TransReID+MCA~\cite{he2021transreid}  & 80.1 & 81.8 \\
\textbf{RIB-CA-FB (TransReID+MCA)} & \textbf{81.1} ($\uparrow$\,1.0) & \textbf{82.5} ($\uparrow$\,0.7) \\
\midrule
FBNetV2-XLarge+MCA~\cite{wan2020fbnetv2} & 78.4 & 80.6 \\
\textbf{RIB-CA-DNAS (FBNetV2-XLarge+MCA)} & \textbf{79.7} ($\uparrow$\,1.3) & \textbf{81.3} ($\uparrow$\,0.7) \\
\bottomrule
\end{tabular}
}
\label{tab:cuhk03}
\end{table}

\subsection{Cross-Domain Person Re-ID}
\label{sec:cross_dataset}
To further evaluate the cross-domain generalization capability of the proposed RIB method, we integrate RIB-DCS into the domain transfer framework of CaCL~\cite{LeeL0SYH23}, which performs contrastive adaptation across heterogeneous person Re-ID datasets. In particular, CaCL aims to bridge the domain gap between the source and target datasets through class-aware contrastive learning and domain-level feature alignment.
Both our model, RIB-DCS-FB (CaCL), and the baseline model, CaCL, are evaluated under the standard cross-dataset Re-ID protocol~\cite{LeeL0SYH23}, where training is performed on one dataset (source domain), and testing is conducted on another (target domain).
It is observed in Table~\ref{tab:cross_dataset} that RIB-DCS-FB (CaCL) significantly outperforms the corresponding baseline model, CaCL. For instance, RIB-DCS-FB (CaCL) outperforms CaCL by $1.7\%$ in mAP when the models are trained on Market1501 and tested on MSMT17, which demonstrates the capabilities of the RIB in learning informative features from the input image for identifying the person's identity even when the test data is not from the same domain as the training data.

\begin{table}[!htbp]
\centering
\caption{Performance comparison for cross-domain evaluation on MSMT17 and Market1501. Performance improvements over the baseline model CaCL~\cite{LeeL0SYH23} are indicated in parentheses.}
\vspace{-2mm}
\resizebox{1\columnwidth}{!}{
\begin{tabular}{lcccc}
\toprule
 {Method} &{Training} & {Test} & {mAP} & {Rank-1} \\
\midrule
CaCL~\cite{LeeL0SYH23} &MSMT17 & Market1501 &  84.7 & 93.8 \\
\textbf{RIB-DCS-FB (CaCL)} &MSMT17 & Market1501 &  \textbf{85.7} ($\uparrow$\,1.0) & \textbf{94.6} ($\uparrow$\,0.8) \\
\midrule
CaCL~\cite{LeeL0SYH23} & Market1501 & MSMT17 & 66.6 & 36.5 \\
 \textbf{RIB-DCS-FB (CaCL)} &Market1501 & MSMT17 & \textbf{68.3} ($\uparrow$\,1.7) & \textbf{37.7} ($\uparrow$\,1.2) \\
\bottomrule
\end{tabular}
}
\label{tab:cross_dataset}
\end{table}

\subsection{Integration with Self-Supervised Re-IDentification Frameworks}
\label{sec:SSL}
To further demonstrate the general applicability of the proposed RIB, we integrate RIB-DCS into three representative self-supervised person re-identification frameworks, including the CLIP-based method CLIP-REID~\cite{CLIP-REID}, the DINO-based method SOLIDER~\cite{SOLIDER}, and the MAE-based method PersonMAE~\cite{PersonMAE}. These frameworks represent three distinct paradigms of self-supervised representation learning, including vision-language contrastive alignment, self-distillation via teacher-student consistency, and masked image reconstruction, respectively. Following the settings in PersonMAE~\cite{PersonMAE}, all models are pre-trained on LUPerson-4.2M~\cite{LU-Person}. For each framework, the DCS-Attention modules are incorporated into the backbone feature extractor to enable differentiable selection of informative channels in the computation of attention weights. The IB regularization, implemented via reducing the proposed IBB, is jointly optimized with the training loss for each of the self-supervised learning methods. It is observed in Table~\ref{tab:msmt_selfsup} that the RIB augmented models based on CLIP, DINO, and MAE consistently outperform the corresponding baseline models.
For instance, RIB-DCS-FB (CLIP-REID) outperforms CLIP-REID by $1.2\%$ in mAP for person Re-ID on Market1501, which demonstrates the general applicability of RIB in learning informative features within modern self-supervised feature learning frameworks.

\begin{table}[!htbp]
\centering
\caption{Performance comparison of self-supervised person Re-ID frameworks and their DCS-enhanced variants on Market1501. All feature backbones are pre-trained on LUPerson-4.2M~\cite{LU-Person}.}
 \resizebox{1\columnwidth}{!}{
\begin{tabular}{lcccc}
\toprule
{Method} & {Backbone}  & {mAP (\%)} & {Rank-1 (\%)} \\
\midrule
CLIP-REID~\cite{CLIP-REID} & TransReID  & 89.6 & 95.5 \\
\textbf{RIB-DCS-FB (CLIP-REID)} & RIB-DCS-FB (TransReID)  & \textbf{90.8} & \textbf{96.6} \\
\midrule
SOLIDER~\cite{SOLIDER} & TransReID & 92.5 & 96.0 \\
\textbf{RIB-DCS-FB (SOLIDER)} & RIB-DCS-FB (TransReID)  & \textbf{93.8} & \textbf{97.2} \\
\midrule
PersonMAE~\cite{PersonMAE} & TransReID  & 93.6 & 97.1 \\
\textbf{RIB-DCS-FB (PersonMAE)} & RIB-DCS-FB (TransReID)   & \textbf{94.3} & \textbf{97.5} \\
\bottomrule
\end{tabular}
}
\label{tab:msmt_selfsup}
\end{table}
\subsection{Ablation Study on the Effects of Differentiable Channel Selection for Attention Weights}
\label{sec:ablation_dcs}
In this section, we study the effectiveness of the DCS-Attention modules in RIB-DCS-FB and RIB-DCS-DNAS models. The study is performed using three baseline models, including MobileNetV2, TransReID, and FBNetV2-Large.
Since vanilla MobileNetV2 and FBNetV2-Large do not have self-attention modules, we build two baseline models that incorporate vanilla self-attention modules, leading to MobileNetV2+Self-Attention and FBNetV2-Large+Self-Attention.
We also build ablation models that add IBB in the training loss on TransReID, MobileNetV2+Self-Attention, and FBNetV2-Large+Self-Attention. The IBB is incorporated into the above models with vanilla self-attention modules to encourage the attention outputs to be more informative than those from the models trained without IBB.
The models with vanilla self-attention and IBB are denoted as MobileNetV2+Self-Attention+IBB, TransReID+IBB, and FBNetV2-Large+Self-Attention+IBB.
It is observed in Table~\ref{tab:ablation-DCS-loss} that reducing IBB in the training of the vanilla models and the ablation models with vanilla self-attention improves their performance for person Re-ID, which demonstrates the effectiveness of IBB in promoting the learning of informative attention outputs.
Moreover, the performance of the models with vanilla self-attention modules and IBB can be further improved by replacing the vanilla self-attention modules with the proposed DCS-Attention modules.
In particular, our RIB-DCS-FB and RIB-DCS-DNAS models outperform the vanilla models and the ablation study models by large margins in terms of mAP and Rank-1. For example, RIB-DCS-FB (TransReID) outperforms the vanilla model, TransReID, by $2.4\%$ in mAP. In addition, RIB-DCS-FB (TransReID) outperforms the ablation model TransReID+IBB by $1.1\%$ in mAP, demonstrating the effectiveness of the DCS-Attention module by learning informative attention weights.
\begin{table}[!htbp]
 \centering
  \caption{Ablation study on the effects of DCS-Attention. The comparisons are performed on Market1501.}
 \label{tab:ablation-DCS-loss}
\resizebox{0.9\columnwidth}{!}{
\begin{tabular}{lccccc}
\toprule
Model                   & FLOPs  & mAP  &Rank-1      \\
\midrule
MobileNetV2    &0.382 G  & 84.5& 93.9     \\
MobileNetV2+IBB   & 0.382 G &85.2 &94.6   \\
MobileNetV2+Self-Attention+IBB   & 0.395 G &85.6 &94.6   \\
\textbf{RIB-DCS-FB (MobileNetV2)}   & 0.406 G & \textbf{86.4} & \textbf{95.1}  \\
\midrule
TransReID    & 19.3 G  & 88.9 & 95.2      \\
TransReID+IBB   & 19.3 G & 90.2 & 96.0 \\
\textbf{RIB-DCS-FB (TransReID)}  &19.7 G & \textbf{91.3} & \textbf{97.0}  \\
\midrule
FBNetV2-Large & 1.235 G    & 87.2 &95.0 \\
FBNetV2-Large+IBB  & 1.235 G& 87.8 &95.5 \\
FBNetV2-Large+Self-Attention+IBB  & 1.288 G& 88.4 &95.5 \\
\textbf{RIB-DCS-DNAS (FBNetV2-Large)}  & 1.302 G    & \textbf{89.1} &\textbf{95.8}  \\
\bottomrule
\end{tabular}
}
\end{table}
\subsection{Visualization Analysis}
\label{sec:grad_cam}
In this section, we apply Grad-CAM~\cite{selvaraju2017grad} to visualize the effectiveness of RIB-DCS and RIB-CA models for person Re-ID. We use Grad-CAM to visualize the regions within the input images that influence the predictions of different models.
Fig.~\ref{fig:cam} illustrates the Grad-CAM visualization results for the vanilla TransReID, RIB-DCS-FB (TransReID)+DCS-Attention, and our RIB-DCS-FB (TransReID).
The Grad-CAM visualization results illustrate that the representative self-attention modules in TransReID usually miss the important parts of human figures, potentially hurting the performance in person Re-ID. In contrast, RIB-DCS-FB focuses more strongly on the important parts of human figures in the input that are informative for person Re-ID.
More Grad-CAM visualization results for RIB-DCS-FB (MobileNetV2) and the corresponding baseline models are illustrated in Fig.~\ref{fig:cam_sup}.

Fig.~\ref{fig:cam_pose} illustrates that the baseline model, TransReID, either mistakenly focuses on the background areas or misses the important parts of human figures affected by pose changes. In contrast, RIB-DCS-FB (TransReID), mostly highlights informative and identity-related body parts across both poses.
The additional Grad-CAM visualization results demonstrate that the proposed RIB method promotes the learning of more robust and informative representations under challenging conditions, such as occlusion and extreme pose variations.

Fig.~\ref{fig:RIB-CA_cam} illustrate the Grad-CAM heatmaps for TransReID, TransReID+SE, TransReID+CBAM, RIB-CA-FB (TransReID+SE), and  RIB-CA-FB (TransReID+CBAM).
The baseline models, TransReID+SE and TransReID+CBAM, incorporate the SE~\cite{hu2018squeeze} attention modules and the CBAM~\cite{CBAM} attention modules by inserting them after each transformer block in the backbone of TransReID. RIB-CA-FB (TransReID+SE) and RIB-CA-FB (TransReID+CBAM) apply RIB on the backbones of TransReID+SE and TransReID+CBAM.
The Grad-CAM visualization results illustrate that the representative channel attention modules, SE and CBAM, either mistakenly focus on the background areas or miss the important human body regions, potentially hurting the performance of person Re-ID. In contrast, RIB-CA-FB focuses more strongly on the person's body in the input image.

\begin{figure}[!hbtp]
\begin{center}
\includegraphics[width=0.95\columnwidth]{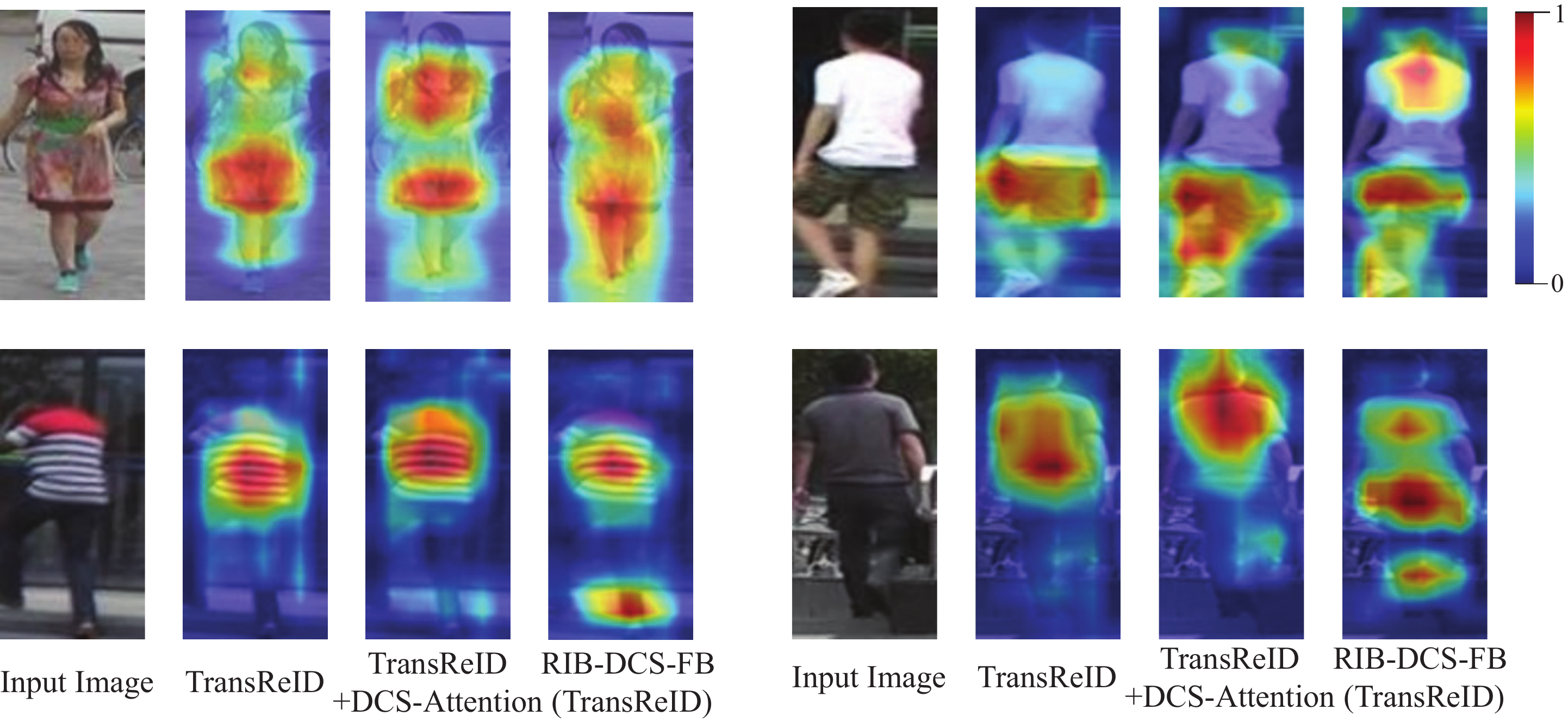}
\end{center}
\vspace{-4mm}
\caption{Grad-CAM visualization results for TransReID, TransReID+DCS-Attention, and RIB-DCS-FB (TransReID). TransReID+DCS-Attention replaces all the self-attention modules in TransReID with the DCS-Attention modules.}
\label{fig:cam}
\vspace{-2mm}
\end{figure}

\begin{figure}[!htbp]
\begin{center}
\includegraphics[width=0.95\columnwidth]{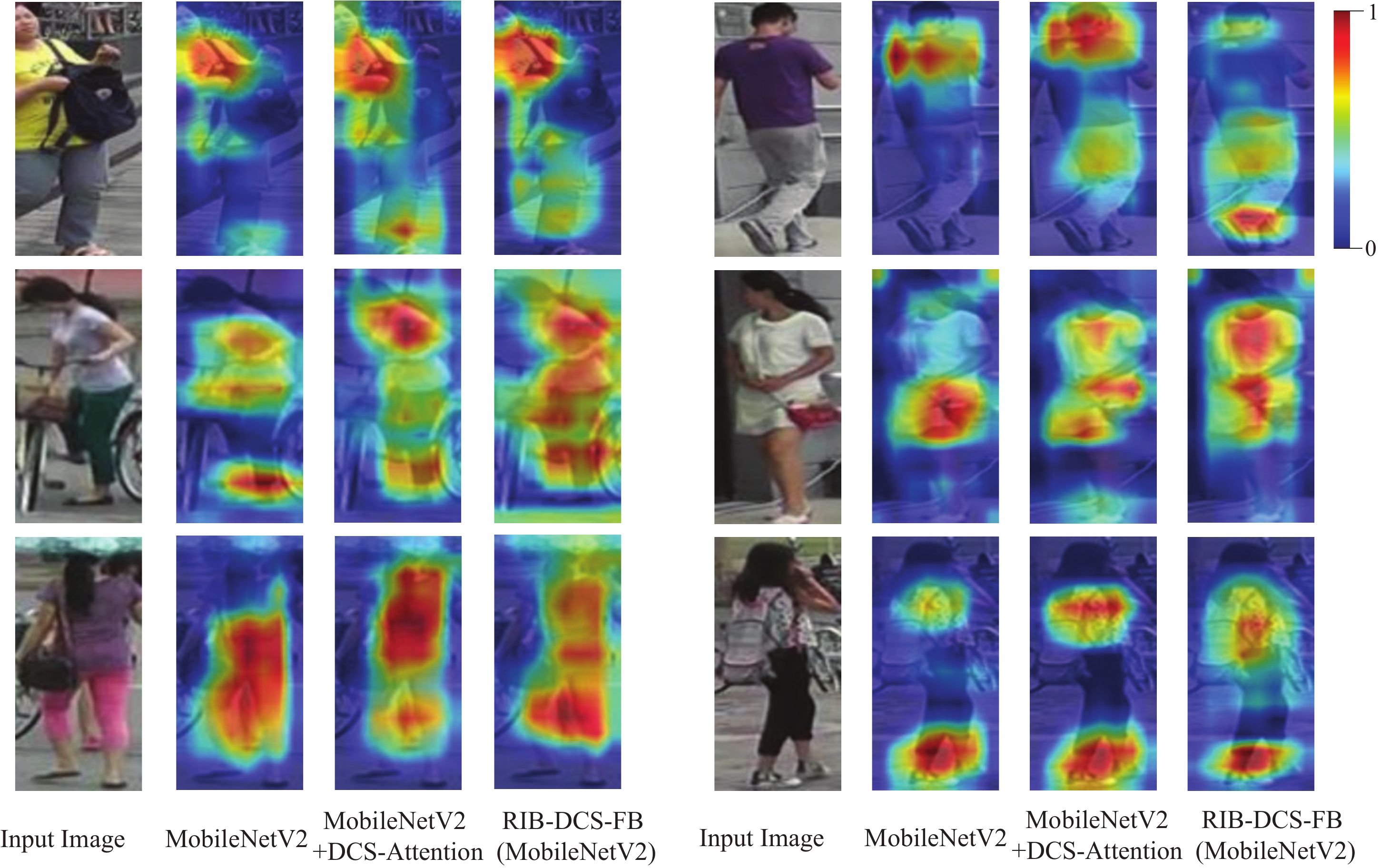}
\end{center}
\vspace{-4mm}
\caption{Grad-CAM visualization results for MobileNetV2, MobileNetV2+DCS-Attention, and RIB-DCS-FB (MobileNetV2). MobileNetV2+DCS-Attention inserts the DCS-Attention module after each of its residual blocks.}
\label{fig:cam_sup}
\vspace{-2mm}
\end{figure}

\begin{figure}[!htbp]
\begin{center}
\includegraphics[width=1\columnwidth]{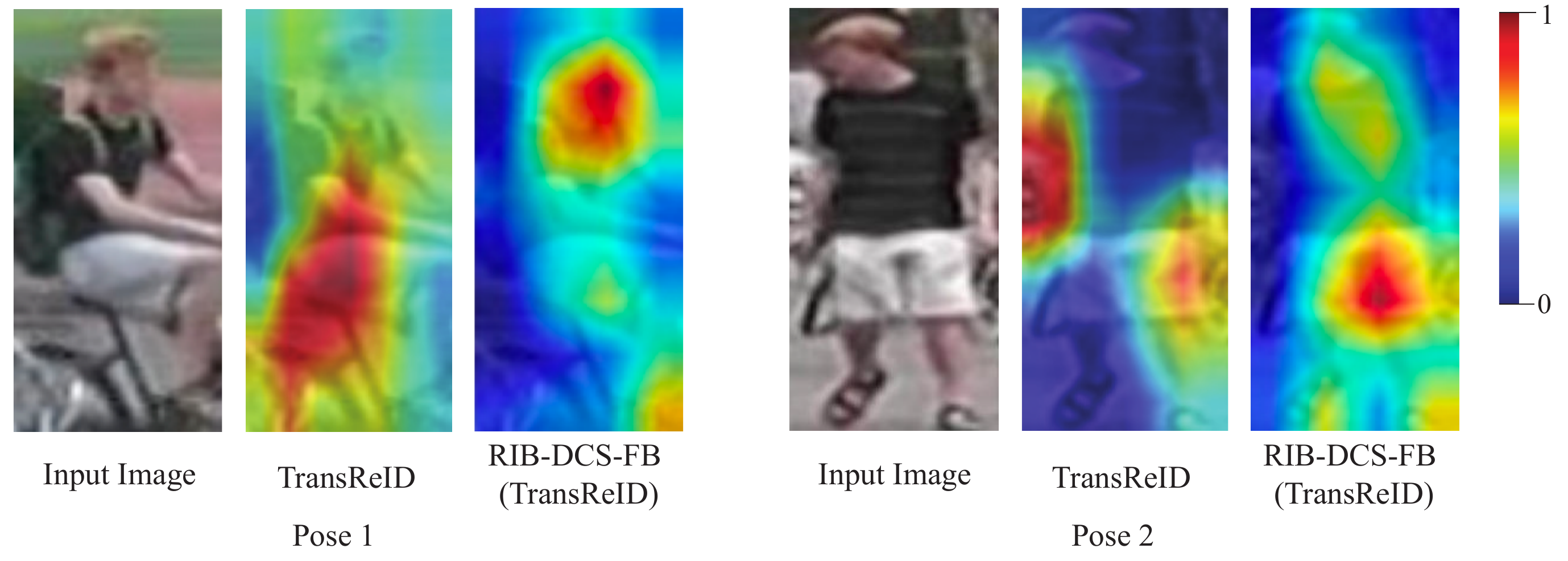}
\end{center}
\vspace{-4mm}
\caption{Grad-CAM visualization of a person under extreme pose variation in Markert1501 for TransReID and RIB-DCS-FB (TransReID).}
\label{fig:cam_pose}
\vspace{-5mm}
\end{figure}

\begin{figure}[!hbtp]
\begin{center}
\includegraphics[width=0.95\columnwidth]{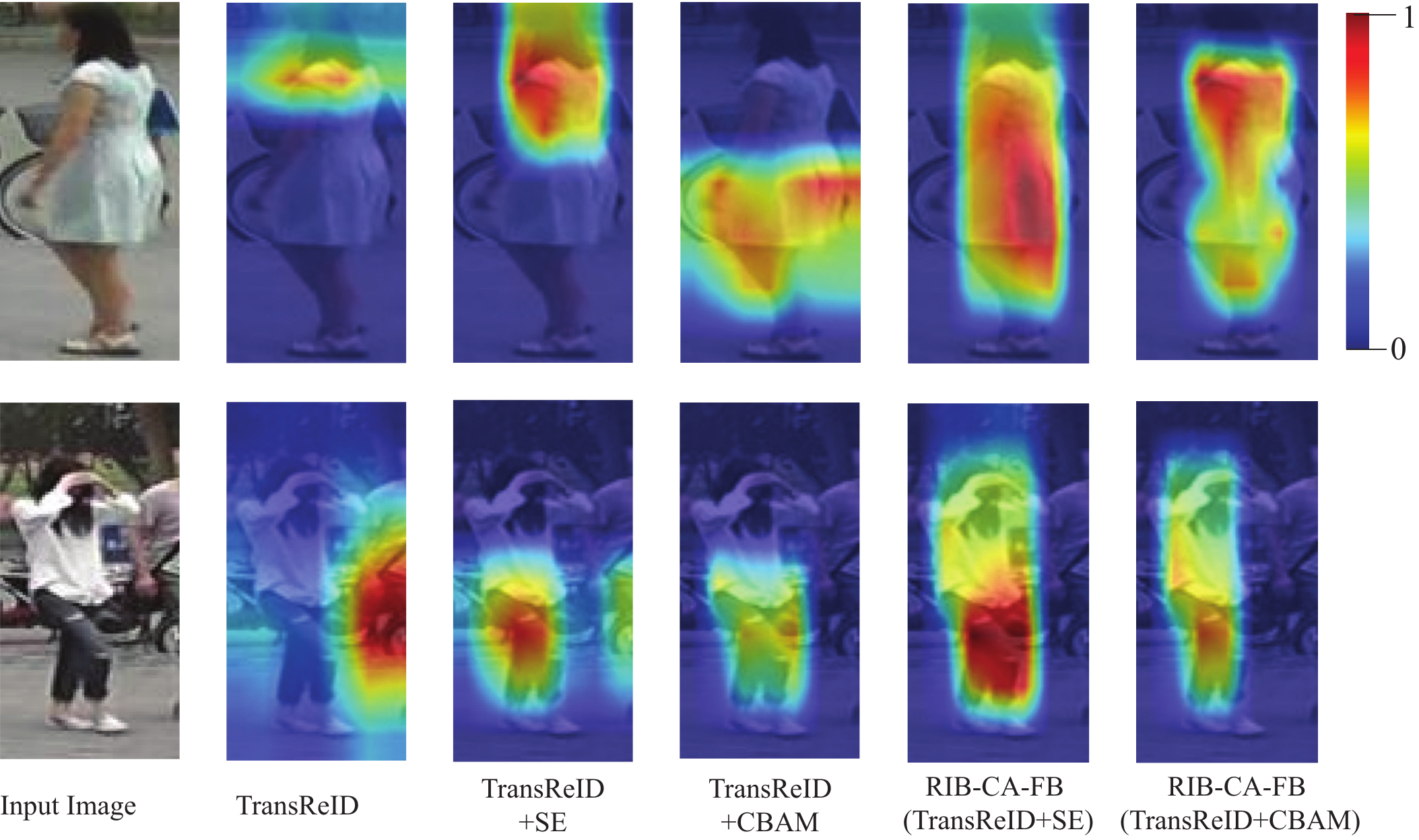}
\end{center}
\vspace{-4mm}
\caption{Grad-CAM visualization results for TransReID, TransReID+SE, TransReID+CBAM, RIB-CA-FB (TransReID+SE), and RIB-CA-FB (TransReID+CBAM). The SE attention modules and CBAM attention modules are inserted after each transformer block in TransReID+SE and TransReID+CBAM, respectively.}
\label{fig:RIB-CA_cam}
\vspace{-5mm}
\end{figure}

\begin{figure*}[!htbp]
    \centering
        \resizebox{1\textwidth}{!}{
            \includegraphics{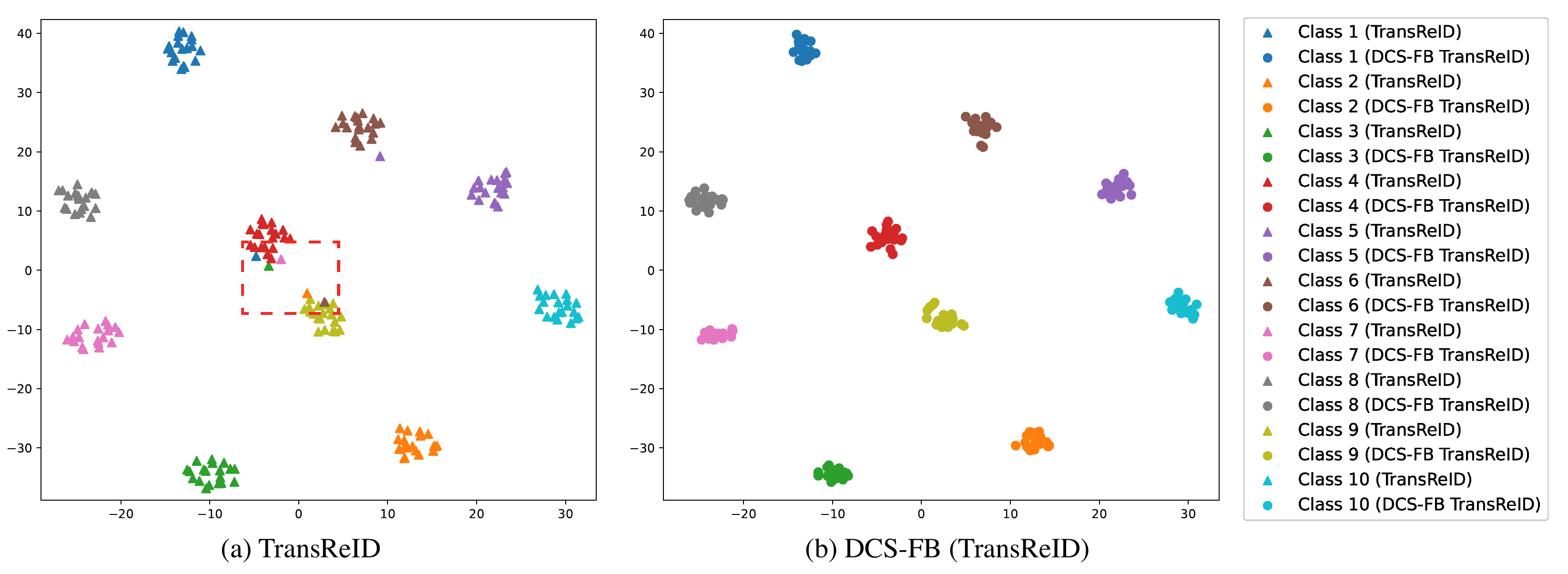}
        }
    \caption{t-SNE visualization analysis of the features learned by (a) TransReID and (b) RIB-DCS-FB (TransReID). The visualization is performed on the LDA-projected features for better discriminative analysis.
    The red dashed box in Figure (a) shows an example where features generated by TransReID for instances from multiple distinct classes exhibit substantial overlap.}
    \label{fig:tsne}
    \vspace{-3mm}
\end{figure*}

\subsection{t-SNE Visualization of Learned Features}
\label{sec:tsne}

To further assess the effectiveness of RIB in learning discriminative features, we perform the t-SNE visualization analysis on features learned by RIB-DCS-FB (TransReID) and the baseline model, TransReID. The visualization is conducted using images of $10$ randomly selected identities from the Market-1501 dataset, where each identity has $20$ images. Each identity is considered a distinct class.
The features are extracted from the output of the last transformer block before the classification head in the TransReID and RIB-DCS-FB (TransReID) models.
Standard normalization, which standardizes features to have zero mean and unit variance, is applied to the learned features for both TransReID and RIB-DCS-FB (TransReID) to ensure consistent scaling across dimensions. To prepare the features for discriminative analysis, we perform Linear Discriminant Analysis (LDA), which projects the original high-dimensional features into a lower-dimensional subspace that maximizes class separability~\cite{fisher1936use}, on the features generated by both TransReID and RIB-DCS-FB (TransReID). The number of LDA components, which is the dimension of the LDA subspace, is set to $9$.
Next, we perform t-SNE visualization on the LDA-projected features for both TransReID and RIB-DCS-FB (TransReID) to qualitatively assess the discriminative capacity of the learned representations.
As illustrated in Fig.~\ref{fig:tsne}, the features learned by RIB-DCS-FB (TransReID) exhibit better inter-class separability and increased intra-class compactness than those obtained without RIB-DCS-FB. For example, features generated by TransReID for instances from five distinct classes exhibit substantial overlap within the red dashed box in Fig.~\ref{fig:tsne}, which indicates limited inter-class separability and reduced intra-class compactness. In contrast, the features generated by RIB-DCS-FB (TransReID) for instances from different classes are more distinctly clustered and spatially separated from one another. This qualitative observation underscores the effectiveness of the informative channel selection in the proposed DCS-Attention in enhancing the discriminative capacity of the learned features.
To further support this observation, we compute the Fisher Score~\cite{DudaHS01} of the LDA-projected features, which quantitatively evaluates the discriminative capability of the features by measuring the ratio of inter-class variance to intra-class variance. The Fisher Score of the LDA-projected features of RIB-DCS-FB (TransReID) is $7.173$, which is $121\%$ higher than that of TransReID, which is $3.246$. This result quantitatively confirms that the proposed DCS-Attention enhances the feature discriminability, aligning well with the patterns observed in the t-SNE visualization.

\section{Illustration of the connection between the channel selection in DCS-Attention and the reduction of the IB loss}
\label{sec:informative_connection}

Fig.~\ref{fig:informative_connection} illustrates the connection between the channel selection and the IB loss.
In our DCS-Attention module, we propose to select informative feature channels so that more informative channels are used to learn more informative attention weights (please refer to our response to Question 1). By learning informative attention weights, DNNs assign higher attention weights to more informative feature tokens which contribute more to the discriminative task of person Re-ID. As a result, the aggregated feature tokens, which are the output of the DCS-Attention modules, are more informative for the discriminative task of person Re-ID than the aggregated feature tokens from vanilla self-attention modules. By replacing all the self-attention modules in DNNs with the DCS-Attention modules, the features learned by the DNNs are more informative for the discriminative task of person Re-ID. In order to reduce the IB loss, we propose an Information Bottleneck (IB) inspired channel selection for the attention weights, where the learned attention weights can be more informative by explicitly optimizing the IB loss for DNNs with DCS-Attention modules.

\begin{figure}[!htbp]
\begin{center}
\includegraphics[width=0.5\columnwidth]{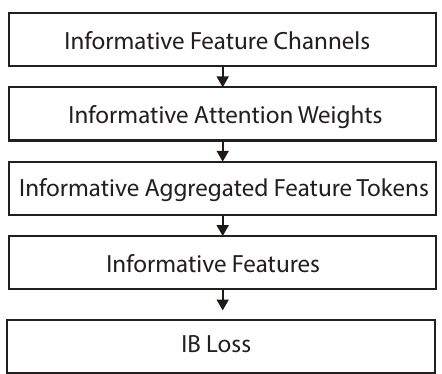}
\end{center}
\vspace{-4mm}
\caption{Illustration of the connection between the channel selection in DCS-Attention and the IB loss.
}
\label{fig:informative_connection}
\end{figure}

\section{Study on the Variational Upper Bound for the IB Loss (IBB) and Test Loss}
\label{sec:loss_figure}
Fig.~\ref{fig:loss} illustrates the test loss and the IBB during the training for RIB-DCS-FB (TransReID) and RIB-DCS-DNAS (FBNetV2). It is observed that the IBB for all the models decreases starting from the beginning of the training, and the test loss of a model trained with the IBB drops faster than that of the vanilla model.

\begin{figure}[!htb]
    \centering
        \resizebox{1\columnwidth}{!}{
            \includegraphics{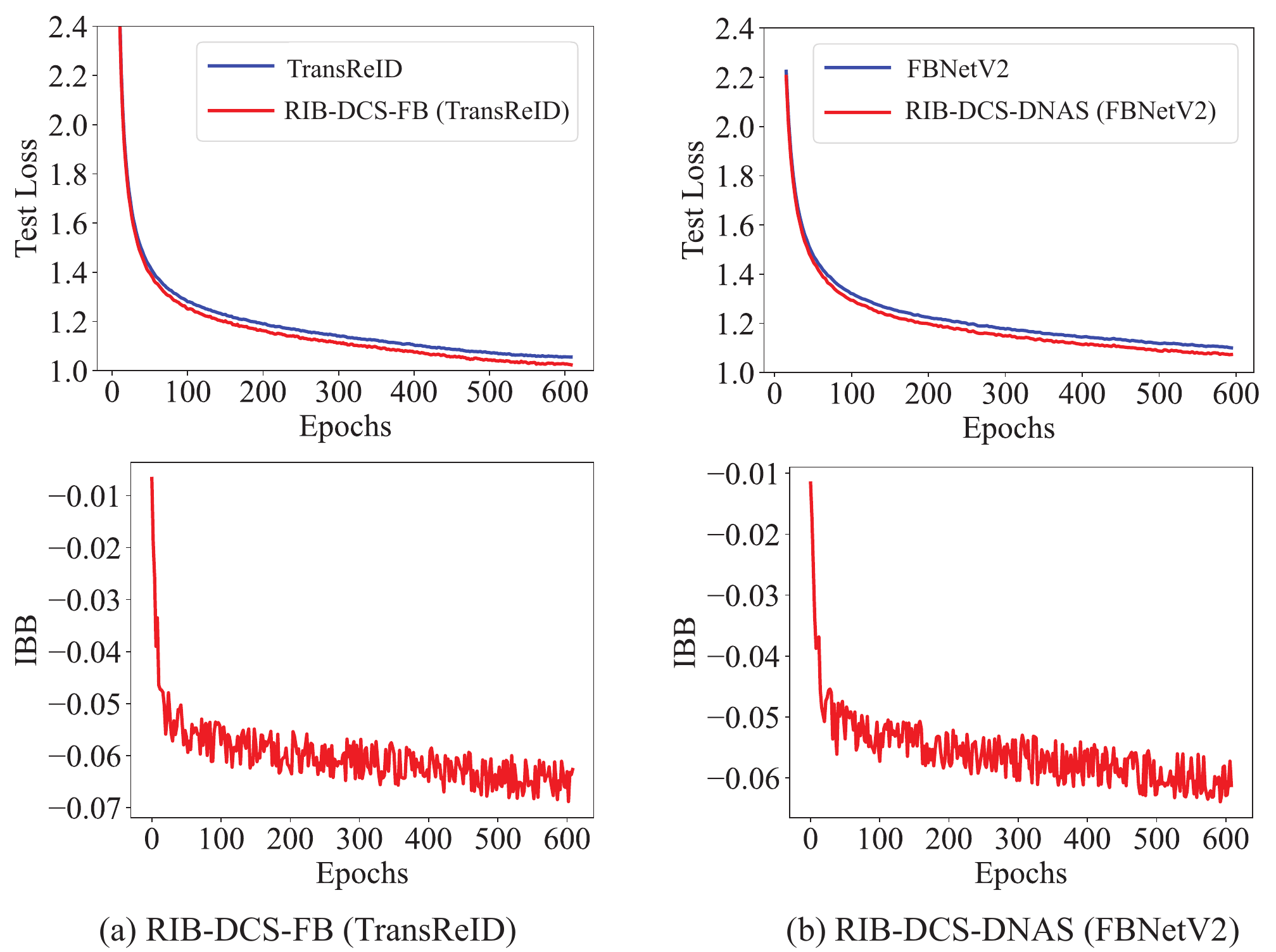}
        }
        \vspace{-2mm}
    \caption{IBB and test loss of RIB-DCS-FB (TransReID) and RIB-DCS-DNAS (FBNetV2) during the training.}
     \vspace{-5mm}
    \label{fig:loss}
\end{figure}

\section{Computational Complexity Analysis}
\label{sec:complexity}
Given samples $\set{(x_i, z_i)}_{i=1}^{n}$ drawn from the joint distribution $p_{\sigma}(x,z)=p_\sigma(z|x)p(x)$,
the goal of CLUB is to optimize the parameters of the predictive neural network $p_\sigma(z|x)$
such that the resulting joint model $p_\sigma(x,z)$ induces minimal mutual information between $x$ and $z$.
Here, $p_\sigma(z|x)$ serves as the main predictive model, such as an encoding model or a classification model, while $p(x)$ denotes the empirical data distribution.
In addition, $q_\theta(z|x)$ is a variational conditional distribution, also implemented as a neural network,
used to approximate $p(z|x)$ and to provide a differentiable estimator of mutual information.
The optimization alternates between updating $q_\theta(z|x)$ by maximizing the log-likelihood
$\mathcal{L}(\theta) = \frac{1}{n} \sum_{i=1}^{n} \log q_\theta(z_i|x_i)$ to improve the approximation accuracy, and updating $p_\sigma(x,z)$ by minimizing the upper bound for the mutual information, $I(x,z)$,
$\hat{I}_{\textup{vCLUB}} = \frac{1}{n} \sum_{i=1}^{n} U_i$,
where $U_i = \log q_\theta(z_i|x_i) - \frac{1}{n} \sum_{j=1}^{n} \log q_\theta(z_j|x_i)$.
Through this alternating process, the algorithm jointly learns an accurate variational estimator $q_\theta$
and a predictive model $p_\sigma$.
The complete training procedure is summarized in Algorithm~\ref{alg:vclub}.

\noindent Suppose the computation of $q_\theta(z|x)$ costs $T_q$ and that of $p_\sigma(z|x)$ costs $T_p$.
Computing $\cL(\theta) = \frac{1}{n}\sum_{i=1}^{n} \log q_\theta(z_i|x_i)$ requires $\Theta(n T_q)$ time.
The computation of vCLUB $\hat{I}_{\textup{vCLUB}} = \frac{1}{n}\sum_{i=1}^{n}\Big[\log q_\theta(z_i|x_i) - \frac{1}{n}\sum_{j=1}^{n}\log q_\theta(z_j|x_i)\Big]$
requires computing $[\log q_\theta(z_j|x_i)]$ for $i,j\in[n]$, which requires $\Theta(n^2 T_q)$ time.
Updating $p_\sigma$ for minimizing $\hat{I}_{\textup{vCLUB}}$ over the same $n$ inputs
adds $n T_p$ time.
Hence, one training epoch requires $\Theta(n^2 T_q+n T_q+n T_p) = \Theta(n^2 T_q+n T_p)$.

\begin{algorithm}[!htbp]
\caption{MI Minimization with vCLUB (Algorithm 1 in CLUB~\cite{CLUB})}
\label{alg:vclub}
\begin{algorithmic}[1]
\FOR{each training iteration}
\STATE Sample $\set{(x_i, z_i)}_{i=1}^{n}$ from $p_{\sigma}(x,z)$
\STATE Compute log-likelihood $\mathcal{L}(\theta) = \frac{1}{n}\sum_{i=1}^{n}\log q_\theta(z_i|x_i)$
\STATE Update $q_\theta(z|x)$ by maximizing $\mathcal{L}(\theta)$
\FOR{$i = 1$ to $n$}
\STATE $U_i = \log q_\theta(z_i|x_i) - \frac{1}{n}\sum_{j=1}^{n}\log q_\theta(z_j|x_i)$
\ENDFOR
\STATE Update $p_\sigma(x,z)$ by minimizing $\hat{I}_{\textup{vCLUB}} = \frac{1}{n}\sum_{i=1}^{n}U_i$
\ENDFOR
\end{algorithmic}
\end{algorithm}

\noindent\textbf{Computational Complexity of IBB.}~
Herein we analyze the computational complexity for computing the upper bound
$\textup{IBB}(\cW) =  \frac 1{n} \sum\limits_{i=1}^n \sum\limits_{a=1}^C \sum\limits_{y=1}^C
\indict{y_i = y} \phi(F_i, a) \log \pth{ \frac{\indict{y_i = y}}{p_y Q(F \in a| Y=y)} }$
for the IB loss $I(F(\cW), X)$ for a comparison with CLUB~\cite{CLUB}.
Let $T_0$ denote the complexity of a forward and backward computation of the model predicting $F_i(\cW)$ for one sample.
For each epoch, the computation of the soft assignments $\phi(F_i,a)$ for $i\in [n]$ and $a\in [C]$ requires $\Theta(nCT_0)$ time.

Once the soft assignment matrix $\phi(F_i, a)$ for all $i \in [n]$ and $a \in [C]$ has been obtained at the $t$-th epoch, the conditional distribution matrix
$Q^{(t)}(F \in a \mid Y = y) \in \mathbb{R}^{C \times C}$ can be efficiently computed following Algorithm~\ref{alg:Q_computation}.
We denote by $Q[a, y] = Q^{(t)}(F \in a \mid Y = y)$ the $(a, y)$-th entry of $Q$, representing the conditional probability that a feature $F$ belongs to cluster $a$ given class label $y$.
Each entry $Q[a, y]$ is computed by aggregating the soft assignment values $\phi(F_i, a)$ over all $i\in [n]$ whose class label $y_i = y$,
followed by normalization with respect to the total number of samples in that class.
The accumulation step (lines 2–7 in Algorithm~\ref{alg:Q_computation}) requires $\Theta(nC)$ time, while the normalization step (lines 8–12 in Algorithm~\ref{alg:Q_computation}) requires $\Theta(C^2)$ time.
Consequently, the overall computational complexity of computing $Q$ is $\Theta(nC+C^2)$.

\begin{algorithm}[!htbp]
\caption{Efficient Computation of $Q^{(t)}(F \in a \mid Y = y)$}
\label{alg:Q_computation}
\begin{algorithmic}[1]
\REQUIRE The precomputed soft assignments $\phi(F_i,a)$ for $i \in [n]$ and $a \in [C]$, the class labels $\set{y_i}_{i=1}^{n}$, and the number of clusters $C$. Here $\set{F_i}_{i=1}^n$ are computed at the $t$-th epoch in Algorithm 1 of the main paper.
\ENSURE Conditional distribution matrix $Q \in \mathbb{R}^{C \times C}$

\STATE Initialize $Q \leftarrow \mathbf{0}^{C \times C}$ and count vector $M \leftarrow \mathbf{0}^{C}$.
\FOR{$i = 1$ to $n$}
\FOR{$a = 1$ to $C$}
\STATE $Q[a, y_i] \leftarrow Q[a, y_i]+\phi(F_i, a)$
\ENDFOR
\STATE $M[y_i] \leftarrow M[y_i]+1$
\ENDFOR
\FOR{$y = 1$ to $C$}
\FOR{$a = 1$ to $C$}
\STATE $Q[a, y] \leftarrow Q[a, y] / M[y]$
\ENDFOR
\ENDFOR
\RETURN $Q$
\end{algorithmic}
\end{algorithm}

Let $\cI_y = \set{i\in[n]|y_i = y}$ be the index set of the training samples from the class $y$ for $y\in[C]$. Then, $\textup{IBB}(\cW) = \frac 1{n} \sum\limits_{i=1}^n \sum\limits_{a=1}^C \sum\limits_{y=1}^C
\indict{y_i = y} \phi(F_i, a) \log \pth{ \frac{\indict{y_i = y}}{p_y Q(F \in a| Y=y)} }=  \frac 1{n} \sum\limits_{a=1}^C \sum\limits_{y=1}^C
\sum\limits_{i\in \cI_y }  \phi(F_i, a) \log \pth{ \frac{1}{p_y Q(F \in a| Y=y)} }$. As a result, the computation of $\textup{IBB}(\cW)$ given  $\phi(F_i,a)$ for $i\in [n]$ and $a\in [C]$ and $Q$ takes $C\times\sum\limits_{y=1}^C|\cI_y| = \Theta(nC)$ time because $\sum\limits_{y=1}^C|\cI_y| = n$. The total computation cost for $\textup{IBB}(\cW)$ is $\Theta(nCT_0+nC+C^2+nC) = \Theta(nCT_0 +C^2 )$.


\section{Difference of Our Hard Channel Selection in DCS-Attention from Existing Channel Attention Methods and Its Advantage}
\label{sec:sup_hard-channel-selection}
The key difference of our hard channel selection in DCS-Attention from existing channel attention methods, such as SE \cite{hu2018squeeze}, CBAM~\cite{CBAM}, MCA~\cite{MCA}, is that the channel selection mechanism selects an informative subset of the channels to compute the attention weights in a hard selection manner, rather than the soft reweighting scheme used in existing channel attention methods, including SE, CBAM, and MCA. Furthermore, the channels selected by DCS-Attention are token-dependent, that is, different tokens can use different channels to compute the attention weights. Such a mechanism allows for adaptive attention weights, in contrast to existing channel attention methods which use all the feature channels to compute the attention weights.

\begin{table*}[!htbp]
 \centering
  \caption{Comparison between RIB-DCS and ablation models that replace the channel selection mechanism in DCS-Attention with channel attention modules, including SE~\cite{hu2018squeeze}, CBAM~\cite{CBAM}, and MCA~\cite{MCA}. SE (Self-Attention), CBAM (Self-Attention), and MCA (Self-Attention) denote self-attention modules that apply SE, CBAM, or MCA to reweight the channels of the query and key when computing attention weights. TransReID~\cite{he2021transreid} is used as the backbone for all the models in this table. }
\label{tab:ablation_channel_selection}
\resizebox{0.8 \textwidth}{!}{
\begin{tabular}{c|cccccccc}
\toprule

Dataset    & Model           & FLOPs  & mAP  &Rank-1 & IBB  &$I(F,X)$ &$I(F,Y)$  & IB Loss  \\
\midrule
\multirow{10}{*}{Market1501} &TransReID& 19.3 G  & 88.9 & 95.2 &0.059 & 0.071 &0.091 & -0.020     \\
& TransReID+SE (Self-Attention)   & 19.8 G & 89.5 &  95.3 & 0.055& 0.070& 0.092& -0.023\\
& TransReID+CBAM (Self-Attention)  &20.0 G & 89.2 & 95.2  & 0.060 & 0.070 &0.091 & -0.021\\
& TransReID+MCA (Self-Attention)   &20.3 G & 89.5 &  95.2 & 0.054 & 0.071&0.092 & -0.023\\
& \textbf{TransReID+DCS-Attention}   &19.7 G &\textbf{90.4} & \textbf{95.8}& \textbf{0.050} & \textbf{0.066} & \textbf{0.094} &\textbf{-0.028}  \\
\cmidrule{2-9}
&TransReID+IBB & 19.3 G & 90.2 & 96.0 & 0.040 & 0.062 & 0.105 & -0.043 \\
& RIB-SE (Self-Attention)  & 19.8 G &90.4 & 95.8 & 0.040 & 0.059 &0.103 & -0.044   \\
& RIB-CBAM (Self-Attention)   &20.0 G &90.1 &  95.7 & 0.041 & 0.063 &0.102 & -0.039   \\
& RIB-MCA (Self-Attention)   &20.3 G & 90.3 & 96.2  & 0.035& 0.060 &0.108 & -0.048   \\
& \textbf{RIB-DCS-FB (TransReID)}   &19.7 G & \textbf{91.3} & \textbf{97.0} &\textbf{0.027} & \textbf{0.045} & \textbf{0.112} & \textbf{-0.067}     \\
\midrule

\multirow{10}{*}{DukeMTMC}
&TransReID & 19.3 G  & 82.0 & 90.7 & 0.068 & 0.060 & 0.090 & -0.030 \\
& TransReID+SE (Self-Attention)   &19.8 G &82.3 &  90.8 & 0.065 & 0.059 & 0.091 & -0.032 \\
& TransReID+CBAM (Self-Attention)   & 20.0 G&82.1 & 90.7 & 0.066 & 0.058 & 0.091 & -0.033 \\
& TransReID+MCA (Self-Attention)   & 20.3 G& 82.3  &90.6  & 0.067&0.060 &0.092 &  -0.032\\
& \textbf{TransReID+DCS-Attention}   &19.7 G & \textbf{83.1}& \textbf{91.0} & \textbf{0.060} & \textbf{0.048} & \textbf{0.093} & \textbf{-0.045} \\
\cmidrule{2-9}
&TransReID+IBB & 19.3 G  & 83.3 & 91.1 & 0.033 & 0.046 & 0.101 & -0.055  \\
& RIB-SE (Self-Attention)  & 19.8 G &83.4  & 91.2 & 0.035 & 0.044 & 0.099 & -0.055 \\
& RIB-CBAM (Self-Attention)   &20.0 G &83.2  & 91.0 & 0.036 & 0.046 & 0.099 & -0.053 \\
& RIB-MCA (Self-Attention)   &20.3 G &83.2  & 91.2 & 0.034 & 0.045 & 0.101 & -0.056 \\
& \textbf{RIB-DCS-FB (TransReID)}   &19.7 G & \textbf{84.3} & \textbf{92.0} & \textbf{0.022} & \textbf{0.035} & \textbf{0.105} & \textbf{-0.070} \\

\midrule
\multirow{10}{*}{MSMT17}
&TransReID& 19.3 G  & 67.4 & 85.3 & 0.068 & 0.059 & 0.075 & -0.016 \\
& TransReID+SE (Self-Attention)  &19.8 G & 67.8 & 85.5 & 0.066 & 0.056 & 0.074 & -0.018 \\
& TransReID+CBAM (Self-Attention)  &20.0 G & 67.7 & 85.3 & 0.067 & 0.054 & 0.073 & -0.019 \\
& TransReID+MCA (Self-Attention)   & 20.3 G& 67.5& 85.5 &0.069  & 0.058 &0.074 & -0.016 \\
& \textbf{TransReID+DCS-Attention}   &19.7 G & 68.8 & 86.3 & \textbf{0.062} & \textbf{0.048} & \textbf{0.075} & \textbf{-0.027} \\
\cmidrule{2-9}
&TransReID+IBB & 19.3 G  &68.8 & 86.5& 0.033 & 0.052 & 0.089 & -0.037 \\
& RIB-SE (Self-Attention)  & 19.8 G & 69.2 &86.5 & 0.034 & 0.052 & 0.091 & -0.039 \\
& RIB-CBAM (Self-Attention)   &20.0 G & 68.8& 86.4 & 0.036 & 0.051 & 0.089 & -0.038 \\
& RIB-MCA (Self-Attention)   &20.3 G & 68.8& 86.5 & 0.036 & 0.054 & 0.091 & -0.037 \\
& \textbf{RIB-DCS-FB (TransReID)}   &19.7 G & \textbf{70.2} & \textbf{87.1} & \textbf{0.030} & \textbf{0.040} & \textbf{0.093} & \textbf{-0.053} \\

\bottomrule
\end{tabular}
}
\end{table*}

The DNNs with DCS-Attention modules under the general framework of RIB are referred to as the RIB-DCS models.
The output of existing self-attention~\cite{he2021transreid, zhang2020relation} is specified by $\textup{Output} = \sigma(QK^{\top})V$, where $Q,K,V \in \RR^{N \times D}$ denote the query, key, and value, respectively, with $N$ tokens and channel dimension $D$.
The operator $\sigma(\cdot)$ (e.g., Softmax) yields attention weights $A=\sigma(QK^\top)\in\RR^{N\times N}$, enabling weighted aggregation over the feature tokens $V$.
The hard channel selection mechanism in DCS-Attention uses only selected informative channels, which are the selected columns of $Q$ and the corresponding selected rows of $K^{\top}$, to compute $A = \sigma(QK^{\top})$, as illustrated in Fig.~3(a) of the main paper. Informative $A$ assigns larger attention weights to semantically more relevant tokens, such as the human body regions in Fig.~1(c)–(d) of the main paper, while suppressing occlusions and background, in contrast to the existing self-attention method SPT~\cite{tan2024occluded}.

Learning informative attention weights by our DCS-Attention through such a hard channel selection scheme can be explained with the Information Bottleneck (IB) principle. Let $X$ denote input features, $F$ the learned features by the DNNs, and $Y$ the ground-truth identity labels. The IB principle aims to increase the mutual information between $F$ and $Y$ while decreasing that between $F$ and $X$. Thus, it encourages reducing the IB loss $I(F, X) - I(F, Y)$, where $I(\cdot,\cdot)$ denotes mutual information modelling the correlation between the two input variables. By reducing the IB loss, informative attention weights are expected to reduce the correlation between $F$ and $X$ while increasing the correlation between $F$ and $Y$. In this way, learned features $F$ are less correlated with the input containing background or occlusions, thus mitigating the adverse effects of background or occlusions in the input, and they are more correlated with the person identity which is the goal of person Re-ID.

\begin{figure*}[!htb]
\begin{center}
 \includegraphics[width=0.9\linewidth]{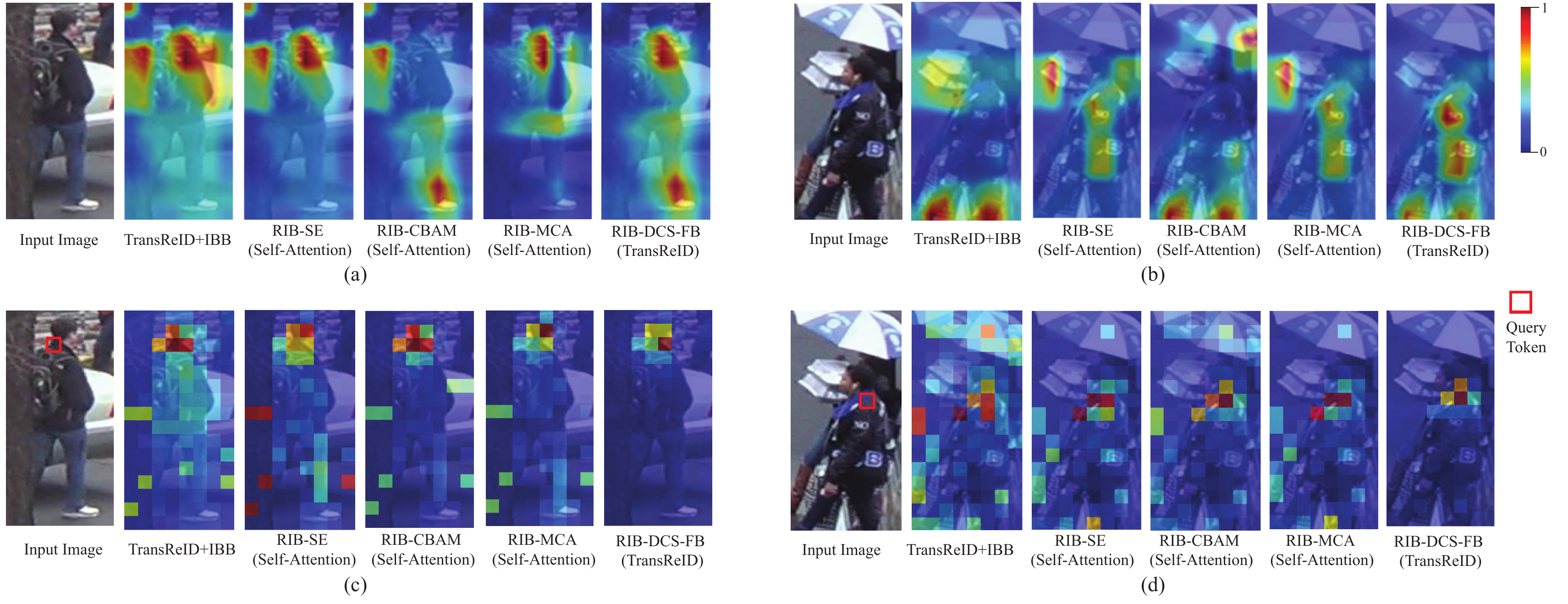}
\end{center}
\vspace{-3mm}
\caption{Figures (a)-(b) illustrate the Grad-CAM visualization for two images from DukeMTMC \cite{duke}, for TransReID+IBB, RIB-SE (Self-Attention), RIB-CBAM (Self-Attention), RIB-MCA (Self-Attention), and RIB-DCS-FB (TransReID).
Figures (c)-(d) illustrate the heatmaps of the attention weights corresponding to a query token computed from the first transformer block for the above models. The query token for both examples is selected from the boundary of the neck of the person in the input images, which is critical for identifying a person.
}
\vspace{-3mm}
\label{fig:gradcam_dcs_ablation}
\end{figure*}

The channels of $Q$ and $K$ encode all the information in the input image, so that existing channel attention methods, such as SE, CBAM, and MCA, which use all the feature channels, inevitably suffer from the background or occlusions in the input even after a reweighting process on the feature channels, potentially hurting the performance of person Re-ID. In a strong contrast, the hard channel selection in DCS-Attention only uses a learned subset of channels to compute the attention weights, so that only a selected subset of input information is used to compute the attention weights and the correlation between the learned features $F$ and the input $X$, $I(F,X)$, is decreased. Such a hard channel selection mechanism is inspired by the IB principle. In particular, the decreased $I(F,X)$ mitigates the adverse effects of background or occlusions in the input, while maintaining a reasonable correlation between $F$ and $Y$ by standard training with cross-entropy loss involving $Y$. DCS-Attention is further enhanced by the adaptive channel selection mechanism where different tokens are allow to different channels to compute
the attention weights.
To demonstrate the advantage of DCS-Attention models,
including TransReID+DCS-Attention and RIB-DCS-FB (TransReID), we build three ablation self-attention modules denoted as SE (Self-Attention), CBAM (Self-Attention), and MCA (Self-Attention), which perform soft channel reweighting on all the feature channels in $Q$ and $K$ in self-attention, using SE, CBAM, and MCA, respectively.
The models that replace the self-attention modules in TransReID with SE (Self-Attention), CBAM (Self-Attention), and MCA (Self-Attention) are denoted as TransReID+SE (Self-Attention), TransReID+CBAM (Self-Attention), and TransReID+MCA (Self-Attention), respectively. The model named TransReID+DCS-Attention replaces all the self-attention modules in TransReID with DCS-Attention, which is trained with the standard cross-entropy and triplet loss without our proposed IBB to explicitly reduce the IB loss. In addition, we build three ablation RIB models referred to as RIB-SE (Self-Attention), RIB-CBAM (Self-Attention), and RIB-MCA (Self-Attention), which replace the DCS-Attention modules in RIB-DCS with the SE (Self-Attention), CBAM (Self-Attention), and MCA (Self-Attention) modules. The IBB is added to the training loss of these three ablation RIB models to explicitly reduce the IB loss. We also evaluate a baseline model, TransReID+IBB, which adds the IBB to the training loss of the vanilla TransReID
without DCS-Attention.

The advantage of the hard channel selection in DCS-Attention models over the ablation self-attention models, including TransReID+SE (Self-Attention), TransReID+CBAM (Self-Attention), and TransReID+MCA (Self-Attention), is evidenced in Table~\ref{tab:ablation_channel_selection} in Section~E of the supplementary, which is also copied to this summary.
In particular,
Table~\ref{tab:ablation_channel_selection} shows that the mutual information $I(F, X)$ of
TransReID+DCS-Attention is smaller than that of all three ablation self-attention models, while keeping similar mutual information $I(F, Y)$ thanks to the training by the cross-entropy loss involving $Y$.
We remark that the IB loss can already be reduced by DCS-Attention without explicitly reducing the IB loss in the training process, mostly due to the reduced $I(F,X)$. As a result, TransReID+DCS-Attention outperforms the three ablation self-attention models in terms of both mAP and Rank-1.
For example, TransReID+DCS-Attention achieves an average $I(F,X)$ of $0.054$
across the three datasets in Table~\ref{tab:ablation_channel_selection}, which is $12.9\%$ lower than that of the ablation self-attention model,  TransReID+SE (Self-Attention), that is $0.062$.
As a result, TransReID+DCS-Attention outperforms TransReID+SE (Self-Attention) by $0.90\%$ in average mAP and $0.83\%$ in average Rank-1.
Our RIB-DCS-FB models, termed RIB-DCS-FB (TransReID) in Table~\ref{tab:ablation_channel_selection}, further explicitly reduce the IB loss with RIB by adding the IBB to the training loss. It can be observed from Table~\ref{tab:ablation_channel_selection} that compared to TransReID+DCS-Attention, the person Re-ID performance of RIB-DCS-FB (TransReID) is further improved, and the IB loss of RIB-DCS-FB is reduced to an even lower level, which indicates the effectiveness of the proposed RIB. For instance, RIB-DCS-FB (TransReID) reduces the average IB loss of TransReID+DCS-Attention by $0.03$ on the three datasets, leading to a $1.17\%$ improvement in average mAP and a $1.0\%$ improvement in average Rank-1. The best of the three RIB ablation models renders only marginal improvement over that of TransReID+IBB, e.g., by $0.23\%$ in average mAP. In contrast, RIB-DCS-FB (TransReID) outperforms TransReID+IBB by $1.17\%$ in average mAP across the three datasets in Table~\ref{tab:ablation_channel_selection}, which demonstrates the advantages of the hard channel selection in DCS-Attention over existing channel attention methods under the general framework of RIB.

Fig.~\ref{fig:gradcam_dcs_ablation}(c)-(d) in Section~E of the supplementary, which is also copied to this summary, illustrates the heatmaps of the attention weights of TransReID+IBB, RIB-DCS (Self-Attention), RIB-SE (Self-Attention), RIB-CBAM (Self-Attention), and our RIB-DCS-FB (TransReID).  With the three ablation RIB models, it is observed that even feature tokens from the occlusion and the background areas of the input image receive high attention weights, which are greater than $0.5$, for a query token from the boundary of the neck of the person.
For instance, all three ablation RIB models assign high attention weights, which are greater than $0.5$, to regions in the middle of the tree that occlude the person in Fig.~\ref{fig:gradcam_dcs_ablation}(c).
In contrast, our proposed RIB-DCS model, RIB-DCS-FB (TransReID), assigns higher attention weights to informative and semantically relevant feature tokens, which are also located around the neck of the person in the input image.
The Grad-CAM visualization in Fig.~\ref{fig:gradcam_dcs_ablation}(a)-(b) illustrates that the three ablation RIB models either mistakenly focus on the occlusion and background areas or miss the important human body regions, potentially hurting the performance of person Re-ID. In contrast, our proposed RIB-DCS model, RIB-DCS-FB (TransReID), mostly focuses on the important human body regions in the input images.
In addition, it is illustrated in Fig.~\ref{fig:gradcam_dcs_ablation}(a)-(b) that TransReID+IBB, the model without DCS-Attention, still focuses on background or occlusion areas in the input images even with the explicit reduction of the IB loss, evidencing the necessity of DCS-Attention.

\section{Proof of Theorem 3.1}
\label{sec:proofs}
\begin{lemma}\label{lemma:I-X-tildeX-upper-bound}
Let $\Prob{X \in b}  = \sum_{i=1}^n \indict{y_i = b}/n \defeq p_b$  for every $b \in [C]$, then
\bal\label{eq:I-X-tildeX-upper-bound}
\resizebox{1\columnwidth}{!}
{$
I(F, X) \le \frac 1{n} \sum\limits_{i=1}^n \sum\limits_{a=1}^C \sum\limits_{b=1}^C
\indict{y_i = b} \phi(F_i, a) \log \pth{ \frac{\indict{y_i = b}}{p_b} \phi(F_i, a)}.
$}
\eal
\end{lemma}
\begin{proof}
Define $ \sum_{i=1}^n \indict{y_i = b} \defeq n_b$ for every $b \in [C]$, then
 $\Prob{X \in b} =  n_b/n \defeq p_b$. We first have
\bal\label{eq:I-X-tildeX-upper-bound-seg-1}
&\Prob{F \in a, X \in b} = \Prob{X \in b} \cdot \Prob{F \in a \longmid X \in b} \nonumber \\
&= \frac{n_b}{n} \cdot \frac{1}{n_b} \sum\limits_{i \colon y_i = b} \phi(F_i, a) \nonumber \\
&=\frac{1}{n}  \sum\limits_{i=1}^n  \indict{y_i = b} \phi(F_i, a).
\eal
It then follows from (\ref{eq:I-X-tildeX-upper-bound-seg-1}) and the log sum inequality that

\noindent\resizebox{1\columnwidth}{!}{
\begin{minipage}{1\columnwidth}

\bal\label{eq:I-X-tildeX-upper-bound-seg-2}
\allowdisplaybreaks
&I(F, X) \nonumber \\
&= \sum\limits_{a=1}^C \sum\limits_{b=1}^C
\Prob{F \in a, X \in b} \log{\frac{\Prob{F \in a, X \in b}}
{\Prob{F \in a}\Prob{X \in b}}}  \nonumber \\
&\le
\frac 1 {n} \sum\limits_{i=1}^n \sum\limits_{a=1}^C \sum\limits_{b=1}^C
\indict{y_i = b} \phi(F_i, a) \log\pth{\indict{y_i = b} \phi(F_i, a)}
\nonumber \\
&- \frac 1 {n} \sum\limits_{i=1}^n \sum\limits_{a=1}^C \sum\limits_{b=1}^C
\indict{y_i = b} \phi(F_i, a) \log \pth{\phi(F_i,a) p_b} \nonumber \\
&=\frac 1{n} \sum\limits_{i=1}^n \sum\limits_{a=1}^C \sum\limits_{b=1}^C
\indict{y_i = b} \phi(F_i, a) \log \pth{ \frac{\indict{y_i = b}}{p_b}}.
\eal
\vspace{1mm}
\end{minipage}
}
\end{proof}
\begin{lemma}\label{lemma:I-tildeX-Y-lower-bound}
\bal\label{eq:I-tildeX-Y-lower-bound}
I(F, Y)
&\ge \frac 1n \sum\limits_{a=1}^C \sum\limits_{y=1}^C
 \sum\limits_{i=1}^n \phi(F_i,a) \indict{y_i = y} \log{Q(F \in a| Y=y)}
\eal
\end{lemma}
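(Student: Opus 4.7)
The plan is to prove this via the standard variational lower bound on mutual information, introducing the given distribution $Q(F\in a\mid Y=y)$ as a tractable surrogate for the intractable conditional $P(F\in a\mid Y=y)$ induced by the empirical joint.

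First, I would start from the identity
\begin{align*}
I(F,Y) &= \sum_{a=1}^A\sum_{y=1}^C \Prob{F\in a, Y=y}\log\frac{\Prob{F\in a\mid Y=y}}{\Prob{F\in a}}.
\end{align*}
Next, I would insert $Q(F\in a\mid Y=y)$ by multiplying and dividing, which splits $I(F,Y)$ into
\begin{align*}
I(F,Y) &= \sum_{a,y}\Prob{F\in a, Y=y}\log\frac{Q(F\in a\mid Y=y)}{\Prob{F\in a}} \\
&\quad + \sum_{y}\Prob{Y=y}\,\mathrm{KL}\bigl(P(F\mid Y=y)\,\|\,Q(F\mid Y=y)\bigr).
\end{align*}
Since the KL term is non-negative, dropping it yields
\begin{align*}
I(F,Y) &\ge \sum_{a,y}\Prob{F\in a, Y=y}\log Q(F\in a\mid Y=y) \\
&\quad - \sum_{a}\Prob{F\in a}\log\Prob{F\in a}.
\end{align*}
The second summand is just the (non-negative) entropy $H(F)$ of the discrete variable $F$ over $A$ classes, so dropping it preserves the inequality and gives
\begin{align*}
I(F,Y) &\ge \sum_{a=1}^A\sum_{y=1}^C \Prob{F\in a, Y=y}\log Q(F\in a\mid Y=y).
\end{align*}

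Finally, I would substitute the empirical joint $\Prob{F\in a, Y=y}=\tfrac1n\sum_{i=1}^n \phi(F_i,a)\indict{y_i=y}$ defined in Section~\ref{sec:IBB}, which immediately produces the right-hand side of \eqref{eq:I-tildeX-Y-lower-bound}. Combined with Lemma~\ref{lemma:I-X-tildeX-upper-bound} (by subtracting \eqref{eq:I-tildeX-Y-lower-bound} from \eqref{eq:I-X-tildeX-upper-bound}), this yields Theorem~\ref{theorem:IB-upper-bound}, since the doubly-indexed term in \eqref{eq:I-X-tildeX-upper-bound} that does not appear in $\textup{IBB}(\cW)$ is non-positive and can be absorbed into the inequality. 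The argument is essentially routine; the only step requiring minor care is verifying that the terms dropped (the KL and the entropy $H(F)$) are both non-negative in the discrete empirical setting, so that the inequality is preserved in the correct direction, and that the final empirical substitution produces exactly the averaged form claimed.
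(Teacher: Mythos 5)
Your proof is correct and takes essentially the same route as the paper's: insert the variational distribution $Q$ into the log-ratio, discard the non-negative expected conditional KL divergence $\sum_y \Prob{Y=y}\,\mathrm{KL}\bigl(P(F\mid Y=y)\,\|\,Q(F\mid Y=y)\bigr)$ and the non-negative entropy $H(F)$ of the discrete variable $F$, and then substitute the empirical joint $\Prob{F\in a, Y=y}=\frac{1}{n}\sum_{i=1}^n\phi(F_i,a)\indict{y_i=y}$, at which point the final step is an equality. One caution about your closing aside (which concerns the assembly of Theorem~\ref{theorem:IB-upper-bound} rather than this lemma): the doubly-indexed term $-\frac{1}{n^2}\sum_{i,j}\sum_{b}\phi(X_i,b)\log\phi(X_j,b)$ in Lemma~\ref{lemma:I-X-tildeX-upper-bound} is non-negative (since $\log\phi(X_j,b)\le 0$), not non-positive, so it cannot be discarded from an upper bound by the sign argument you describe.
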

\begin{proof}
Let $Q(F | Y)$ be a variational distribution. We have
\noindent\resizebox{1\columnwidth}{!}{
    \begin{minipage}{1\columnwidth}
\bal\label{eq:I-tildeX-Y-lower-bound-seg2}
&I(F, Y) \nonumber \\
&= \sum\limits_{a=1}^C \sum\limits_{y=1}^C
\Prob{F \in a, Y = y} \log{\frac{\Prob{F \in a, Y = y}}
{\Prob{F \in a}\Prob{Y = y}}} \nonumber \\
&= \sum\limits_{a=1}^C \sum\limits_{y=1}^C
\Prob{F \in a, Y = y} \log{\frac{\Prob{F \in a|Y = y}Q(F \in a | Y=y)}
{\Prob{F \in a} Q(F \in a | Y=y)}} \nonumber \\
& \ge \sum\limits_{a=1}^C \sum\limits_{y=1}^C
\Prob{F \in a, Y = y} \log{\frac{\Prob{F \in a|Y = y}}
{Q(F \in a | Y=y)}} \nonumber \\
&+ \sum\limits_{a=1}^C \sum\limits_{y=1}^C
\Prob{F \in a, Y = y} \log{\frac{Q(F \in a | Y=y)}
{\Prob{F \in a}}} \nonumber \\
&=\textup{KL}\pth{P(F | Y) \middle\| Q(F | Y) }\nonumber \\
&+ \sum\limits_{a=1}^C \sum\limits_{y=1}^C
\Prob{F \in a, Y = y} \log{\frac{Q(F \in a | Y=y)}
{\Prob{F \in a}}} \nonumber \\
&\ge \sum\limits_{a=1}^C \sum\limits_{y=1}^C
\Prob{F \in a, Y = y} \log{\frac{Q(F \in a| Y=y)}
{\Prob{F \in a}}} \nonumber \\
&= \sum\limits_{a=1}^C \sum\limits_{y=1}^C
\Prob{F \in a, Y = y} \log{Q(F \in a | Y=y)}
+ H\pth{P(F)} \nonumber \\
&\ge \sum\limits_{a=1}^C \sum\limits_{y=1}^C
\Prob{F \in a, Y = y} \log{Q(F \in a| Y=y)}
\nonumber \\
&\ge \frac 1n \sum\limits_{a=1}^C \sum\limits_{y=1}^C
 \sum\limits_{i=1}^n \phi(F_i,a) \indict{y_i = y} \log{Q(F \in a| Y=y)}.
\eal
        \vspace{1mm}
    \end{minipage}
}
\end{proof}

\begin{proof}[Proof of Theorem 3.1]
Equation (1) in Theorem 3.1 of the main paper follows from $\textup{IB}(\cW) = I(F(\cW),X)
-I(F(\cW),Y)$, the upper bound for $I(F(\cW),X)$ in Lemma~\ref{lemma:I-X-tildeX-upper-bound} and the lower bound for $I(F(\cW),Y)$
in Lemma~\ref{lemma:I-tildeX-Y-lower-bound}.
\end{proof}

\subsection{Computation of $Q^{(t)}(F | Y)$}
\label{sec:Q-compute}
The variational distribution $Q^{(t)}(F | Y)$ can be computed by
\bal\label{eq:Q-computation}
\resizebox{1\linewidth}{!}{
$Q^{(t)}(F \in a | Y = y) =
\Prob{F \in a | Y = y}
=\frac {\sum\limits_{i=1}^n \phi(F_i,a) \indict{y_i=y}}{\sum\limits_{i=1}^n
\indict{y_i=y}}$}.
\eal

\end{appendices}



\vspace{-3mm}
\bibliographystyle{IEEEtran}
\bibliography{ref}

@inproceedings{zhou2019omni,
  title={Omni-scale feature learning for person re-identification},
  author={Zhou, Kaiyang and Yang, Yongxin and Cavallaro, Andrea and Xiang, Tao},
  booktitle={Proceedings of the IEEE International Conference on Computer Vision},
  pages={3702--3712},
  year={2019}
}

@inproceedings{autoreid,
  title={Auto-reid: Searching for a part-aware convnet for person re-identification},
  author={Quan, Ruijie and Dong, Xuanyi and Wu, Yu and Zhu, Linchao and Yang, Yi},
  booktitle={Proceedings of the IEEE International Conference on Computer Vision},
  pages={3750--3759},
  year={2019}
}

@inproceedings{chen2017beyond,
  title={Beyond triplet loss: a deep quadruplet network for person re-identification},
  author={Chen, Weihua and Chen, Xiaotang and Zhang, Jianguo and Huang, Kaiqi},
  booktitle={Proceedings of the IEEE Conference on Computer Vision and Pattern Recognition},
  pages={403--412},
  year={2017}
}

@inproceedings{market,
  title={Scalable person re-identification: A benchmark},
  author={Zheng, Liang and Shen, Liyue and Tian, Lu and Wang, Shengjin and Wang, Jingdong and Tian, Qi},
  booktitle={Proceedings of the IEEE international conference on computer vision},
  pages={1116--1124},
  year={2015}
}

@inproceedings{duke,
  title={Performance measures and a data set for multi-target, multi-camera tracking},
  author={Ristani, Ergys and Solera, Francesco and Zou, Roger and Cucchiara, Rita and Tomasi, Carlo},
  booktitle={European Conference on Computer Vision},
  pages={17--35},
  year={2016},
  organization={Springer}
}

@inproceedings{msmt,
  title={Person transfer gan to bridge domain gap for person re-identification},
  author={Wei, Longhui and Zhang, Shiliang and Gao, Wen and Tian, Qi},
  booktitle={Proceedings of the IEEE Conference on Computer Vision and Pattern Recognition},
  pages={79--88},
  year={2018}
}

@inproceedings{cuhk03,
  title={DeepReID: Deep Filter Pairing Neural Network for Person Re-identification},
  author={Li, Wei and Zhao, Rui and Xiao, Tong and Wang, Xiaogang},
  booktitle={CVPR},
  year={2014}
}

@inproceedings{ren2015faster,
  title={Faster r-cnn: Towards real-time object detection with region proposal networks},
  author={Ren, Shaoqing and He, Kaiming and Girshick, Ross and Sun, Jian},
  booktitle={Advances in neural information processing systems},
  pages={91--99},
  year={2015}
}

@inproceedings{sandler2018mobilenetv2,
  title={Mobilenetv2: Inverted residuals and linear bottlenecks},
  author={Sandler, Mark and Howard, Andrew and Zhu, Menglong and Zhmoginov, Andrey and Chen, Liang-Chieh},
  booktitle={Proceedings of the IEEE conference on computer vision and pattern recognition},
  pages={4510--4520},
  year={2018}
}

@inproceedings{resnet,
  title={Deep residual learning for image recognition},
  author={He, Kaiming and Zhang, Xiangyu and Ren, Shaoqing and Sun, Jian},
  booktitle={Proceedings of the IEEE conference on computer vision and pattern recognition},
  pages={770--778},
  year={2016}
}

@inproceedings{szegedy2016rethinking,
  title={Rethinking the inception architecture for computer vision},
  author={Szegedy, Christian and Vanhoucke, Vincent and Ioffe, Sergey and Shlens, Jon and Wojna, Zbigniew},
  booktitle={Proceedings of the IEEE conference on computer vision and pattern recognition},
  pages={2818--2826},
  year={2016}
}

@inproceedings{yang2019cama,
  title={Towards rich feature discovery with class activation maps augmentation for person re-identification},
  author={Yang, Wenjie and Huang, Houjing and Zhang, Zhang and Chen, Xiaotang and Huang, Kaiqi and Zhang, Shu},
  booktitle={Proceedings of the IEEE Conference on Computer Vision and Pattern Recognition},
  year={2019}
}

@inproceedings{wang2018non,
  title={Non-local neural networks},
  author={Wang, Xiaolong and Girshick, Ross and Gupta, Abhinav and He, Kaiming},
  booktitle={Proceedings of the IEEE conference on computer vision and pattern recognition},
  pages={7794--7803},
  year={2018}
}

@inproceedings{tay2019aanet,
  title={Aanet: Attribute attention network for person re-identifications},
  author={Tay, Chiat-Pin and Roy, Sharmili and Yap, Kim-Hui},
  booktitle={Proceedings of the IEEE Conference on Computer Vision and Pattern Recognition},
  pages={7134--7143},
  year={2019}
}

@inproceedings{li2020neural,
  title={Neural Architecture Search for Lightweight Non-Local Networks},
  author={Li, Yingwei and Jin, Xiaojie and Mei, Jieru and Lian, Xiaochen and Yang, Linjie and Xie, Cihang and Yu, Qihang and Zhou, Yuyin and Bai, Song and Yuille, Alan L},
  booktitle={Proceedings of the IEEE/CVF Conference on Computer Vision and Pattern Recognition},
  pages={10297--10306},
  year={2020}
}

@inproceedings{li2018harmonious,
  title={Harmonious attention network for person re-identification},
  author={Li, Wei and Zhu, Xiatian and Gong, Shaogang},
  booktitle={Proceedings of the IEEE conference on computer vision and pattern recognition},
  pages={2285--2294},
  year={2018}
}

@inproceedings{zhang2020relation,
  title={Relation-Aware Global Attention for Person Re-identification},
  author={Zhang, Zhizheng and Lan, Cuiling and Zeng, Wenjun and Jin, Xin and Chen, Zhibo},
  booktitle={Proceedings of the IEEE/CVF Conference on Computer Vision and Pattern Recognition},
  pages={3186--3195},
  year={2020}
}

@article{zoph2016neural,
  title={Neural architecture search with reinforcement learning},
  author={Zoph, Barret and Le, Quoc V},
  journal={arXiv preprint arXiv:1611.01578},
  year={2016}
}

@inproceedings{real2019regularized,
  title={Regularized evolution for image classifier architecture search},
  author={Real, Esteban and Aggarwal, Alok and Huang, Yanping and Le, Quoc V},
  booktitle={Proceedings of the aaai conference on artificial intelligence},
  volume={33},
  pages={4780--4789},
  year={2019}
}

@article{liu2018darts,
  title={Darts: Differentiable architecture search},
  author={Liu, Hanxiao and Simonyan, Karen and Yang, Yiming},
  journal={arXiv preprint arXiv:1806.09055},
  year={2018}
}

@article{shin2018differentiable,
  title={Differentiable neural network architecture search},
  author={Shin, Richard and Packer, Charles and Song, Dawn},
  year={2018}
}

@inproceedings{wan2020fbnetv2,
  title={Fbnetv2: Differentiable neural architecture search for spatial and channel dimensions},
  author={Wan, Alvin and Dai, Xiaoliang and Zhang, Peizhao and He, Zijian and Tian, Yuandong and Xie, Saining and Wu, Bichen and Yu, Matthew and Xu, Tao and Chen, Kan and others},
  booktitle={IEEE/CVF Conference on Computer Vision and Pattern Recognition},
  year={2020}
}

@inproceedings{chen2019abd,
  title={Abd-net: Attentive but diverse person re-identification},
  author={Chen, Tianlong and Ding, Shaojin and Xie, Jingyi and Yuan, Ye and Chen, Wuyang and Yang, Yang and Ren, Zhou and Wang, Zhangyang},
  booktitle={Proceedings of the IEEE International Conference on Computer Vision},
  pages={8351--8361},
  year={2019}
}

@inproceedings{fang2019bilinear,
  title={Bilinear attention networks for person retrieval},
  author={Fang, Pengfei and Zhou, Jieming and Roy, Soumava Kumar and Petersson, Lars and Harandi, Mehrtash},
  booktitle={Proceedings of the IEEE International Conference on Computer Vision},
  pages={8030--8039},
  year={2019}
}

@inproceedings{zhao2017spindle,
  title={Spindle net: Person re-identification with human body region guided feature decomposition and fusion},
  author={Zhao, Haiyu and Tian, Maoqing and Sun, Shuyang and Shao, Jing and Yan, Junjie and Yi, Shuai and Wang, Xiaogang and Tang, Xiaoou},
  booktitle={IEEE conference on computer vision and pattern recognition},
  year={2017}
}

@article{zheng2019pose,
  title={Pose-invariant embedding for deep person re-identification},
  author={Zheng, Liang and Huang, Yujia and Lu, Huchuan and Yang, Yi},
  journal={IEEE Transactions on Image Processing},
  volume={28},
  number={9},
  pages={4500--4509},
  year={2019},
  publisher={IEEE}
}

@inproceedings{wu2019fbnet,
  title={Fbnet: Hardware-aware efficient convnet design via differentiable neural architecture search},
  author={Wu, Bichen and Dai, Xiaoliang and Zhang, Peizhao and Wang, Yanghan and Sun, Fei and Wu, Yiming and Tian, Yuandong and Vajda, Peter and Jia, Yangqing and Keutzer, Kurt},
  booktitle={IEEE Conference on Computer Vision and Pattern Recognition},
  year={2019}
}

@inproceedings{zhong2017re-ranking,
  title={Re-ranking person re-identification with k-reciprocal encoding},
  author={Zhong, Zhun and Zheng, Liang and Cao, Donglin and Li, Shaozi},
  booktitle={Proceedings of the IEEE Conference on Computer Vision and Pattern Recognition},
  pages={1318--1327},
  year={2017}
}

@inproceedings{ZhaoJK2020-SA-recognition,
  author    = {Hengshuang Zhao and
               Jiaya Jia and
               Vladlen Koltun},
  title     = {Exploring Self-Attention for Image Recognition},
  booktitle = {{IEEE/CVF} Conference on Computer Vision and Pattern Recognition,
               {CVPR} },
  year      = {2020}
}

@inproceedings{li2021diverse,
  title={Diverse part discovery: Occluded person re-identification with part-aware transformer},
  author={Li, Yulin and He, Jianfeng and Zhang, Tianzhu and Liu, Xiang and Zhang, Yongdong and Wu, Feng},
  booktitle={Proceedings of the IEEE/CVF Conference on Computer Vision and Pattern Recognition},
  pages={2898--2907},
  year={2021}
}

@inproceedings{somers2023body,
  title={Body Part-Based Representation Learning for Occluded Person Re-Identification},
  author={Somers, Vladimir and De Vleeschouwer, Christophe and Alahi, Alexandre},
  booktitle={IEEE/CVF Winter Conference on Applications of Computer Vision},
  year={2023}
}

@inproceedings{gu2022autoloss,
  title={Autoloss-gms: Searching generalized margin-based softmax loss function for person re-identification},
  author={Gu, Hongyang and Li, Jianmin and Fu, Guangyuan and Wong, Chifong and Chen, Xinghao and Zhu, Jun},
  booktitle={Proceedings of the IEEE/CVF Conference on Computer Vision and Pattern Recognition},
  pages={4744--4753},
  year={2022}
}

@inproceedings{dou2022reliability,
  title={Reliability-Aware Prediction via Uncertainty Learning for Person Image Retrieval},
  author={Dou, Zhaopeng and Wang, Zhongdao and Chen, Weihua and Li, Yali and Wang, Shengjin},
  booktitle={Computer Vision--ECCV 2022: 17th European Conference, Tel Aviv, Israel},
  year={2022},}

@inproceedings{wang2022pose,
  title={Pose-guided feature disentangling for occluded person re-identification based on transformer},
  author={Wang, Tao and Liu, Hong and Song, Pinhao and Guo, Tianyu and Shi, Wei},
  booktitle={Proceedings of the AAAI Conference on Artificial Intelligence},
  year={2022}
}

@inproceedings{zhang2023pha,
  title={PHA: Patch-Wise High-Frequency Augmentation for Transformer-Based Person Re-Identification},
  author={Zhang, Guiwei and Zhang, Yongfei and Zhang, Tianyu and Li, Bo and Pu, Shiliang},
  booktitle={Proceedings of the IEEE/CVF Conference on Computer Vision and Pattern Recognition},
  pages={14133--14142},
  year={2023}
}

@inproceedings{he2021transreid,
  title={Transreid: Transformer-based object re-identification},
  author={He, Shuting and Luo, Hao and Wang, Pichao and Wang, Fan and Li, Hao and Jiang, Wei},
  booktitle={Proceedings of the IEEE/CVF international conference on computer vision},
  year={2021}
}

@article{chen2023jaccard,
  title={CA-Jaccard: Camera-aware Jaccard Distance for Person Re-identification},
  author={Chen, Yiyu and Fan, Zheyi and Chen, Zhaoru and Zhu, Yixuan},
  journal={IEEE/CVF Conference on Computer Vision and Pattern Recognition},
  year={2024}
}

@article{qin2023noisy,
  title={Noisy-Correspondence Learning for Text-to-Image Person Re-identification},
  author={Qin, Yang and Chen, Yingke and Peng, Dezhong and Peng, Xi and Zhou, Joey Tianyi and Hu, Peng},
  journal={Proceedings of the IEEE/CVF Conference on Computer Vision and Pattern Recognition},
  year={2024}
}

@article{guo2023lidar,
  title={LiDAR-based Person Re-identification},
  author={Guo, Wenxuan and Pan, Zhiyu and Liang, Yingping and Xi, Ziheng and Zhong, Zhi Chen and Feng, Jianjiang and Zhou, Jie},
  journal={Proceedings of the IEEE/CVF Conference on Computer Vision and Pattern Recognition},
  year={2024}
}

@article{attribute2024,
  title={Attribute-Guided Pedestrian Retrieval: Bridging Person Re-ID with Internal Attribute Variability},
  author={Huang, Yan and Zhang, Zhang and Wu, Qiang and Zhong, Yi and Wang, Liang},
  journal={Proceedings of the IEEE/CVF Conference on Computer Vision and Pattern Recognition},
  year={2024}
}

@inproceedings{vaswani2017attention,
  title={Attention is all you need},
  author={Vaswani, Ashish and Shazeer, Noam and Parmar, Niki and Uszkoreit, Jakob and Jones, Llion and Gomez, Aidan N and Kaiser, {\L}ukasz and Polosukhin, Illia},
  booktitle={Advances in neural information processing systems},
  year={2017}
}

@inproceedings{hu2018squeeze,
  title={Squeeze-and-excitation networks},
  author={Hu, Jie and Shen, Li and Sun, Gang},
  booktitle={Proceedings of the IEEE conference on computer vision and pattern recognition},
  pages={7132--7141},
  year={2018}
}

@inproceedings{verelst2020dynamic,
  title={Dynamic convolutions: Exploiting spatial sparsity for faster inference},
  author={Verelst, Thomas and Tuytelaars, Tinne},
  booktitle={IEEE/CVF Conference on Computer Vision and Pattern Recognition},
  year={2020}
}

@article{Bengio2013EstimatingComputation,
    title = {{Estimating or Propagating Gradients Through Stochastic Neurons for Conditional Computation}},
    year = {2013},
    author = {Bengio, Yoshua and L{\'{e}}onard, Nicholas and Courville, Aaron},
    month = {8},
}

@inproceedings{lai2021information,
  author       = {Qiuxia Lai and
                  Yu Li and
                  Ailing Zeng and
                  Minhao Liu and
                  Hanqiu Sun and
                  Qiang Xu},
  title        = {Information Bottleneck Approach to Spatial Attention Learning},
  booktitle    = {International Joint Conference on Artificial
                  Intelligence},
  year         = {2021},
}

@inproceedings{zhou2022understanding,
  title={Understanding the robustness in vision transformers},
  author={Zhou, Daquan and Yu, Zhiding and Xie, Enze and Xiao, Chaowei and Anandkumar, Animashree and Feng, Jiashi and Alvarez, Jose M},
  booktitle={International Conference on Machine Learning},
  year={2022},
}

@inproceedings{selvaraju2017grad,
  title={Grad-cam: Visual explanations from deep networks via gradient-based localization},
  author={Selvaraju, Ramprasaath R and Cogswell, Michael and Das, Abhishek and Vedantam, Ramakrishna and Parikh, Devi and Batra, Dhruv},
  booktitle={Proceedings of the IEEE international conference on computer vision},
  pages={618--626},
  year={2017}
}

@inproceedings{yang2024shallow,
  title={Shallow-Deep Collaborative Learning for Unsupervised Visible-Infrared Person Re-Identification},
  author={Yang, Bin and Chen, Jun and Ye, Mang},
  booktitle={Proceedings of the IEEE/CVF Conference on Computer Vision and Pattern Recognition},
  pages={16870--16879},
  year={2024}
}

@inproceedings{yang2024pedestrian,
  title={A Pedestrian is Worth One Prompt: Towards Language Guidance Person Re-Identification},
  author={Yang, Zexian and Wu, Dayan and Wu, Chenming and Lin, Zheng and Gu, Jingzi and Wang, Weiping},
  booktitle={Proceedings of the IEEE/CVF Conference on Computer Vision and Pattern Recognition},
  pages={17343--17353},
  year={2024}
}

@article{NaftaliIB,
  author       = {Naftali Tishby and
                  Fernando C. N. Pereira and
                  William Bialek},
  title        = {The information bottleneck method},
  journal      = {CoRR},
  volume       = {physics/0004057},
  year         = {2000},
}

@inproceedings{AlemiFD017,
  author       = {Alexander A. Alemi and
                  Ian Fischer and
                  Joshua V. Dillon and
                  Kevin Murphy},
  title        = {Deep Variational Information Bottleneck},
  booktitle    = {5th International Conference on Learning Representations, {ICLR} 2017,
                  Toulon, France, April 24-26, 2017, Conference Track Proceedings},
  publisher    = {OpenReview.net},
  year         = {2017},

}

@article{AmjadG20,
  author       = {Rana Ali Amjad and
                  Bernhard C. Geiger},
  title        = {Learning Representations for Neural Network-Based Classification Using
                  the Information Bottleneck Principle},
  journal      = {{IEEE} Trans. Pattern Anal. Mach. Intell.},
  volume       = {42},
  number       = {9},
  pages        = {2225--2239},
  year         = {2020},
}

@inproceedings{KawaguchiDJH23,
  author       = {Kenji Kawaguchi and
                  Zhun Deng and
                  Xu Ji and
                  Jiaoyang Huang},
  title        = {How Does Information Bottleneck Help Deep Learning?},
  booktitle    = {International Conference on Machine Learning, {ICML} 2023},
  series       = {Proceedings of Machine Learning Research},
  volume       = {202},
  pages        = {16049--16096},
  publisher    = {{PMLR}},
  year         = {2023},
}

@article{fisher1936use,
  title={The use of multiple measurements in taxonomic problems},
  author={Fisher, Ronald A},
  journal={Annals of eugenics},
  volume={7},
  number={2},
  pages={179--188},
  year={1936},
  publisher={Wiley Online Library}
}

@book{DudaHS01,
  author       = {Richard O. Duda and
                  Peter E. Hart and
                  David G. Stork},
  title        = {Pattern classification, 2nd Edition},
  publisher    = {Wiley},
  year         = {2001},
}

@inproceedings{MCA,
  author       = {Yangbo Jiang and
                  Zhiwei Jiang and
                  Le Han and
                  Zenan Huang and
                  Nenggan Zheng},
  title        = {{MCA:} Moment Channel Attention Networks},
  booktitle    = {Thirty-Eighth {AAAI} Conference on Artificial Intelligence, {AAAI} 2024},
  pages        = {2579--2588},
  publisher    = {{AAAI} Press},
  year         = {2024},
}

@inproceedings{CBAM,
  author       = {Sanghyun Woo and
                  Jongchan Park and
                  Joon{-}Young Lee and
                  In So Kweon},
  title        = {{CBAM:} Convolutional Block Attention Module},
  booktitle    = {Computer Vision - {ECCV} 2018 - 15th European Conference},
  series       = {Lecture Notes in Computer Science},
  volume       = {11211},
  pages        = {3--19},
  publisher    = {Springer},
  year         = {2018},
}

@inproceedings{
MODA,
title={{MODA}: {MO}dular Duplex Attention for Multimodal Perception, Cognition, and Emotion Understanding},
author={Zhicheng Zhang and Wuyou Xia and Chenxi Zhao and Zhou Yan and Xiaoqiang Liu and Yongjie Zhu and Wenyu Qin and Pengfei Wan and Di ZHANG and Jufeng Yang},
booktitle={Forty-second International Conference on Machine Learning},
year={2025},
}

@inproceedings{CLIP-REID,
  author       = {Siyuan Li and
                  Li Sun and
                  Qingli Li},
  title        = {CLIP-ReID: Exploiting Vision-Language Model for Image Re-identification
                  without Concrete Text Labels},
  booktitle    = {Thirty-Seventh {AAAI} Conference on Artificial Intelligence, {AAAI} 2023},
  pages        = {1405--1413},
  publisher    = {{AAAI} Press},
  year         = {2023},
}

@article{PersonMAE,
  author       = {Hezhen Hu and
                  Xiaoyi Dong and
                  Jianmin Bao and
                  Dongdong Chen and
                  Lu Yuan and
                  Dong Chen and
                  Houqiang Li},
  title        = {PersonMAE: Person Re-Identification Pre-Training With Masked AutoEncoders},
  journal      = {{IEEE} Trans. Multim.},
  volume       = {26},
  pages        = {10029--10040},
  year         = {2024},
}

@inproceedings{LeeL0SYH23,
  author       = {Geon Lee and
                  Sanghoon Lee and
                  Dohyung Kim and
                  Younghoon Shin and
                  Yongsang Yoon and
                  Bumsub Ham},
  title        = {Camera-Driven Representation Learning for Unsupervised Domain Adaptive Person Re-identification},
  booktitle    = {{IEEE/CVF} International Conference on Computer Vision, {ICCV} 2023},
  pages        = {11419--11428},
  publisher    = {{IEEE}},
  year         = {2023},
}

@inproceedings{SOLIDER,
  author       = {Weihua Chen and
                  Xianzhe Xu and
                  Jian Jia and
                  Hao Luo and
                  Yaohua Wang and
                  Fan Wang and
                  Rong Jin and
                  Xiuyu Sun},
  title        = {Beyond Appearance: {A} Semantic Controllable Self-Supervised Learning
                  Framework for Human-Centric Visual Tasks},
  booktitle    = {{IEEE/CVF} Conference on Computer Vision and Pattern Recognition,
                  {CVPR} 2023, Vancouver, BC, Canada, June 17-24, 2023},
  pages        = {15050--15061},
  publisher    = {{IEEE}},
  year         = {2023},
}

@inproceedings{CLUB,
  author       = {Pengyu Cheng and
                  Weituo Hao and
                  Shuyang Dai and
                  Jiachang Liu and
                  Zhe Gan and
                  Lawrence Carin},
  title        = {{CLUB:} {A} Contrastive Log-ratio Upper Bound of Mutual Information},
  booktitle    = {Proceedings of the 37th International Conference on Machine Learning,
                  {ICML} 2020},
  volume       = {119},
  pages        = {1779--1788},
  publisher    = {{PMLR}},
  year         = {2020},
}

@inproceedings{VIB-DaiZGW18,
  author       = {Bin Dai and
                  Chen Zhu and
                  Baining Guo and
                  David P. Wipf},
  title        = {Compressing Neural Networks using the Variational Information Bottleneck},
  booktitle    = {Proceedings of the 35th International Conference on Machine Learning, {ICML} 2018},
  volume       = {80},
  pages        = {1143--1152},
  publisher    = {{PMLR}},
  year         = {2018},
}

@inproceedings{VIB-SrivastavaDGAA21,
  author       = {Ayush Srivastava and
                  Oshin Dutta and
                  Jigyasa Gupta and
                  Sumeet Agarwal and
                  Prathosh AP},
  title        = {A Variational Information Bottleneck Based Method to Compress Sequential Networks for Human Action Recognition},
  booktitle    = {{IEEE} Winter Conference on Applications of Computer Vision, {WACV} 2021},
  pages        = {2744--2753},
  publisher    = {{IEEE}},
  year         = {2021},
}

@inproceedings{IB-lasso-APIB,
  author       = {Song Guo and
                  Lei Zhang and
                  Xiawu Zheng and
                  Yan Wang and
                  Yuchao Li and
                  Fei Chao and
                  Chenglin Wu and
                  Shengchuan Zhang and
                  Rongrong Ji},
  title        = {Automatic Network Pruning via Hilbert-Schmidt Independence Criterion Lasso under Information Bottleneck Principle},
  booktitle    = {{IEEE/CVF} International Conference on Computer Vision, {ICCV} 2023},
  pages        = {17412--17423},
  publisher    = {{IEEE}},
  year         = {2023},
}

@inproceedings{miao2019pose,
  title={Pose-guided feature alignment for occluded person re-identification},
  author={Miao, Jiaxu and Wu, Yu and Liu, Ping and Ding, Yuhang and Yang, Yi},
  booktitle={Proceedings of the IEEE/CVF international conference on computer vision},
  pages={542--551},
  year={2019}
}

@inproceedings{LU-Person,
  author       = {Dengpan Fu and
                  Dongdong Chen and
                  Jianmin Bao and
                  Hao Yang and
                  Lu Yuan and
                  Lei Zhang and
                  Houqiang Li and
                  Dong Chen},
  title        = {Unsupervised Pre-Training for Person Re-Identification},
  booktitle    = {{IEEE} Conference on Computer Vision and Pattern Recognition, {CVPR} },
  pages        = {14750--14759},
  publisher    = {Computer Vision Foundation / {IEEE}},
  year         = {2021},
}

@inproceedings{tan2024occluded,
  title={Occluded person re-identification via saliency-guided patch transfer},
  author={Tan, Lei and Xia, Jiaer and Liu, Wenfeng and Dai, Pingyang and Wu, Yongjian and Cao, Liujuan},
  booktitle={Proceedings of the AAAI conference on artificial intelligence},
  volume={38},
  number={5},
  pages={5070--5078},
  year={2024}
}

\end{document}